\definecolor{DarkGreen}{rgb}{0.1,0.5,0.1}
\definecolor{DarkRed}{rgb}{0.5,0.1,0.1}
\definecolor{DarkBlue}{rgb}{0.1,0.1,0.5}
\newcommand{\R}{{\mathbb{R}}}
\newcommand{\N}{{\cal{N}}}
\newcommand{\bx}{{\bf{x}}}
\newcommand{\bz}{{\bf{z}}}
\newcommand{\bM}{{\bf{M}}}
\newcommand{\bI}{{\bf{I}}}
\newcommand{\bW}{{\bf{W}}}
\newcommand{\ba}{{\bf{a}}}
\newcommand{\bb}{{\bf{b}}}
\newcommand{\Z}{{\cal{Z}}}
\newtheorem*{theorem*}{Theorem}
\newtheorem{theorem}{Theorem}
\newtheorem{lemma}{Lemma}
\newtheorem{assumption}{Assumption}
\newtheorem*{remark*}{Remark}
\newtheorem{remark}{Remark}
\newcommand{\argmin}{\mathop{\mathrm{argmin}}}
\def\1{\bm{1}}
\def\vb{{\bm{b}}}
\def\vv{{\bm{v}}}
\def\vx{{\bm{x}}}
\def\vy{{\bm{y}}}
\def\vz{{\bm{z}}}
\def\evb{{b}}
\def\evx{{x}}
\def\evz{{z}}
\def\mD{{\bm{D}}}
\def\mI{{\bm{I}}}
\def\mM{{\bm{M}}}
\def\mP{{\bm{P}}}
\def\mW{{\bm{W}}}
\def\emD{{D}}
\def\emI{{I}}
\def\emW{{W}}
\newif\ifcomments
  \newcommand{\colornote}[3]{{\color{#1}\bf{#2: #3}\normalfont}}
  \newcommand{\colornote}[3]{}
\title{Contrasting the landscape of contrastive and non-contrastive learning} 
\author{
    Ashwini Pokle\footnotemark[1] \\
    Carnegie Mellon University \\
    \texttt{apokle@andrew.cmu.edu} \\
    \and
    Jinjin Tian\footnotemark[1] \\
    Carnegie Mellon University \\
    \texttt{jinjint@andrew.cmu.edu} \\
    \and
    Yuchen Li\footnotemark[1] \\
    Carnegie Mellon University \\
    \texttt{yuchenl4@andrew.cmu.edu} \\
    \and
    Andrej Risteski \\
    Carnegie Mellon University \\
    \texttt{aristesk@andrew.cmu.edu} \\}
\date{*Equal contribution}
\begin{document}

\maketitle

\begin{abstract}
A lot of recent advances in unsupervised feature learning are based on designing features which are invariant under semantic data augmentations. A common way to do this is \emph{contrastive learning}, which uses positive and negative samples. Some recent works however have shown promising results for \emph{non-contrastive learning}, which does not require negative samples. 
 However, the non-contrastive losses have obvious ``collapsed'' minima, in which the encoders output a constant feature embedding, independent of the input. A folk conjecture is that so long as these collapsed solutions are avoided, the produced feature representations should be good. In our paper, we cast doubt on this story: we show through theoretical results and controlled experiments that even on simple data models, non-contrastive losses have a preponderance of \emph{non-collapsed} bad minima. Moreover, we show that the training process does not avoid these minima.\footnote{Code for this work can be found at \url{https://github.com/ashwinipokle/contrastive\_landscape}}
\end{abstract}

\section{Introduction}

Recent improvements in representation learning without supervision were driven by self-supervised learning approaches, in particular contrastive learning (CL), which constructs positive and negative samples out of unlabeled dataset via data augmentation \citep{chen2020simple, he2020momentum, caron2020unsupervised, ye2019unsupervised, oord2018representation, wu2018unsupervised}. 
Subsequent works based on data augmentation also showed promising results for methods based on non-contrastive learning (non-CL), which do not require explicit negative samples  \citep{grill2020bootstrap, richemond2020byol, chen2021exploring, zbontar2021barlow, tian2021understanding}.

However, understanding of how these approaches work, especially of how the learned representations compare---qualitatively and quantitatively---is lagging behind. 
In this paper, via a combination of empirical and theoretical results, we provide evidence that non-contrastive methods based on data augmentation can lead to substantially worse representations. 

 Most notably, avoiding the collapsed representations has been the key ingredient in prior successes in non-contrastive learning. The collapses was first referred as the \emph{complete collapse}, that is, all representation vectors shrink into a single point; later a new type of collapses, \emph{dimension collapse} \citep{hua2021feature,jing2022understanding} caught attention as well, that is the embedding vectors only span a lower-dimensional subspace. It is naturally to conjunct that avoiding those kinds of collapsed representations suffices for non-contrastive learning to succeed. We provide strong evidence in contrast to this: namely, we show that even under a very simple, but natural data model, the non-contrastive loss has a prevalence of bad optima that are not collapsed (in neither way, complete or dimension collapse). Moreover, we supplement this with a training dynamics analysis: we prove that the training dynamics can remedy this situation---however, this is crucially tied to a careful choice of a predictor network in the model architecture.

Our methodology is largely a departure from other works on self-supervised learning in general: rather than comparing the performance of different feature learning methods on specific datasets, we provide extensive theoretical and experimental results for \emph{synthetic} data generated by a natural data-generative process in which there is a ``ground-truth'' representation. The more structured form of the data allows us to make much more precise statements on the quality of the representations produced by the methods we consider.

\pagestyle{plain}

\section{Overview of Results} \label{sec:overview}

We will study two aspects of contrastive learning (CL) and non-contrastive learning (non-CL): the set of optima of the objective, and the gradient descent training dynamics. 

In order to have a well-defined notion of a ``ground truth'' we will consider data generated by a sparse-coding inspired model \cite{olshausen1997sparse, arora2015simple, wen2021toward}:  
$$\bm{x} = \bm{M}\bm{z} + \bm{\epsilon}, \bm{\epsilon} \sim  \mathcal{N}(0, \sigma_0^2 \mathbf{I}_p).$$ 

Moreover, the encoder(s) used in the various settings will be simple one-layer ReLU networks. The motivation comes from classical theoretical work on sparse coding \cite{arora2015simple}, which shows that such an encoder can recover a representation $\tilde{z}$ which has the same support as the $z$.

\subsection{Landscape analysis}

The first aspect we will turn to is an analysis of the minima for both contrastive and non-contrastive losses. More precisely, we will show that even for simple special cases of the sparse coding data generative model, non-contrastive losses exhibit abundant  \emph{non-collapsed bad global optima}, whereas the contrastive loss is minimized \emph{only at the ground truth solution}:

\begin{theorem*}[informal, non-collapse bad global optima]
On a noise-less sparse coding data distribution augmented by random masking,
with encoders given by single-layer ReLU networks, 
\begin{itemize} 
    \item The non-contrastive loss function has infinitely many non-collapsed global optima that are far away from the ground truth.
    \item Any global optima of the contrastive loss function is (up to a permutation) equal to the ground truth solution. 
\end{itemize}
\end{theorem*}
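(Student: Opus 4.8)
The plan is to handle the two bullets separately but via a common reduction. Because the data is noiseless, $\bm{x} = \bm{M}\bm{z}$, and the encoders are single-layer ReLU maps $f_{\bm{W},\bm{b}}(\bm{x}) = \mathrm{ReLU}(\bm{W}\bm{x}+\bm{b})$, I would first rewrite each loss as an explicit functional of $(\bm{W},\bm{b})$ over the highly structured distribution of randomly masked views. For the non-contrastive loss --- alignment of a positive pair together with an anti-collapse device (a variance/covariance regularizer, a hard constraint such as $\mathbb{E}[f f^\top]=\mathbf{I}$, or a linear predictor head) --- the structural fact to establish is a \emph{decoupling}: at any global optimum the alignment term is driven to its minimum, which forces $f$ to depend on $\bm{x}$ only through quantities whose distribution random masking leaves unchanged, while the anti-collapse term merely forces the feature covariance to be full rank and $f$ non-constant. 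The first bullet then reduces to showing that the set of ReLU encoders satisfying \emph{both} conditions is strictly and substantially larger than the orbit of the ground-truth encoder under permutations of the output coordinates.

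To produce the infinitely many bad optima I would exhibit an explicit continuum inside that set. The mechanism is that random masking destroys information, so structurally different encoders can induce identical positive-pair statistics. Concretely, I would build the family from the ground-truth encoder using operations that masking provably cannot detect: merging two latent directions that are never simultaneously active into a single ReLU neuron while ``splitting'' another latent across two neurons by a bias threshold inactive on the data support, or --- more simply --- mixing a pair of statistically exchangeable latent directions by a continuum of angles for which the rectified output stays inside the data cone. For each member $f_\theta$ I would (i) verify the alignment term is still at its minimum, so $f_\theta$ is a global optimum with the same loss value as the ground truth; (ii) compute $\mathbb{E}[f_\theta f_\theta^\top]$ and check it is full rank (no dimension collapse) with $f_\theta$ non-constant (no complete collapse); and (iii) lower bound, uniformly in $\theta$, the distance from $f_\theta$ to every permutation of the ground-truth encoder.

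For the second bullet I would invoke the standard characterization of the minimizers of the contrastive objective (InfoNCE or its spectral relaxation): they align positive pairs while spreading all other pairs as much as the feature geometry permits --- equivalently, up to a rotation, the bottom eigenfunctions of the augmentation Laplacian. In the simple masked sparse-coding model the blocks/connected components of the augmentation graph coincide with the ground-truth latent support structure, so a minimizer must be constant on each block and map distinct blocks to (maximally separated, hence essentially orthogonal) vectors. The remaining, and most delicate, step is to upgrade ``up to a rotation'' to ``up to a permutation'': I would argue that the ReLU nonnegativity, together with the fact that the optimal feature vectors sit on distinct coordinate rays, forces the optimal Gram matrix to be realized inside the ReLU class only by a permutation of the block-indicator encoding --- which is exactly the ground-truth encoder --- and separately check that this encoder is realizable by a single-layer ReLU net and attains the lower bound.

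I expect the main obstacle to be the first bullet's simultaneous requirement of ``far from the ground truth'' and ``non-collapsed'': I need a degeneracy of the masked-data distribution that is genuinely \emph{not} a relabeling symmetry (else the alternative optimum would be ``the ground truth up to a permutation'') yet still yields a full-rank feature covariance, which forces a precise description of the masking-invariant functions of $\bm{x}$ and a case analysis over ReLU activation patterns to certify that each $f_\theta$ remains a global optimum. A secondary difficulty, in the second bullet, is ruling out ``creative'' ReLU encoders that exploit the bias and the rectification to realize the optimal Gram matrix without being a permutation of the ground truth.
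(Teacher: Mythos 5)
Your plan shares the high-level skeleton of the paper's Theorem~\ref{thm:byolbad} (show the non-CL minimizer set is a large continuum; show the CL minimizer set is exactly permutations), but the mechanism you invoke for the first bullet is not the one that actually produces the result, and this matters for whether the argument goes through cleanly.

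The paper works in an extremely simplified regime ($d=p=m$, $\bm{M}=\mathbf{I}$, 1-sparse latents, no noise) and optimizes over the hard constraint set $\mathcal{U}$ of unit-column-norm matrices --- there is no covariance regularizer, predictor head, or feature-whitening constraint in the theoretical non-CL objective, so your ``decoupling'' of an alignment term from an anti-collapse penalty does not match the setup. More importantly, you attribute the degeneracy to \emph{masking} (``random masking destroys information, so structurally different encoders can induce identical positive-pair statistics'') and then propose elaborate constructions (merging never-co-active latents, splitting via bias thresholds, mixing exchangeable directions through a cone of angles). In the paper's argument the masking factors out entirely as the scalar $\bm{D}_{1,jj}\bm{D}_{2,jj}$; the degeneracy is purely a \emph{ReLU} degeneracy: the alignment for column $j$ is $\bm{D}_{1,jj}\bm{D}_{2,jj}\,\|\mathrm{ReLU}(\bm{W}_{*j})\|^2 \le \bm{D}_{1,jj}\bm{D}_{2,jj}\,\|\bm{W}_{*j}\|^2$, with equality iff $\bm{W}_{*j}\ge 0$. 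Hence \emph{every} matrix in $\mathcal{U}_{\ge}$ is a global optimum --- the entire nonnegative orthant intersected with the unit-column-norm shell, not just a handcrafted curve through it. Because you are hunting for a masking-invariance argument, you worry about the simultaneous requirements of ``far from ground truth'' and ``non-collapsed'' and anticipate a delicate case analysis over activation patterns; but that difficulty does not arise under the actual mechanism, since generic elements of $\mathcal{U}_{>}$ are full rank, non-constant, and far from any permutation by inspection. You should also note that the paper drops the bias and uses plain ReLU precisely to make the argument this short; your bias-threshold constructions would reintroduce complexity for no gain.

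For the second bullet your route (spectral/Laplacian characterization of CL minimizers, then upgrade rotation to permutation using nonnegativity) is a genuinely different path, and you correctly flag the delicate step of ruling out ``creative'' ReLU encoders that realize the optimal Gram matrix without being a permutation. The paper sidesteps the rotation-ambiguity issue entirely by never passing through a rotation-invariant characterization: it bounds the positive term and the negative-pair terms of $L_{CL}$ directly, shows both bounds are simultaneously tight only if every column $\bm{W}_{*i}$ is nonnegative (tightness of the ReLU inequality on positives) and the columns are pairwise orthogonal (tightness of the repulsion inequality on negatives), and observes that a nonnegative orthonormal matrix is a permutation matrix. Your spectral route can probably be made to work, but it imports extra machinery and still leaves the permutation-vs-rotation gap to close; the direct termwise bound is both shorter and avoids that gap. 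If you retain your approach, the one lemma you would need to make precise is exactly the step you flagged as a ``secondary difficulty'' --- nonnegativity plus orthogonality plus unit norm forcing a permutation --- and at that point you have essentially reproduced the paper's direct proof.
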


In this case, the ground truth refers to learning the features $z$ underlying the sparse coding data distribution, which is formally defined in Section~\ref{sec:setup}.
For the formal theorem statement, see Section~\ref{sec:land}.
The proof is deferred to Appendix \ref{sec:appendix-landscape}.

\subsection{Training dynamics analysis} 

In the previous section, we have shown that bad optima exists for non-contrastive loss, the next reasonable question to ask is whether the training dynamics of the algorithm is capable of circumventing them. 
We answer this question using both theoretical analyses and experimental results.

Theoretically, we show that for non-contrastive loss, under certain conditions that we will specify,
\begin{itemize}
    \item (Negative result) A one-layer linear dual network (\eqref{eq:linear_network}) optimized via gradient descent cannot improve beyond a linear combination of its initialization. (Theorem~\ref{thm:linear_network} and Corollary~\ref{remark:linear_network})
    \item (Positive result) However, a one-layer ReLU network (\eqref{eq:relu_network}) provably converges to the groundtruth using alternating optimization with warm-start initialization and appropriate normalization. (Theorem~\ref{thm:relu_no_pred})
\end{itemize}
These results show that the results of non-contrastive learning is very sensitive to the interaction between training algorithm, non-linearity, and normalization scheme.

\begin{remark}
    In comparison, for contrastive loss, \cite{wen2021toward} proves that training dynamics learns a good representation under a similar sparse-coding inspired data generating process, even without the warm-start assumption.
\end{remark}

Moreover, we provide detailed empirical evidence in support of the following conclusions: 
\begin{enumerate}
    \item The gradient descent dynamics on the contrastive loss is capable of recovering the ground truth representation $z$. These results are robust to changes in architecture of the encoder, as well as the parameters for the data generative model. 
    \item The gradient descent dynamics on the two most popular non-contrastive losses, SimSiam and BYOL \citep{grill2020bootstrap, chen2021exploring}, are not able to avoid the poor minima, if the architecture does not include a linear predictor on top of the encoder(s). However, if a predictor is included, both SimSiam and BYOL tend to converge to a solution  close to the ground truth. We also find that this predictor is optional if weights of the encoder are initialized close to the ground truth minimum, and these weights are row-normalized or column normalized after every gradient descent update. 
\end{enumerate}

\section{Related Work}

\paragraph{Contrastive learning (CL)} The idea of learning representations so that a pair of similar samples (also known as "positive pairs") are closer and dissimilar pairs (known as "negative samples") are farther is widespread both in NLP (\cite{gao2021simcse, giorgi2020declutr}) and Vision (\cite{chen2020simple, he2020momentum, caron2020unsupervised, ye2019unsupervised, oord2018representation, wu2018unsupervised, tian2020contrastive, li2020prototypical, henaff2020data}). One of the earliest works based on this principle was proposed by \cite{hadsell2006dimensionality} which used Euclidean distance based contrastive loss. Recently, normalized temperature-scaled cross entropy loss or \emph{NT-Xent} loss (\cite{chen2020simple, wu2018unsupervised, sohn2016improved}) has gained more popularity. \cite{wang2020understanding} suggest that CL should optimize for both the alignment and uniformity features on the hypersphere. Most of these CL approaches often require additional tricks like maintaining a large memory bank (\cite{wu2018unsupervised, misra2020self}), momentum encoder  (\cite{he2020momentum}) or using a large batch sizes (\cite{chen2020simple}) to learn useful representations and prevent collapse, which makes them computationally intensive. There has been some recent work to overcome these drawbacks.  \cite{caron2020unsupervised} propose contrasting soft cluster assignments for augmented views of the image instead of directly comparing features which helps them to avoid most of these tricks.

\paragraph{Non-contrastive learning (Non-CL)}
Another line of proposed SSL-approaches, such as BYOL \citep{grill2020bootstrap}, question whether these negative examples are indispensable to prevent collapsing while preserving high performance. The authors instead propose a framework that outperforms previous state-of-the-art approaches (\cite{chen2020simple}, \cite{he2020momentum}, \cite{tian2020contrastive}) without any use of negative examples.
To understand how non-contrastive learning works, existing works mostly focus on analyzing what elements help it avoid learning collapsed representations \citep{grill2020bootstrap, richemond2020byol, chen2021exploring, zbontar2021barlow, tian2021understanding,wang2021towards}. 
Our work contributes to these prior efforts by pointing out another subtle difference between contrastive and non-contrastive learning, which has not been formalized in prior theoretical or empirical works. Specifically, as we show both theoretically and empirically, the non-contrastive loss has abundant \emph{non-collapse bad global optima}.

\paragraph{Theoretical understanding}
Methodologically, our work is closely related to a long line of works aiming to understand algorithmic behaviors by analyzing a simple model in controlled settings. 
Our model architecture gets inspirations from \cite{tian2021understanding} and \cite{wen2021toward}. Namely, \cite{tian2021understanding} adopts a linear network and calculates the optimization dynamics of non-contrastive learning, while \cite{wen2021toward} analyzes the feature learning process of contrastive learning on a single-layer linear model with ReLU activation. Another relevant work by \cite{tian2020understanding} calculates the optimization dynamics for a more complex N-layered dual-deep linear networks with ReLU activation for SimCLR architecture and different variants of contrastive loss like InfoNCE, soft-triplet loss etc. In a similar spirit, we base our comparison of contrastive and non-contrastive learning on single-layer dual networks, but instead of discussing the optimization process, we focus on the final features learned by these different training approaches. 
Our work is also related to \cite{arora2019theoretical} in which the authors assess the ability of self-supervised learning methods to encode the latent structure of the data. 
To theoretically evaluate the learned features, we use a variant of the sparse coding data generating model presented in \cite{olshausen1997sparse, lewicki2000learning, arora2015simple, wen2021toward}, so that the ground truth dictionary defines the evaluation metrics that characterize the quality of features learned by our simple model. Namely, in order for a learned model to represent the latent structure of the data, each feature in the ground truth dictionary is expected to be picked up by a set of learned neurons. \citep{ma2018noise,zimmermann2021contrastive} are on the related topics as well but use different data models or setups.

\section{Problem Setup}
\label{sec:setup}

\paragraph{Notation} For a matrix $\bm{M} \in \mathcal{R}^{m \times n}$, we use $\bm{M}_{i*}$ to represent its $i$-th row and $\bm{M}_{*j}$ to represent its $j$-th column. 
We denote $[d]:=\{1, \hdots, d\}$. We use the notation \textit{poly(d)} to represent a polynomial in $d$. We denote the set of matrices with unit column norm as $\mathcal{U}$ --- we will omit the implied dimensions if clear from context. The subset of $\mathcal{U}$ with positive entries and non-negative entries is denoted by $\mathcal{U}_{>}$ and $\mathcal{U}_{\geq}$. We use $\mathbb{O}$ to denote the set of orthogonal matrices (again, we will omit the dimension if clear from context), and the subset of orthogonal matrices with positive entries and non-negative entries is denoted by $\mathbb{O}_>$ and $\mathbb{O}_\geq$.

\subsection{A sparse coding model set up} \label{sec:setup:data-generation}
\paragraph{Data generating process} The data samples $\bm{x}$ in our model are generated i.i.d. through a sparse coding generative model as:
\begin{align}
    \label{eq:data-generation}
    \bm{x} = \bm{M}\bm{z} + \bm{\epsilon}, \quad 
    \bm{\epsilon} \sim  \mathcal{N}(0, \sigma_0^2 \mathbf{I}_p)
\end{align}
where, the \textit{sparse} latent variable $\bz = (\bm{z}_1, \hdots \bm{z}_d)$ and $\bm{\epsilon} \in \mathbb{R}^{p \times 1}$. We assume that  $\bz_1,\dots,\bz_d$ are i.i.d. with $\beta:=Pr(\bz_i \neq 0) \in (0, 1)$ for all $i \in [d]$. 
We consider two cases, namely the case of \emph{zero-one} latent ($\bz_i \in \{0, 1\}$) and the case of \emph{symmetric} latent ($\bz_i \in \{-1, 0, 1\}$).
The \textit{dictionary matrix} $\bM \in \mathbb{O}^{p \times d}$, is a column-orthonormal matrix. Further, we assume $p = poly(d)$. We note that the noise $\bm{\epsilon}$ is optional in our setting and there are some theoretical results in section \ref{sec:all-results} that assume that $\bm{\epsilon} = 0$. All empirical results assume a small Gaussian noise with $\sigma_0 = \Theta(\frac{\log d}{d})$. 

\paragraph{Augmentation} \label{sec:augmentation}
We augment an input sample $\bm{x}$ through random masking. These random masks are generated as follows: Consider two independent diagonal matrices $\bm{D_1}, \bm{D_2}  \in \mathbb{R}^{p \times p}$ with $\{0, 1\}$  as their diagonal entries. Each of the diagonal entries are sampled i.i.d from $Bernoulli(\alpha)$. For an input $\bm{x}$, the  augmented views $\ba_1, \ba_2 \in \mathbb{R}^{p\times 1}$ are generated as \footnote{Prior work \citep{wen2021toward} motivates another masking scheme which we refer to as ``dependent masking" in which the diagonal entries of a diagonal matrix $\bm{D}$ are sampled i.i.d. from from $Bernoulli(\alpha)$. The augmentation views are computed as \begin{equation}\bm{a}_1 := 2\bm{D} \bm{x}; \; \; \bm{a}_2 := 2(\bm{I} - \bm{D})\bm{x} \label{eq:dependent-masking}\end{equation}}:

\begin{equation}
    \label{eq:augmentation}
    \ba_1 := \bm{D_1} \bx; \;\; \ba_2 := \bm{D_2} \bx
\end{equation}

\paragraph{Network architecture}
We use a dual network architecture inline with prior work \cite{arora2015simple, tian2021understanding, wen2021toward}. 
In contrastive learning setting, we assume that the encoder shares weights between the two input views. However, in non-contrastive learning setting, we assume two separate independently initialized networks, called online network and target network. Both the networks are a single-layer neural network with activation function $\phi$ (we use ReLU activation or its symmetrized version, for the cases of zero-one latent $\vz$ or symmetric latent $\vz$ in \eqref{eq:data-generation}, respectively), namely:
\begin{equation*}
    h_{\bm{W}, \bm{b}}(\bm{a}) := \phi(\bm{W}\bm{a} + \bm{b}) \in \mathbb{R}^{m \times 1}.
\end{equation*}
For brevity, we will skip the bias $\bm{b}$ in the notation and instead refer to  $h_{\bm{W}, \bm{b}}(\bm{a})$ simply as  $h_{\bm{W}}(\bm{a})$.

In part of the theoretical analysis in Section~\ref{sec:dyn:theory}, we further simplify this architecture to dual \emph{linear} network
\begin{equation*}
    h_{\bm{W}}^{linear}(\bm{a}) := \bm{W}\bm{a} \in \mathbb{R}^{m \times 1}.
\end{equation*}

For results in Section~\ref{sec:dyn:experimental}, in addition to the linear encoder, we assume a linear prediction head $\bm{W}^p \in \mathbb{R}^{m \times m}$ which transforms the output of one of the encoders to match it to the output of the other encoder.
\begin{equation*}
    g_{\bm{W}^p, \bm{b}^p, \bm{W}, \bm{b}}(\bm{a}) := \bm{W}^p h_{\bm{W}, \bm{b}}(\bm{a)} + \bm{b}^p \in \mathbb{R}^{m \times 1}.
\end{equation*}

For brevity, we will again skip the biases $\bm{b}$ and $\bm{b}^p$ in the notation and refer to  $g_{\bm{W}^p, \bm{b}^p, \bm{W}, \bm{b}}(\bm{a})$ simply as  $g_{\bm{W}^p, \bm{W}}(\bm{a})$.

\paragraph{Non-contrastive learning algorithm and loss function}
Our non-contrastive learning algorithm is motivated from the recent works in non-contrastive SSL methods  like BYOL (\cite{grill2020bootstrap}) and SimSiam (\cite{chen2021exploring}). 
Given an input data sample $\bm{x}$, we first augment it through random masking to obtain two different views of the input $\bm{x}$. We then extract representation vectors of these views with two different randomly initialized encoders (described above), called the online network and the target network, with weights $\bm{W}^o \in \mathbb{R}^{m\times p}$ and $\bm{W}^t \in \mathbb{R}^{m\times p}$, respectively. The output of these encoders are normalized and loss on the resulting representations is computed as: 
\begin{align}
    \label{eq:loss_non-CL-l2}
    &L_{\text{non-CL-l2}}(\bW^o, \bW^t) := 
  \mathbb{E}\left\Vert \dfrac{\phi(h_{\bm{W}^o, \bb^o}(\bm{D}_1 \bm{x}))}{ \| \phi(h_{\bm{W}^o, \bb^o}(\bm{D}_1 \bm{x}))\|_2} -  SG\left(\dfrac{\phi(h_{\bm{W}^t,\bb^t}(\bm{D}_2 \bm{x}))}{\|\phi(h_{\bm{W}^t, \bb^t}(\bm{D}_2 \bm{x}))\|_2}\right)\right \Vert_2^2 
\end{align} 
where, $\phi(\cdot)$ refers to the activation function, and the stop gradient operator $SG$ indicates that we do not compute the gradients of the operand during optimization.  
In our experiments, we use ReLU activation, and during the training process, we alternate between $L_{\text{non-CL-l2}}(\bW^o, \bW^t)$ and $L_{\text{non-CL-l2}}(\bW^t, \bW^o)$ to update the online and target networks. 
This alternating optimization can be viewed as a simple approximation of the combination of stop gradient and exponential moving average techniques used in prior works \cite{grill2020bootstrap}, so that some theoretical analysis on the training dynamics can be carried out conveniently.
 
 For the architecture with a linear prediction head, we minimize the following loss:
\begin{align}
    \label{eq:loss_non-CL-l2-predictor}
    &L_{\text{non-CL-l2-pred}}(\bW^o, \bW^t, \bW^p):= 
  \mathbb{E}\left\Vert \dfrac{g_{\bm{W}^t, \bm{W}^p}(\bm{D}_1 \bm{x})}{ \| g_{\bm{W}^t,\bm{W}^p}(\bm{D}_1 \bm{x})\|_2} -  SG\left(\dfrac{\phi(h_{\bm{W}^t}(\bm{D}_2 \bm{x})}{\| \phi(h_{\bm{W}^t}(\bm{D}_2 \bm{x})) \|_2}\right)\right \Vert_2^2 
\end{align} 

For our theoretical results, we use a closely related loss function defined as:
\begin{align}
    \label{eq:loss_non-CL}
    &L_{\text{non-CL}}(\bW^o, \bW^t)\footnotemark :=   
    \left(2 - 2 \mathbb{E}\langle ReLU(\bm{W}^o \bb^o\bm{D}_1 \bm{x}) ,  SG( ReLU(\bm{W}^t, \bb^t \bm{D}_2 \bm{x})) \rangle\right) 
\end{align} 

In particular, for dual \emph{linear} network 
, the loss function is 
\begin{align}
    \label{eq:linear_network}
    &L_{\text{linear-non-CL}}(\bW^o, \bW^t) := 2 - 2 \mathbb{E}_{\vx, \mD_1, \mD_2} \langle \mW^o \mD_1 \vx, \mW^t \mD_2 \vx\rangle 
\end{align} 

\footnotetext{\label{footnote-unnormalized-loss} The contrastive loss in Eq. (\ref{eq:loss_CL}) and non-contrastive loss Eq. (\ref{eq:loss_non-CL-l2-predictor}, \ref{eq:loss_non-CL-pred-cosine-sim}) use the unnormalized representations instead of the normalized ones as it is simpler to analyze theoretically. For empirical experiments, we use normalized representations.}

For the architectures with a prediction head, we define the non-contrastive loss function as:
\begin{align}
    \label{eq:loss_non-CL-pred-cosine-sim}
    &L_{\text{non-CL-pred}}(\bW^o, \bW^t, \bW^p)^{\ref{footnote-unnormalized-loss}} :=  
    \left(2 - 2 \mathbb{E}\langle \bW^p ReLU(\bm{W}^o \bm{D}_1 \bm{x}) ,  SG(ReLU(\bm{W}^t \bm{D}_2 \bm{x}))\rangle\right) 
\end{align} 

\paragraph{Contrastive learning algorithm and loss function}
For contrastive learning, we use a simplified version of SimCLR algorithm (\cite{chen2020simple}). We assume a linear encoder with weight $\bm{W} \in \mathbb{R}^{m\times p}$ that extracts representation vectors of an augmented input. Given positive augmented samples $\bm{D}_1 \bm{x}$ and $\bm{D}_2 \bm{x}$ and a batch of augmented negative data samples $\mathbb{B} = \{\bm{D}' \bm{x}'\}$ with $\bm{x}'\neq\bm{x}$, the contrastive loss is defined as:

\begin{align}
    \label{eq:loss_CL}
    &L_{CL}(\bW) := 
    - \mathbb{E}\left[ \log \frac{\exp\{\tau S^{+} \} }{ \exp\{\tau   S^{+} \} + \sum_{\bm{x}' \in \mathbb{B}} \exp\{\tau S^{-} \}} \right].
\end{align}
where
\[
S^{+}:= \text{sim}(h_{\bW, \bb}(\bm{D}_1 \vx), h_{\bW, \bb}(\bm{D}_2 \vx)),
\]
\[
S^{-}:=\text{sim}(h_{\bW, \bb}(\bm{D}_1 \vx), h_{\bW, \bb}(\bm{D}_3' \bm{x}')), 
\]
and
\[ \text{sim}(\vy_1, \vy_2)\footref{footnote-unnormalized-loss} := \langle \vy_1,  \vy_2 \rangle = \vy_1^\top \vy_2 \]
representing the similarity of two vectors $\vy_1$ and $\vy_2$ 
and $\tau$ is the temperature parameter and $\bm{D}_3'$ is an augmentation matrix for the negative sample $\bm{x}'$. In our theoretical results, we will assume that $|\mathbb{B}| \to \infty$.

\subsection{Experimental set up} \label{empirical-results-set-up}
We provide details of our experimental setup here.

\paragraph{Dataset} We use the data generating process described in section \ref{sec:setup:data-generation} to generate a synthetic dataset with 1000 data samples. The dictionary $\bM$ is generated by applying QR decomposition on a randomly generated matrix $\bm{G} \in \mathbb{R}^{p \times d}$ whose entries are i.i.d standard Gaussian. The sparse coding latent variable $\bm{z} \in \{0, \pm{1}\}^d$ is set to $\{-1,+1\}$ with an equal probability of $\beta/2${\footnote{In addition, we also ensure that the sparse coding vectors $\bz$ are generated such that at least one of the entries is non-zero.}}. The noise $\bm{\epsilon} \in \mathbb{R}^{p}$ is sampled i.i.d. from $\mathcal{N}\left(0, \frac{\log d}{d}\right)$.

\paragraph{Augmentation} We try both augmentation schemes described in in Eq. \ref{eq:dependent-masking} and in Eq. \ref{eq:augmentation}. Empirically, we find that the second augmentation scheme outperforms the first augmentation scheme. Therefore we report empirical results only on the second augmentation scheme.

\paragraph{Training setting}  We randomly initialize the weights of encoder and predictor networks for all the experiments (except for warm start). We sample each of the entries of these weight matrices from a Gaussian distribution $\mathcal{N}(0, \Theta(\frac{1}{pd}))$; and we always initialize the bias to be zero. For optimization, we use stochastic gradient descent (SGD) with a learning rate of 0.025 to train the model for 8000 epochs with batch size of 512. By default, the masking probability of random masks is 0.5, unless specified otherwise. In general, we use the dimensions m=50, p=50 and d=10, unless specified otherwise. As the  latents $\bm{z} \in \{-1, 0, 1\}$, we use symmetric ReLU activation (\cite{wen2021toward}) instead of the standard ReLU activation.

\section{Results} 
\label{sec:all-results}

In this section we formalize the differences between contrastive loss and non-contrastive loss under our simple model. 

We first formalize the notion of \emph{ground truth features} associated with our data generating process in Section~\ref{sec:setup:data-generation}.
Specifically, we say that an encoder successfully encodes our data distribution if given the observed data point $\vx = \mM \vz + \bm\epsilon$, the encoder is able recover the latent variable $\vz$ up to permutation, i.e. 
\begin{equation}
    \label{eq:encoder_recovers_latent}
    \text{ReLU}(\mW^o x) \approx c \mP \vz
\end{equation}
for some constant $c$ and permutation matrix $P$. We will sometimes assume that $\bW^o$ is normalized (i.e. has unit row norm), in which case we have $c = 1$. Note, the indeterminacy up to permutation is unavoidable: one can easily see that permuting both the latent coordinates and the matrix $M$ correspondingly results in the same distribution for $\bm{x}$.

In the following sections, we present two lines of results that highlight some fundamental differences between contrastive and non-contrastive loss. Specifically, we conduct \emph{landscape analysis} (Section~\ref{sec:land}), which focuses on the properties of global optima; and \emph{training dynamics analysis} (Section~\ref{sec:dyn}), which focuses on the properties of the points that the training dynamics converge to. Both lines of analysis reveal the fragility of non-contrastive loss: its global optima contain bad optima, from which the training process cannot easily navigate through without careful architectural engineering --- in particular, the inclusion of a predictor $\bm{W}^p$. 
We provide a combination of theoretical results with extensive experiments. The network architecture and the values of hyper-parameters cover a large spread for thoroughness, and we specify these wherever appropriate.

\paragraph{Evaluation metric} Traditionally, the success of a self-supervised learning (SSL) algorithm is determined by evaluating the learnt representations of the encoder on a downstream auxiliary task. For example, the representations learnt on images are typically evaluated through a linear evaluation protocol (\cite{kolesnikov2019revisiting, bachman2019learning, chen2020simple, grill2020bootstrap}) on a standard image classification datasets like ImageNet. A higher accuracy on the downstream tasks is indicative of the better quality of the learnt representations.

In our experimental setup, a self-supervised learning algorithm will be considered successful if we are able to find the following separation through the learnt weights $\bm{W}$ of the encoder.
\begin{align}
    \langle \bW_{i*} , \bm{M}_{*j} \rangle_{j \in \N_i} \gg \langle \bW_{i*} , \bm{M}_{*j} \rangle_{j \in [d]\setminus \N_i}
\end{align}
where $\N_i\subseteq [d]$ is the subset of dictionary bases (i.e. $\{M_{*1}, \dots, M_{*d}\}$) that neuron $i$ (approximately) lies in. 

Motivated by this goal, we propose an alternate approach to evaluate the success of a SSL algorithm under the sparse coding setup. Specifically, we assess the quality of learnt representations by computing the maximum, median and minimum values of the following expression,
\begin{equation}
    \label{eq:max-cosine}
    \max_i \Big | \Big \langle \dfrac{\bm{W_{i*}}}{\|\bm{W_{i*}}\|_2 }, \dfrac{\bm{M_{*j}}}{\|\bm{M_{*j}}\|_2} \Big \rangle \Big | \;\;\; \forall j \in [m],
\end{equation}
referred to as \textit{Maximum max-cosine}, \textit{Medium max-cosine} and \textit{Minimum max-cosine} respectively, for each $j \in [d]$ and use these values to determine if the SSL algorithm has correctly recovered the ground truth dictionary matrix $\bm{M}$. Ideally, we want the SSL algorithm to learn the correct weights $\bm{W}$ of its linear encoder such that the values of the above dot product are close to $1$ for all the three metrics as this would indicate near-perfect recovery of the ground truth support. Of these three metrics, intuitively high \textit{Minimum max-cosine} is the most indicative of the success of a SSL algorithm, as it suggests that even the worst alignment of dictionary and neurons is good. We therefore include plots corresponding to \textit{Minimum max-cosine} and training loss in the main paper, and defer the plots of the other metrics to Appendix \ref{sec:appendix-landscape} (for additional empirical results of Landscape analysis) and Appendix \ref{sec:appendix-training-dynamics} (for additional empirical results of training dynamics analysis).

\subsection{Landscape analysis}\label{sec:land}

In this section, we provide results which show that non-contrastive loss has infinitely many non-collapsed global optima that are far from the ground truth. By contrast, contrastive learning loss guarantees recovery of the correct ground truth support. In particular, this happens even in the extremely simple setting in which there is no noise in the data generating process (i.e. $\sigma_0^2 = 0$) and $M = I$. 

Precisely, we show:

\begin{theorem}[Landscape of contrastive and non-contrastive loss]\label{thm:byolbad}
Let the data generating process and network architecture be specified as in Section \ref{sec:setup}, 
and consider the setting $d=p=m , \bm{M} = \mathbf{I}$, and $\sigma^2_0 = 0$.
Moreover, let the latent vectors $\{\bm{z}_j \in \mathbb{R}^{d \times 1}\}_{j=1}^d$ be chosen by a uniform distribution over 1-sparse vectors. 

Then, we have:
\begin{itemize}
    \item[(a)] 
    $\mathcal{U}_\ge \subseteq \argmin_{W \in \mathcal{U}} L_{\text{non-CL}}(W, W)$
    \item[(b)] $\argmin_{W \in \mathcal{U}} L_{\text{CL}}(W, W)$ 
    is the set of permutation matrices. 
\end{itemize}
\end{theorem}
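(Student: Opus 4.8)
The plan is to exploit the degeneracy of the data model to rewrite each loss as an explicit function of the columns $\bm w_j:=W_{*j}$ of $W$ (which satisfy $\|\bm w_j\|_2=1$ since $W\in\mathcal U$) and then optimize termwise. Under $\bm M=\bm I$, $\sigma_0^2=0$, and $\bm z$ uniform over the $d$ one-hot vectors, $\bm x=\bm e_j$ with probability $1/d$, so a masked view is $\bm D_a\bm x=(\bm D_a)_{jj}\bm e_j\in\{\bm 0,\bm e_j\}$ and the encoder output is $\mathrm{ReLU}(W\bm D_a\bm x)=(\bm D_a)_{jj}\bm w_j^+$ with $\bm w_j^+:=\mathrm{ReLU}(\bm w_j)\ge\bm 0$. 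I will repeatedly use that $\|\bm w_j^+\|_2^2\le\|\bm w_j\|_2^2=1$ with equality iff $\bm w_j\ge\bm 0$, and that $\langle\bm w_j^+,\bm w_k^+\rangle\ge 0$. For (a), taking expectations (both masks keep coordinate $j$ with probability $\alpha^2$) gives $L_{\text{non-CL}}(W,W)=2-\frac{2\alpha^2}{d}\sum_j\|\bm w_j^+\|_2^2$, which over $\mathcal U$ is minimized exactly when all $\|\bm w_j^+\|_2^2=1$, i.e. $W\in\mathcal U_\ge$; this proves (a), in fact as an equality of sets.

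For (b) I would first take $|\mathbb B|\to\infty$. Writing $L_{\text{CL}}=\mathbb E[-\tau S^++\log(e^{\tau S^+}+\sum_i e^{\tau S^-_i})]$ and applying the law of large numbers to the i.i.d.\ bounded negative terms gives $L_{\text{CL}}(W,W)=\log|\mathbb B|+\mathcal L(W)+o(1)$, so the minimizing set is that of $\mathcal L(W):=-\tau\,\mathbb E[S^+]+\mathbb E_{\mathrm{anchor}}[\log\mathbb E_{\mathrm{neg}}[e^{\tau S^-}\mid\mathrm{anchor}]]$. Substituting the quantities above (the anchor is $\bm w_j^+$ with probability $\alpha/d$ and $\bm 0$ otherwise; a negative view is $\bm w_k^+$, $k\ne j$, with probability $\alpha/(d-1)$ and $\bm 0$ otherwise; and conditionally on $\bm x=\bm e_j$, $S^+=\|\bm w_j^+\|_2^2$ with probability $\alpha^2$ and $0$ otherwise) yields the closed form
\[
\mathcal L(W)=-\frac{\tau\alpha^2}{d}\sum_j\|\bm w_j^+\|_2^2+\frac{\alpha}{d}\sum_j\log\!\Big((1-\alpha)+\frac{\alpha}{d-1}\sum_{k\neq j}e^{\tau\langle\bm w_j^+,\bm w_k^+\rangle}\Big).
\]

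I would then bound $\mathcal L$ summand by summand. Since $e^{\tau\langle\bm w_j^+,\bm w_k^+\rangle}\ge 1$, each log term is $\ge\log((1-\alpha)+\alpha)=0$, with equality iff $\langle\bm w_j^+,\bm w_k^+\rangle=0$ for all $k\ne j$; and $\|\bm w_j^+\|_2^2\le 1$ with equality iff $\bm w_j\ge\bm 0$. Hence $\mathcal L(W)\ge-\tau\alpha^2$, with equality iff every column of $W$ is nonnegative and the columns are pairwise orthogonal. Finally, $d$ unit-norm, entrywise-nonnegative, pairwise-orthogonal vectors in $\R^p=\R^d$ must have disjoint singleton supports (orthogonal nonnegative vectors have disjoint supports, and $d$ nonzero disjoint-support vectors in $\R^d$ are each supported on a single coordinate), hence are standard basis vectors, so $W$ is a permutation matrix; conversely every permutation matrix lies in $\mathcal U$ and attains $\mathcal L=-\tau\alpha^2$. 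This establishes (b).

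The step I expect to be the main obstacle is the limit passage together with the modeling question of whether a negative may share the anchor's latent, since the decoupling above — the log lower bound not depending on $\|\bm w_j^+\|_2^2$ — relies on negatives having latent $\ne$ the anchor's. If negatives are instead i.i.d.\ copies of the data, the $j$-th log term gains an extra $\frac{\alpha}{d}e^{\tau\|\bm w_j^+\|_2^2}$ coupling to the alignment term, and one must additionally check that the per-column objective is still minimized at $\|\bm w_j^+\|_2^2=1$; a one-variable calculation shows this holds whenever $\tau\le\log\frac{d-\alpha}{1-\alpha}$, which is satisfied for $d$ large in the $p=\mathrm{poly}(d)$ regime. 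Everything else is routine.
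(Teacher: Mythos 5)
Your proof is correct and follows essentially the same route as the paper's: part (a) hinges on the observation that $\|\text{ReLU}(W_{*j})\|_2 \le \|W_{*j}\|_2 = 1$ with equality iff $W_{*j}\ge 0$, and part (b) passes to the $|\mathbb{B}|\to\infty$ limit and then bounds the positive and negative contributions separately to conclude that any minimizer must have nonnegative, pairwise-orthogonal, unit-norm columns, hence be a permutation matrix. Your explicit closed-form expression for $\mathcal L(W)$ is a slightly more computational rendering of the paper's inequality-based argument on the log-softmax ratio, and for (a) you actually establish the stronger set equality $\argmin_{W\in\mathcal U}L_{\text{non-CL}}(W,W)=\mathcal U_\ge$, whereas the theorem and the paper's proof only assert the inclusion $\mathcal U_\ge\subseteq\argmin$.
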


Part (a) of Theorem~\ref{thm:byolbad} shows,  since any matrix $\mW \in \mathcal{U}_{>}$ is a global optimum of the non-contrastive loss, optimizing it does not necessarily lead to learning of the groundtruth features described in \eqref{eq:encoder_recovers_latent}.
Indeed, there are clearly abundant elements $\mW \in \mathcal{U}_{>}$ such that $\text{ReLU}(\mW x)$ and $c \mP \vz$ are very different by any measure. On the other hand, part (b) of Theorem~\ref{thm:byolbad} shows optimizing the contrastive loss objective guarantees learning the groundtruth features up to a permutation.  

The proof of this theorem is deferred to Appendix~\ref{sec:appendix-landscape}, though one core idea of the proof is relatively simple. For 1a), it's easy to see that so long as all the inputs to the ReLU are non-negative, the objective is minimized --- from which the result easily follows. For 1b) the result follows by noting that in order to minimize the contrastive loss, it suffices that the columns of $W$ are non-negative and orthogonal --- which is only satisfied when $W$ is a permutation matrix. 

\begin{remark}
For landscape analysis, it suffices to show the abundance of non-collapsed bad global optima.
Therefore, for simplicity, we remove the bias and use the (asymmetric) ReLU activation.
Note that without the bias, a symmetric ReLU would be the same as the identity function, which would lose the non-linearity.
\end{remark}

\begin{figure}[!htbp]
  \centering
  \begin{minipage}[b]{0.45\textwidth}
    \centering
    \includegraphics[width=0.75\textwidth]{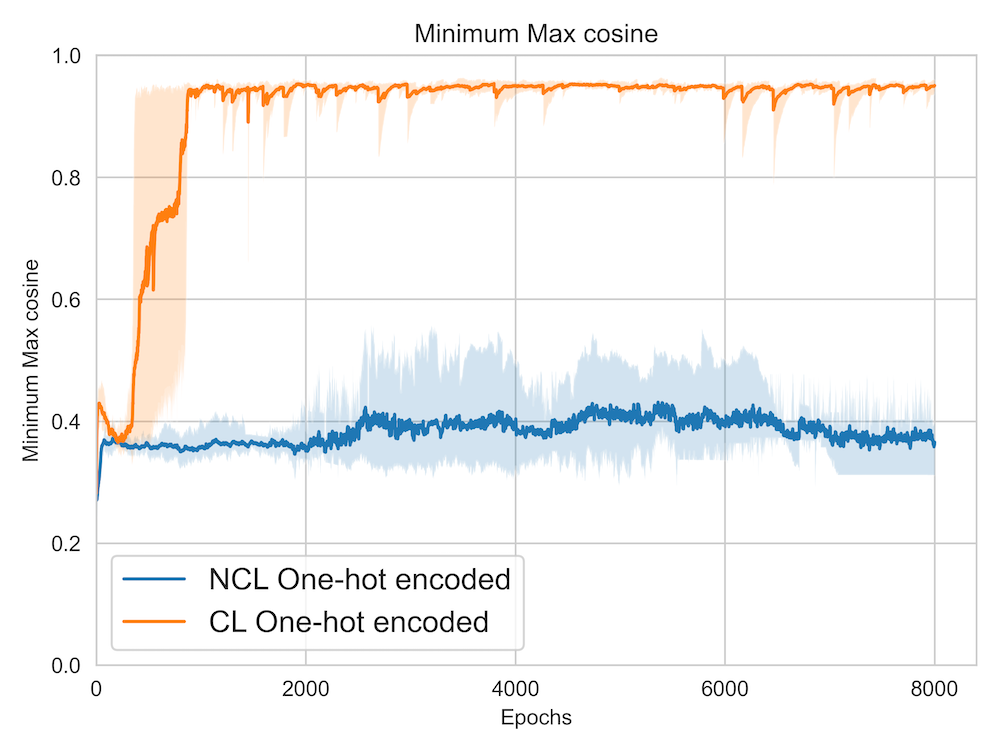}
  \end{minipage}
  \hfill
  \begin{minipage}[b]{0.45\textwidth}
    \centering
    \includegraphics[width=0.75\textwidth]{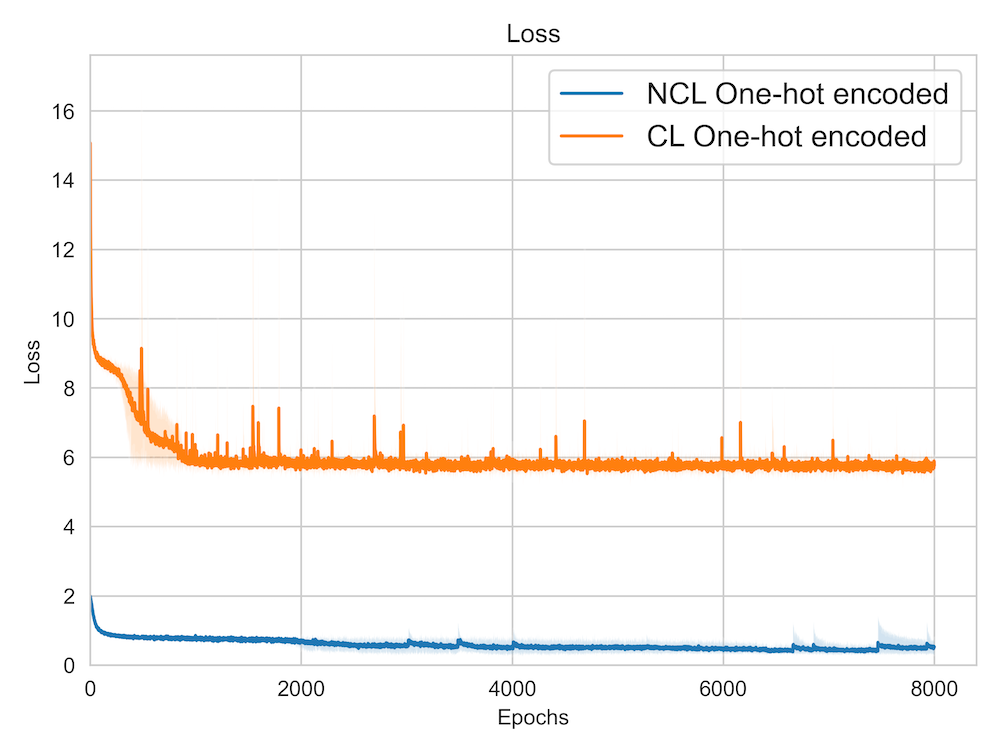}
  \end{minipage}
  \caption{(NCL vs CL with one-hot latents) Minimum Max-Cosine (left) and Loss (right) curves for non-contrastive loss (NCL) and contrastive loss (CL) on an architecture with a randomly initialized linear encoder. We normalize the representations before computing the loss and use a symmetric ReLU after the linear encoder. The latent $\bm{z}$ is one-hot encoded. Reported numbers are averaged over 5 different runs.}  \label{fig:landscape-one-hot}
\end{figure}

\begin{figure}[tbp]
  \centering
  \begin{minipage}[b]{0.45\textwidth}
    \centering
    \includegraphics[width=0.75\textwidth]{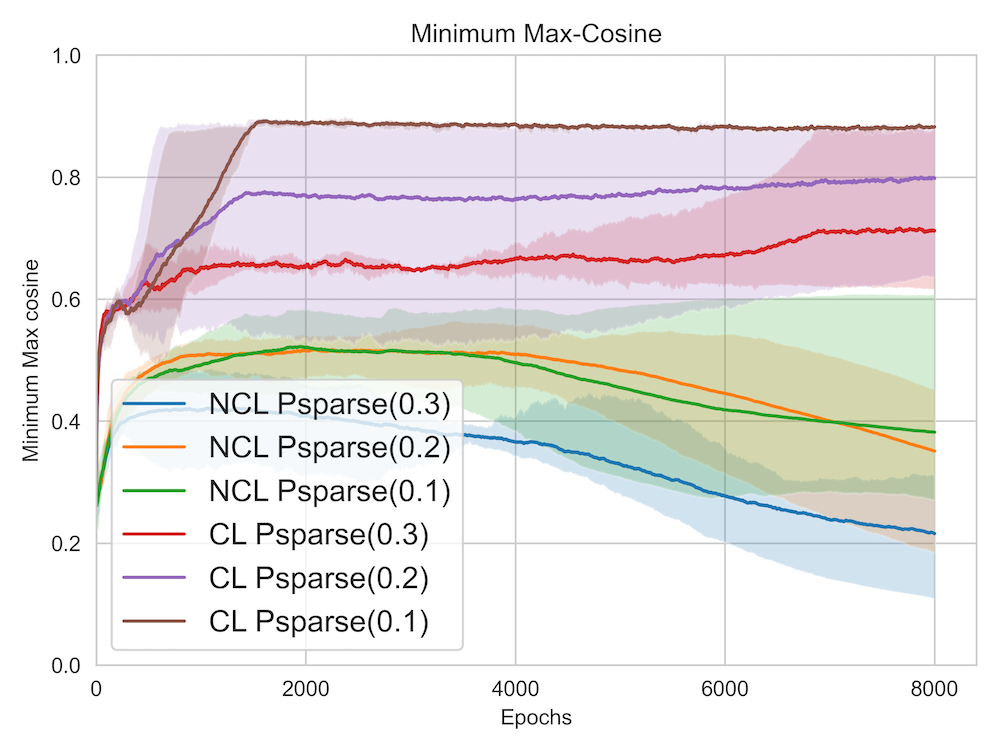}
  \end{minipage}
  \hfill
  \begin{minipage}[b]{0.45\textwidth}
    \centering
    \includegraphics[width=0.75\textwidth]{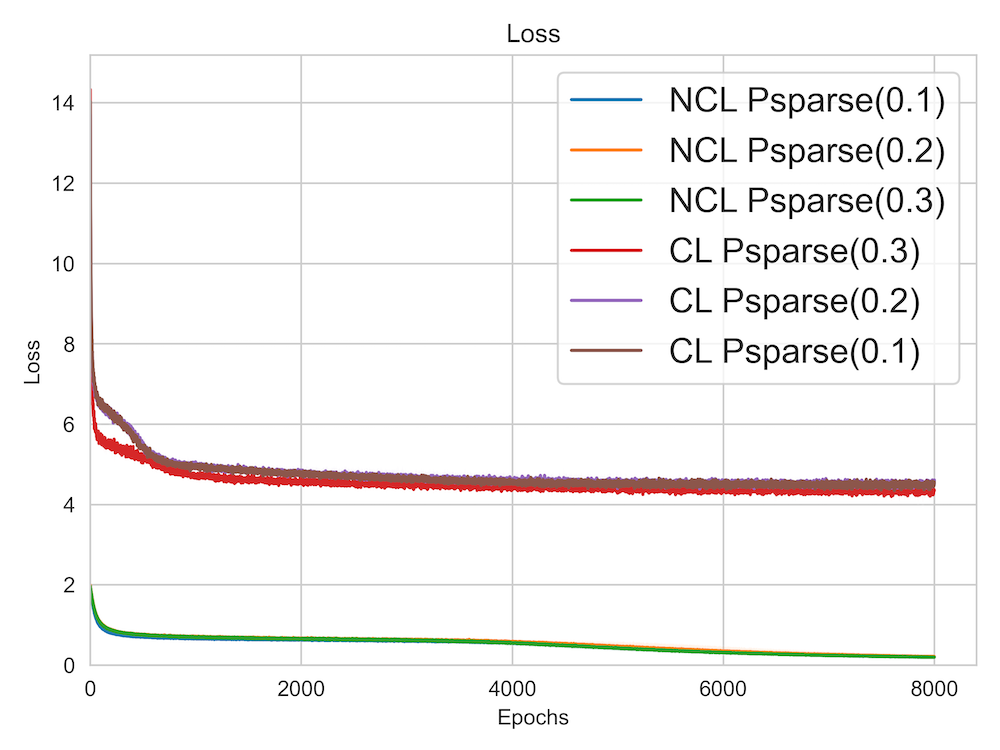}
  \end{minipage}
  \caption{(NCL vs CL loss with k-sparse latents) Minimum Max-Cosine (left) and Loss (right) curves for non-contrastive loss (NCL) and contrastive loss (CL) on an architecture with a randomly initialized linear encoder. We include Batch Normalization  (\cite{ioffe2015batch}) layer and symmetric ReLU activation after the linear encoder. Psparse indicates $Pr(\bm{z}_i = \pm 1), i \in [d]$ in the sparse coding vector $\bm{z}$. Reported numbers are averaged over 5 different runs.}
\label{fig:landscape-k-sparse}
\end{figure}

\subsection{Training dynamics analysis}\label{sec:dyn}

\subsubsection{Theoretical results on training dynamics}
\label{sec:dyn:theory}

We analyze the training dynamics for two simple model architectures, namely dual linear networks and dual ReLU networks, 
under the data generating process specified by \eqref{eq:data-generation}, in which the latent vector $z$ follows a symmetric Bernoulli distribution which we define in the following Assumption~\ref{assumption:symmetric_bernoulli_latent}:
\begin{assumption}[symmetric Bernoulli latent]
    \label{assumption:symmetric_bernoulli_latent}
    Let the latent vector $\vz$ be such that for each index $i \subset [d]$, $1$ and $\forall l \notin K$, $\evz_l = 0$.
    \[ \evz_i = \begin{cases}
        -1, \quad &\text{with probability } \frac{\kappa}{2} \\ 
        1, \quad &\text{with probability } \frac{\kappa}{2} \\ 
        0, \quad &\text{with probability } 1-\kappa
        \end{cases} \]
\end{assumption}

\begin{remark}
    Our theoretical result for training dynamics uses the symmetric (i.e. $0, \pm1$) instead of the binary (i.e. $0, 1$) latent, 
    because our experiments (reported in Section~\ref{sec:dyn:experimental}) use the symmetric version. \\
\end{remark}

\noindent\textbf{Scenario 1: non-warm-start regime: dual linear networks} \newline

The motivation behind looking at linear enocders is that if the initialization is far from the groundtruth, 
then ReLU will be in the linear regime most of the time, 
and we characterize the training dynamics in this regime.
First, building upon the loss and architecture defined in \eqref{eq:linear_network}, we use an additional L2 regularization (with parameter $\lambda \in (0, 1)$), which is essential in this fully-linear architecture to prevent the weights from exponentially growing.
\begin{align}
    \label{eq:linear_network_weight_decay}
    &L_{\text{linear-non-CL}}(\bW^o, \bW^t) := 2 - 2 \mathbb{E}_{\vx, \mD_1, \mD_2} \langle \mW^o \mD_1 \vx, \mW^t \mD_2 \vx\rangle + \frac{1-\lambda}{2} \| \bW^o \|_2^2 + \frac{1-\lambda}{2} \| \bW^t \|_2^2
\end{align} 

\begin{remark}[weight decay schedule and learning rate]
    \label{remark:weight_decay}
    \eqref{eq:linear_network_weight_decay} implicitly assumes that the weight is multiplied by the same decay factor $\lambda \in (0, 1)$ in each iteration. 
    This could be generalized to using a varying weight decay schedule, i.e. the weight-decay factor in step $t$ is $\lambda_t \in (0, 1)$ and can change during the optimization process. 
    With the latter setting, the calculation is essentially the same.
    
    Besides, we assume that the learning rate in step $t$, denoted as $\eta_t$, is small compared with the weight decay factor $\lambda$.
    This assumption is formally stated in the following Theorem~\ref{thm:linear_network}, and is consistent with common practice of setting $\lambda$ close to 1, and setting a small learning rate $\eta_t$ close to 0.
\end{remark}

We explicitly write the optimization process and characterize the limitation of non-contrastive training process in learning the groundtruth even in our simple setting:

\begin{theorem}[non-contrastive loss on linear network]
    \label{thm:linear_network}
    Suppose that the data generating process is specified as in Section~\ref{sec:setup} with the latent distribution changed to symmetric (Assumption~\ref{assumption:symmetric_bernoulli_latent}), 
    and the weights are initialized to $\mW^o_0$ and $\mW^t_0$, respectively.
    In step $t$, denote the learning rate as $\eta_t < \frac{\lambda}{2 \alpha^2 (2 \frac{\kappa}{2} + \sigma_0^2)}, \forall t$.
    Then, running gradient descent on the loss $L_{\text{linear-non-CL}}(\mW^o_t, \mW^t_t)$ in \eqref{eq:linear_network_weight_decay}
    will lead to
    \begin{align*}
        \mW^o_t \mM &= C_{1,t} \mW^o_0 \mM + C_{2,t} (\mW^o_0 \mM + \mW^t_0 \mM) \\
        \mW^t_t \mM &= C_{1,t} \mW^t_0 \mM + C_{2,t} (\mW^o_0 \mM + \mW^t_0 \mM)
    \end{align*}
    for some scalars $C_{1,t} := \prod_{i=0}^{t-1}(\lambda-c_i) \in (0, 1)$, and $C_{2,t} := \sum_{j=0}^{t-1} \left( c_i \prod_{i=j}^{t-1}(\lambda-c_i) \prod_{i=0}^{j-1}(\lambda+c_i) \right) > 0$,
    in which $c_i = 2 \eta_i \alpha^2 (2 p_z + \sigma_0^2) \in (0, \lambda)$.
\end{theorem}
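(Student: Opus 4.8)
The plan is to reduce the alternating gradient-descent updates to a coupled pair of \emph{linear} matrix recursions by working in the coordinate system of the ground-truth dictionary $\mM$.

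\textbf{Step 1: population gradient.} I would first compute the population loss in closed form. Because $\mD_1$ and $\mD_2$ are independent with i.i.d.\ $\mathrm{Bernoulli}(\alpha)$ diagonal entries, for any fixed matrix $\mB$ one has $\mathbb{E}_{\mD_1,\mD_2}[\mD_1 \mB \mD_2] = \alpha^2 \mB$ entrywise --- crucially with \emph{no} diagonal correction, since $(\mD_1)_{ii}$ and $(\mD_2)_{ii}$ come from different independent matrices. Hence, writing $\mSigma := \mathbb{E}[\vx\vx^\top]$,
\[
\mathbb{E}\langle \mW^o\mD_1\vx,\, \mW^t\mD_2\vx\rangle = \alpha^2\,\mathbb{E}\big[\vx^\top (\mW^o)^\top \mW^t \vx\big] = \alpha^2\,\mathrm{tr}\!\big((\mW^o)^\top \mW^t\, \mSigma\big).
\]
Since $\bm\epsilon$ is independent of $\vz$ and isotropic with variance $\sigma_0^2$, and the symmetric Bernoulli latent of Assumption~\ref{assumption:symmetric_bernoulli_latent} has $\mathbb{E}[\vz\vz^\top]=\mathbb{E}[z_1^2]\,\mI_d$, we get $\mSigma = \mathbb{E}[z_1^2]\,\mM\mM^\top + \sigma_0^2 \mI_p$. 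Differentiating (the update of $\mW^o$ treats $\mW^t$ as fixed, per the alternating/stop-gradient scheme) and reading the $\tfrac{1-\lambda}{2}\|\cdot\|^2$ term as decoupled Frobenius weight decay (Remark~\ref{remark:weight_decay}), the update is $\mW^o_{t+1} = \lambda\,\mW^o_t + 2\eta_t\alpha^2\,\mW^t_t\mSigma$ and, symmetrically, $\mW^t_{t+1} = \lambda\,\mW^t_t + 2\eta_t\alpha^2\,\mW^o_t\mSigma$.

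\textbf{Step 2: change of coordinates.} Right-multiplying both updates by $\mM$ and using column-orthonormality $\mM^\top\mM=\mI_d$ gives $\mSigma\mM = \big(\mathbb{E}[z_1^2]+\sigma_0^2\big)\mM$. So, setting $A_t := \mW^o_t\mM$, $B_t := \mW^t_t\mM$ and $c_t := 2\eta_t\alpha^2\big(\mathbb{E}[z_1^2]+\sigma_0^2\big)$ (which is the $2\eta_t\alpha^2(2p_z+\sigma_0^2)$ of the statement), the recursion closes:
\[
A_{t+1} = \lambda A_t + c_t B_t,\qquad B_{t+1} = \lambda B_t + c_t A_t .
\]
This is the one step that makes the theorem work: the only trajectory information that survives is the $m\times d$ matrix $\mW\mM$, and it evolves by a fixed $2\times 2$ linear map on the pair $(A_t,B_t)$, so nothing outside the span of the initialization in this coordinate system can ever appear.

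\textbf{Step 3: solve and conclude.} I would diagonalize the $2\times 2$ system through the sum/difference variables $S_t := A_t+B_t$, $E_t := A_t-B_t$, which decouple into $S_{t+1}=(\lambda+c_t)S_t$ and $E_{t+1}=(\lambda-c_t)E_t$; hence $S_t = \prod_{i=0}^{t-1}(\lambda+c_i)\,S_0$ and $E_t=\prod_{i=0}^{t-1}(\lambda-c_i)\,E_0$. Recombining $A_t=\tfrac12(S_t+E_t)$ with $S_0=\mW^o_0\mM+\mW^t_0\mM$ and $E_0=\mW^o_0\mM-\mW^t_0\mM$ produces the claimed form --- equivalently, one verifies by induction that $C_{1,t}:=\prod_{i=0}^{t-1}(\lambda-c_i)$ together with $C_{2,t}$, defined by $C_{1,t+1}=(\lambda-c_t)C_{1,t}$ and $C_{2,t+1}=(\lambda+c_t)C_{2,t}+c_tC_{1,t}$ (which unrolls to the stated sum), satisfy $\mW^o_t\mM = C_{1,t}\mW^o_0\mM + C_{2,t}(\mW^o_0\mM+\mW^t_0\mM)$ and its symmetric counterpart for $\mW^t_t\mM$. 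The range claims then follow from the learning-rate hypothesis $\eta_t<\lambda/\big(2\alpha^2(2p_z+\sigma_0^2)\big)$, which forces $c_t\in(0,\lambda)$: each factor $\lambda-c_i\in(0,1)$ so $C_{1,t}\in(0,1)$, and each factor $\lambda+c_i>0$ with $c_j>0$ so every summand of $C_{2,t}$ is positive, giving $C_{2,t}>0$.

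\textbf{Main obstacle.} The computation is essentially mechanical once set up; the two places that need care are (i) getting the population gradient right --- in particular, noticing that the \emph{independent} masking scheme of \eqref{eq:augmentation} (rather than the dependent one of \eqref{eq:dependent-masking}) yields the clean identity $\mathbb{E}[\mD_1\mB\mD_2]=\alpha^2\mB$ with no diagonal term, which is precisely what keeps the dynamics linear and closed-form solvable; and (ii) pinning down the exact meaning of ``gradient descent with weight-decay factor $\lambda$'' and modeling the alternating/stop-gradient updates as simultaneous decoupled steps, which is the choice that yields the symmetric closed form. Everything after Step 2 is routine unrolling of a scalar linear recursion.
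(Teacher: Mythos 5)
Your proposal is correct and follows essentially the same path as the paper's proof: compute the population gradient (using that the independent masks give $\mathbb{E}[\mD_1\mB\mD_2]=\alpha^2\mB$ with no diagonal correction), right-multiply by $\mM$ and use $\mM^\top\mM=\mI$ to reduce to a coupled linear recursion on $A_t:=\mW^o_t\mM$ and $B_t:=\mW^t_t\mM$, then unroll. The one stylistic difference is in Step~3: the paper only diagonalizes the sum $S_t=A_t+B_t$, substitutes it back into the update for $A_{t+1}$, and then verifies the stated closed form for $C_{1,t},C_{2,t}$ by induction; you diagonalize both the sum and the difference $E_t=A_t-B_t$, reducing everything to two decoupled scalar geometric recursions, and recover the paper's $C_{2,t}$ formula by writing $\frac12\bigl(\prod(\lambda+c_i)-\prod(\lambda-c_i)\bigr)$ as a telescoping sum. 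This avoids the explicit induction, but the two arguments are equivalent. You also correctly flag the one point that needs care, which the paper handles via Remark~\ref{remark:weight_decay}: the update $\mW^o_{t+1}=\lambda\mW^o_t+2\eta_t\alpha^2\mW^t_t\mSigma$ requires interpreting the $\frac{1-\lambda}{2}\|\cdot\|^2$ term as \emph{decoupled} weight decay (a direct multiplicative shrink by $\lambda$), not as a standard gradient step, which would instead give the factor $1-\eta_t(1-\lambda)$.
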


\begin{remark}[limitation of dual linear networks]
    \label{remark:linear_network}
    By \eqref{eq:encoder_recovers_latent} with ReLU changed to identity, a good encoder $\mW^o$ should satisfy $\mW^o \mM \approx c \mP$.
    However, by the above Theorem~\ref{thm:linear_network}, the learned encoder $\mW^o_t \mM$  will converge to a linear combination of $\mW^o_0 \mM$ and $\mW^t_0 \mM$, which is a one-dimensional subspace.
    In particular, if this subspace is far from any permutation matrix $\mP$, then the learned encoder will stay away from the good encoders that can approximately recover the latent vector $\vz$.
\end{remark}

The proof of this Theorem~\ref{thm:linear_network} is deferred to Appendix~\ref{sec:proof:linear}. \\

\noindent\textbf{Scenario 2: warm-start regime: dual ReLU networks} \newline

In the warm-start regime, we take into consideration the ReLU non-linearity,
in which the ReLU activation is also the symmetric version, corresponding to the symmetric latent described in Assumption~\ref{assumption:symmetric_bernoulli_latent}. Namely, $\text{SReLU}_{\vb}(\vx) := ReLU(\vx - \vb) - ReLU(-\vx - \vb)$, for some positive bias vector $\vb$. 
Equivalently: 
\begin{equation}
    \label{eq:srelu}
    \begin{split}
        (\text{SReLU}_{\vb}(\vx))_i = \begin{cases}
        \evx_i - \evb_i, \quad &\text{if } \evx_i > \evb_i \\ 
        0, \quad &\text{if } -\evb_i \le \evx_i \le \evb_i \\ 
        \evx_i + \evb_i, \quad &\text{if } \evx_i < -\evb_i \\ 
        \end{cases}
    \end{split}
\end{equation}

Hence the loss function is:
\begin{equation}
    \label{eq:relu_network}
    \begin{split}
        L(\mW^o, \mW^t) = 2 - 2 \mathbb{E}_{\vx, \mD_1, \mD_2} &\langle \text{SReLU}_{\vb^o}(\mW^o \mD_1 \vx) , \\
            &\text{SReLU}_{\vb^t}(\mW^t \mD_2 \vx) \rangle
    \end{split}
\end{equation}

We study the case $\mM = \mI$, under a series of assumptions that we state formally and discuss their significance below. 

\begin{assumption}[warm start]
    \label{assumption:warm_start_symmetric_bernoulli}
    We assume that $\mW^o$ and $\mW^t$ are both warm-started, i.e. let 
    \begin{align*}
        \mW^o &= \mM + \bm\Delta^o = \mI + \bm\Delta^o \\
        \mW^t &= \mM + \bm\Delta^t = \mI + \bm\Delta^t
    \end{align*}
    for some initial error $\bm\Delta^o, \bm\Delta^t$ such that $\forall i, j, |\bm\Delta^o_{ij}| < \frac{1}{10d}, |\bm\Delta^t_{ij}| < \frac{1}{10d}$.
\end{assumption}

\begin{assumption}[small noise]
    \label{assumption:small_noise}
    We assume that the magnitude of the noise $\| \bm\epsilon \|_2$ is smaller than the initial warm start errors $\bm\Delta^o_{ij}, \bm\Delta^t_{ij}$ of the weights, i.e.
    \[ \forall i, j, \| \bm\epsilon \|_2 \le |\bm\Delta^o_{ij}|, \| \bm\epsilon \|_2 \le |\bm\Delta^t_{ij}| \]
\end{assumption}

\begin{remark}
    Note that Assumption~\ref{assumption:small_noise} assumes a smaller noise scale than what is used in our experiments (described in Section~\ref{sec:setup:data-generation}).
    In particular, the noise does not have to exist, i.e. the noiseless setting ($\bm\epsilon = 0$) satisfies Assumption~\ref{assumption:small_noise}.
\end{remark}

\begin{assumption}[bias]
    \label{assumption:bias_symmetric_bernoulli}
    We assume that the bias $\vb^o$ and $\vb^t$ are fixed throughout the optimization process and satisfy the following requirements:
    for each set $i \in [d]$, let
    \begin{align*}
        c_b^o &= \sum_{l \ne i} \bm\Delta^o_{il} \emD_{1,ll} \evz_l + \sum_{j=1}^d (\emI_{ij} + \bm\Delta^o_{ij}) \emD_{1,jj} \bm\epsilon_j \\
        c_b^t &= \sum_{l \ne i} \bm\Delta^t_{il} \emD_{1,ll} \evz_l + \sum_{j=1}^d (\emI_{ij} + \bm\Delta^t_{ij}) \emD_{1,jj} \bm\epsilon_j 
    \end{align*}
    which are 
    in $(0, 1)$ by Assumptions \ref{assumption:warm_start_symmetric_bernoulli} and \ref{assumption:small_noise}.
    Let $b_i^o, b_i^t \in (0, 1)$ satisfy:
    \begin{align*}
        \max\{-c_b^o, c_b^o, 0\} &< b_i^o < \min\{1 + c_b^o, 1 -  c_b^o\} \\
        \max\{-c_b^t, c_b^t, 0\} &< b_i^t < \min\{1 + c_b^t, 1 -  c_b^t\}
    \end{align*}
\end{assumption}

Our following theorem characterizes the convergence point of alternating optimization on the non-contrastive loss:

\begin{theorem}[ReLU network with normalization, warm start]
    \label{thm:relu_no_pred}
    Under Assumptions~\ref{assumption:symmetric_bernoulli_latent}, \ref{assumption:warm_start_symmetric_bernoulli}, \ref{assumption:bias_symmetric_bernoulli}, running:
    \item Repeat until both $\mW^o$ and $\mW^t$ converges:
    \begin{itemize}
        \item Repeat until $\mW^o$ converges: \\
            \[ \mW^o \leftarrow \text{normalize}(\mW^o - \eta \nabla_{\mW^o} L(\mW^o, \mW^t)) \]
        \item Repeat until $\mW^t$ converges: \\
            \[ \mW^t \leftarrow \text{normalize}(\mW^t - \eta \nabla_{\mW^t} L(\mW^o, \mW^t)) \]
    \end{itemize}
    will make $\mW^o$ and $\mW^t$ both converge to $I$
    in which \texttt{normalize} means row-normalization, i.e. 
    $\| \mW^o_{i*} \|_2 = 1$ and $\| \mW^t_{i*} \|_2 = 1$ for each $i$.
\end{theorem}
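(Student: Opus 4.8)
The plan is to show that, under the warm start, the alternating minimization collapses to an elementary ``rank-one pull plus contraction'' recursion on the two weight matrices. The enabling observation is a \emph{frozen activation pattern}: with $\mM=\mI$ we have $\vx=\vz+\bm\epsilon$, so the $i$-th preactivation of the online net on the first view is $(\mW^o\mD_1\vx)_i=(1+\bm\Delta^o_{ii})\mD_{1,ii}(\vz_i+\bm\epsilon_i)+\sum_{l\ne i}\bm\Delta^o_{il}\mD_{1,ll}(\vz_l+\bm\epsilon_l)$. When $\vz_i=0$ or $\mD_{1,ii}=0$ this equals exactly the quantity $c_b^o$ of Assumption~\ref{assumption:bias_symmetric_bernoulli}, of magnitude $O(1/d)$ (plus the noise), hence inside $[-b^o_i,b^o_i]$ by Assumptions~\ref{assumption:warm_start_symmetric_bernoulli}--\ref{assumption:bias_symmetric_bernoulli}, so this unit is dead; when $\vz_i\ne0$ and $\mD_{1,ii}=1$ the preactivation is $\vz_i$ up to an $O(1/d)$ term, hence outside the dead zone and in the linear regime. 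So the $\text{SReLU}$ derivative at coordinate $i$ equals $\mD_{1,ii}\mathbbm{1}[\vz_i\ne0]$, and $\text{SReLU}_{\vb^o}(\mW^o\mD_1\vx)_i=\big((\mW^o\mD_1\vx)_i-b^o_i\,\mathrm{sign}(\vz_i)\big)\mD_{1,ii}\mathbbm{1}[\vz_i\ne0]$; the same holds for the target net on the second view. The only input here is that $\bm\Delta^o,\bm\Delta^t$ obey Assumption~\ref{assumption:warm_start_symmetric_bernoulli}, which I maintain as an invariant.

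Fix $\mW^t$ (the stop-gradient / alternating convention). By the previous paragraph the $i$-th row of $\nabla_{\mW^o}L$ for $L$ as in \eqref{eq:relu_network} is $-2\,\mathbb{E}\big[\text{SReLU}_{\vb^t}(\mW^t\mD_2\vx)_i\,\mD_{1,ii}\mathbbm{1}[\vz_i\ne0]\,(\mD_1\vx)^\top\big]$; substituting the activation formula above and using that $\mD_1,\mD_2$ are independent with mean $\alpha$, that $\vz,\bm\epsilon$ are independent and zero-mean with $\mathbb{E}\vz_i^2=\kappa$ and $\mathbb{E}\bm\epsilon_i^2=\sigma_0^2$, and that $\vz_i=\mathrm{sign}(\vz_i)$ on $\{\vz_i\ne0\}$, a short moment computation yields $(\nabla_{\mW^o}L)_{ii}=-2\alpha^2\kappa\big[(1+\bm\Delta^t_{ii})(1+\sigma_0^2)-b^t_i\big]$ and $(\nabla_{\mW^o}L)_{ij}=-2\alpha^4\kappa(\kappa+\sigma_0^2)\,\bm\Delta^t_{ij}$ for $j\ne i$, up to $O(1/d^2)$ corrections. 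Two features matter: to leading order this row, call it $g_i$, does not depend on $\mW^o$ at all (only on $\mW^t$ and the fixed biases), and it is a $\Theta(\alpha^2\kappa)$ pull toward $+\ve_i$ plus an $O(\alpha^4\kappa)$ multiple of the off-diagonal part of row $i$ of $\mW^t$. Hence the inner loop $\mW^o_{i*}\leftarrow\text{normalize}(\mW^o_{i*}-\eta g_i)$ is row-normalized descent for the map $\mW^o_{i*}\mapsto\langle\mW^o_{i*},g_i\rangle$, which up to an additive constant coincides with $L$ restricted to row $i$ over the warm-start region (the activation pattern being frozen there), so for small $\eta$ it converges to the unique minimizer $\mW^o_{i*}=-g_i/\|g_i\|_2$. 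Therefore, after the online phase, $\mW^o_{ij}=\gamma^o_i\,\bm\Delta^t_{ij}$ for $j\ne i$ with $\gamma^o_i=\tfrac{\alpha^2(\kappa+\sigma_0^2)}{(1+\bm\Delta^t_{ii})(1+\sigma_0^2)-b^t_i}\big(1+O(\tfrac{1}{d})\big)$, and $\mW^o_{ii}=\sqrt{1-\sum_{j\ne i}(\mW^o_{ij})^2}\in[1-\tfrac{1}{100d},1]$; in particular $\mW^o$ again satisfies Assumption~\ref{assumption:warm_start_symmetric_bernoulli}, closing the invariant. The same statement holds with $o\leftrightarrow t$ after the target phase.

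Because $g_i$ is independent of $\mW^o$, the online phase \emph{overwrites} $\mW^o$'s previous off-diagonal error by $\gamma^o_i\bm\Delta^t_{ij}$. Composing the two phases, one full outer sweep sends $\bm\Delta^t_{ij}\mapsto\gamma^o_i\gamma^t_i\,\bm\Delta^t_{ij}$ (and $\bm\Delta^o_{ij}\mapsto\gamma^o_i\bm\Delta^t_{ij}$). Under the parameter regime of Section~\ref{sec:setup}, the bias ranges of Assumption~\ref{assumption:bias_symmetric_bernoulli} --- being pinned to the $O(1/d)$ quantities $c_b^{o},c_b^{t}$ --- keep $b^{o}_i,b^{t}_i$ bounded away from $1$, so $|\gamma^o_i\gamma^t_i|<1$; the off-diagonal errors of both networks then decay geometrically to $0$, and once they vanish row-normalization forces $\mW^o_{ii},\mW^t_{ii}\to1$. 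Hence $\mW^o,\mW^t\to\mI$, as claimed; the invariant established above guarantees the frozen-activation computation stays valid along the whole trajectory.

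The substantive work is the frozen-activation step together with the sub-claim $|\gamma^o_i\gamma^t_i|<1$. One must verify that the fixed biases simultaneously sit above the $O(1/d)$ ``noise floor'' created by the off-diagonal warm-start error plus the Gaussian noise --- so that the $\vz_i=0$ units stay dead for \emph{every} realization, not just in expectation --- and low enough that $\alpha^2(\kappa+\sigma_0^2)<(1+\sigma_0^2)-b^t_i$, which is precisely what turns $\mW^t\mapsto\mW^o$ into a contraction rather than an expansion (this is where sparsity, i.e.\ $\alpha^2\kappa$ small, enters). The remaining pieces --- the moment computation for $\nabla_{\mW^o}L$, and the convergence of the row-normalized linear iteration to $-g_i/\|g_i\|$ --- are routine.
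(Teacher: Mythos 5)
Your proposal follows essentially the same route as the paper: use the warm-start and bias assumptions to freeze the SReLU activation pattern, observe that the resulting gradient of $L$ in $\mW^o$ is independent of $\mW^o$ and proportional in its off-diagonal components to the off-diagonal entries of $\mW^t$, characterize the inner-loop fixed point of row-normalized descent, and compose the alternating phases to get geometric decay of the off-diagonal error via the contraction factor $\gamma^o_i\gamma^t_i = a^2 \in (0,1)$. The paper packages the inner-loop convergence as a standalone lemma (Lemma~\ref{lemma:convergence_proportional_normalization}) on proportional updates under $\ell_2$-normalization, whereas you state the fixed point directly as $-g_i/\|g_i\|_2$; these are the same fact. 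You also explicitly flag, more carefully than the paper does, that the contraction factor $a<1$ hinges on the bias being bounded away from $(1+\sigma_0^2)(1+\bm\Delta^t_{ii})$, which is a real hypothesis the paper attributes to Assumptions~\ref{assumption:warm_start_symmetric_bernoulli}--\ref{assumption:bias_symmetric_bernoulli} without detail.
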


The proof is deferred to Section~\ref{sec:proof:relu_no_pred}.

\subsubsection{Experimental results on training dynamics}
\label{sec:dyn:experimental}

We analyze properties of the training dynamics via extensive experimental results. 
 We follow the set up described in section \ref{empirical-results-set-up}.
 
\paragraph{NCL from random initialization does not work} First, we empirically show that the training dynamics for the non-contrastive loss (Eq. \ref{eq:loss_non-CL}) fails to learn the correct ground truth representations whereas contrastive loss (Eq. \ref{eq:loss_CL}) is able to successfully learn the correct representations. We include plots of \textit{Minimum Max-Cosine} and training loss in Figure \ref{fig:landscape-one-hot} and \ref{fig:landscape-k-sparse}. Both of these figures show that when encoder is a one layer ReLU network, NCL has lower values of \textit{Minimum Max-Cosine} which indicates that it has failed to recover the correct support of ground truth dictionary $\bm{M}$, whereas CL is able to learn good representations across different levels of sparsity as shown by its high values of \textit{Minimum Max-Cosine}. We include additional plots for \textit{Maximum Max-Cosine} and \textit{Median Max-Cosine} in Appendix \ref{sec:appendix-landscape}. Finally, from Figure \ref{fig:landscape-ncl-k-sparse-ablation}, we note that introducing additional linear layers in the base encoder does not aid non-contrastive loss to learn the correct ground truth representations.

\begin{figure}[!tbp]
  \centering
  \begin{minipage}[b]{0.45\textwidth}
    \centering
    \includegraphics[width=0.75\textwidth]{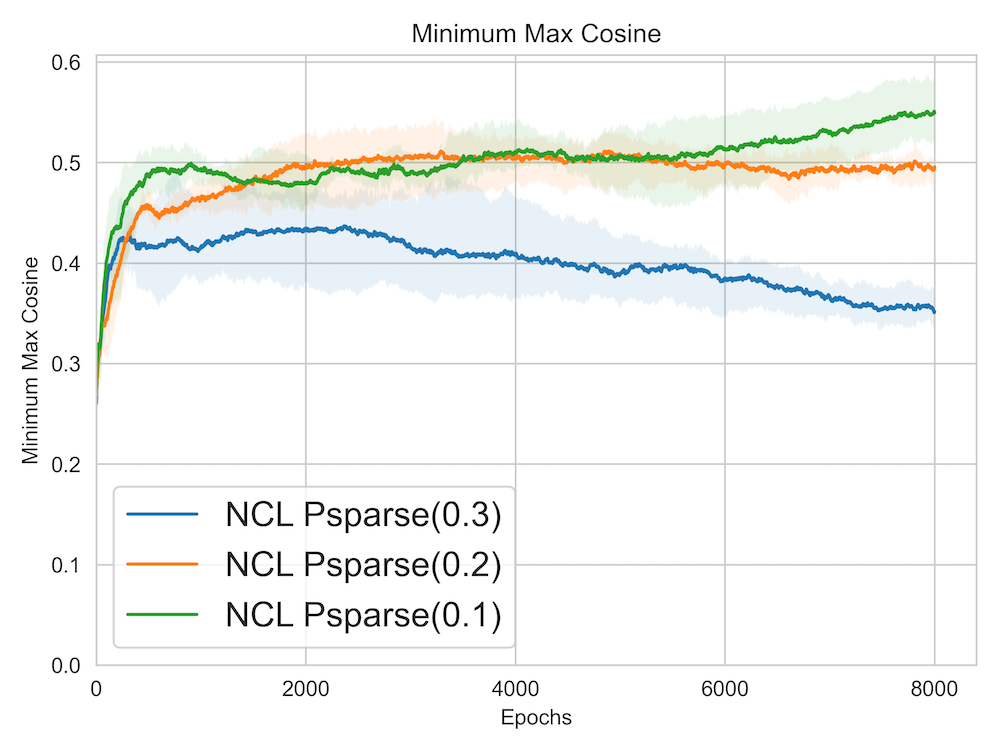}
  \end{minipage}
\hfill
  \begin{minipage}[b]{0.45\textwidth}
    \centering
    \includegraphics[width=0.75\textwidth]{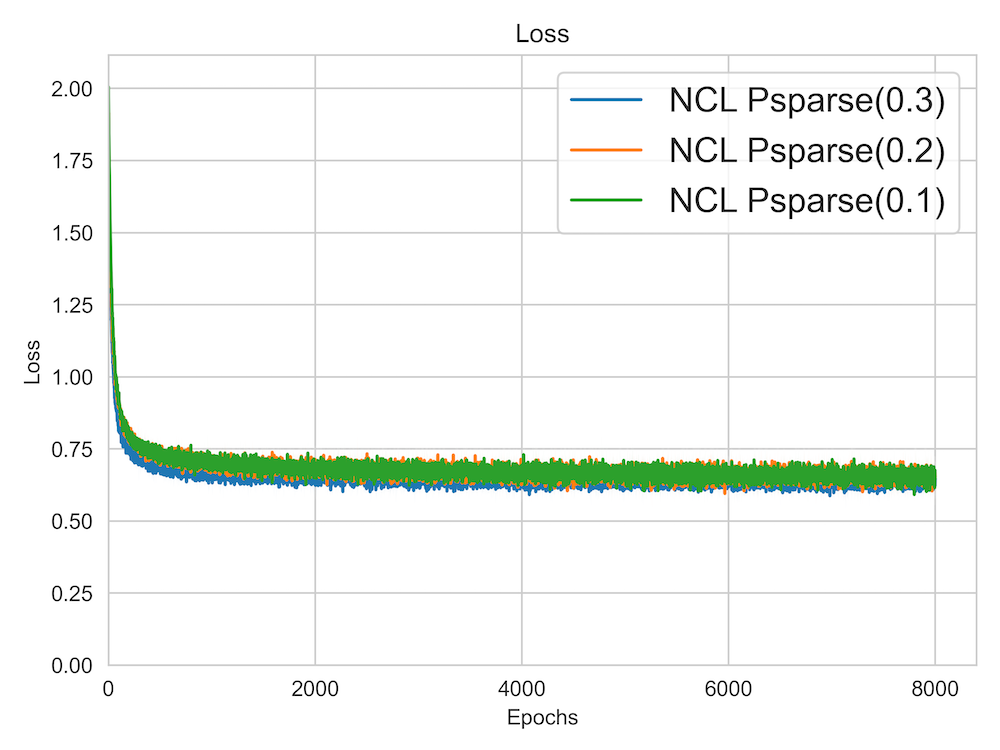}
  \end{minipage}
  \caption{(NCL with 2 layered encoder does not work) Minimum Max-Cosine (left) and Loss (right) curves for non-contrastive loss (NCL) with a two-layered linear encoder with batch-normalization and symmetric ReLU activation. Psparse indicates $Pr(\bm{z}_i = \pm 1), i \in [d]$ in the sparse coding vector $\bm{z}$. Reported numbers are averaged over 5 different runs.}
\label{fig:landscape-ncl-k-sparse-ablation}
\end{figure}

\begin{figure}[!tbp]
  \centering
  \begin{minipage}[b]{0.45\textwidth}
     \centering
    \includegraphics[width=0.75\textwidth]{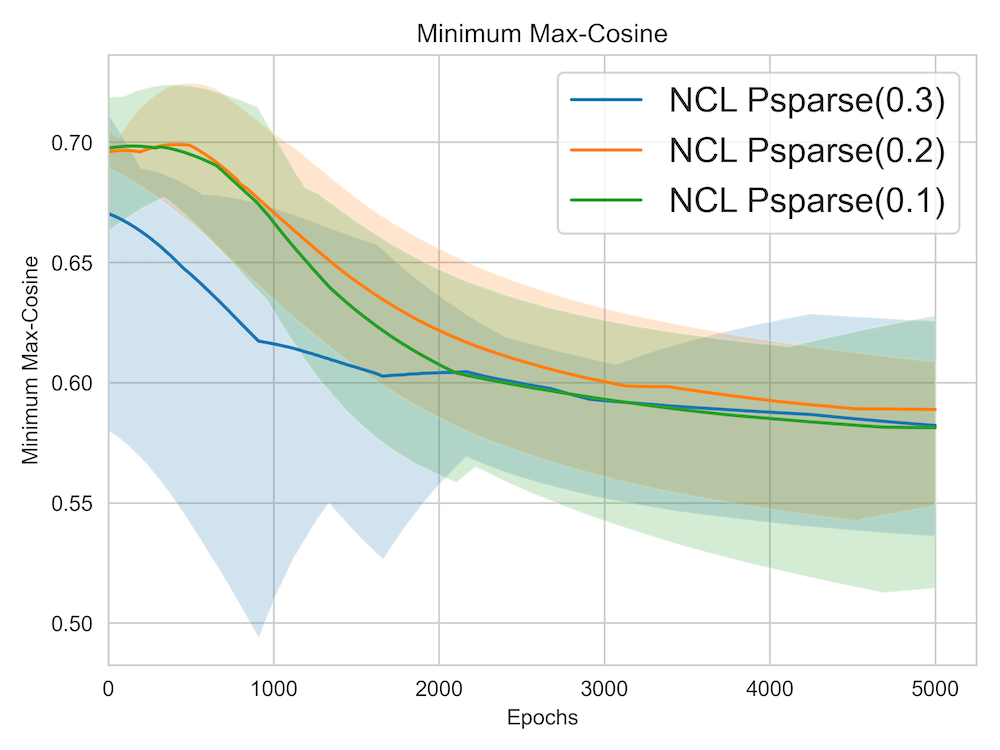}
  \end{minipage}
    \hfill
  \begin{minipage}[b]{0.45\textwidth}
     \centering
    \includegraphics[width=0.75\textwidth]{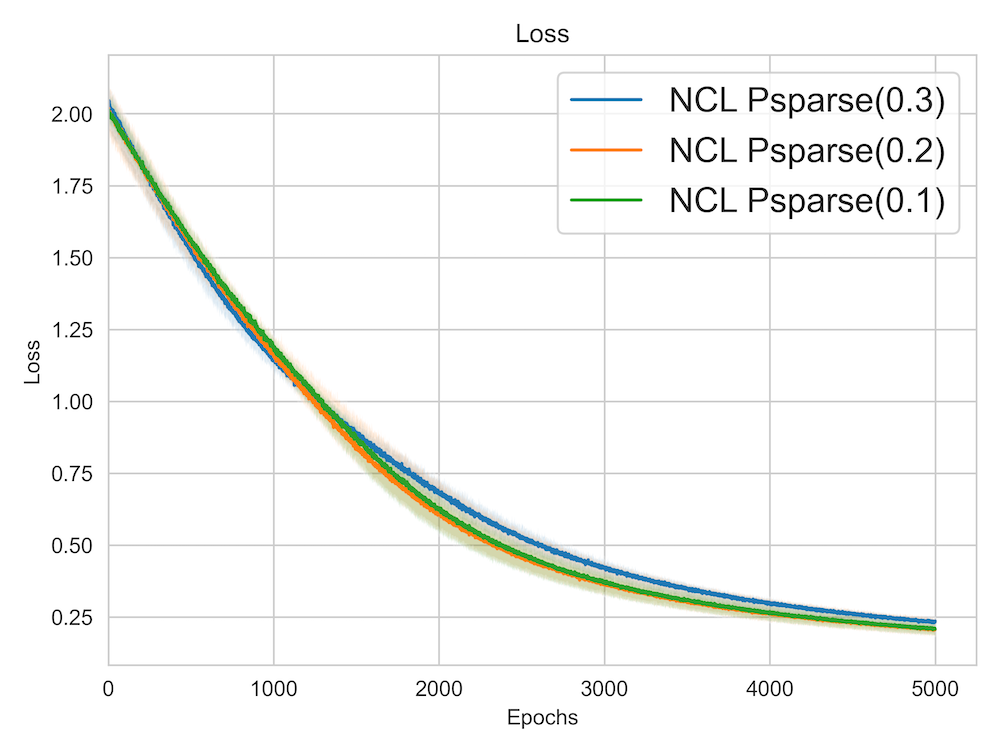}
  \end{minipage}
  \caption{(NCL with warm start does not work) Minimum Max-Cosine (left) and Loss (right) curves for non-contrastive loss (NCL) with a warm-started linear encoder. 
  Reported numbers are averaged over 5 different runs.}
\label{fig:theorem-3-warm-start-ncl}
\end{figure}

\begin{figure}[!tbp]
  \centering
  \begin{minipage}[b]{0.45\textwidth}
    \centering
    \includegraphics[width=0.75\textwidth]{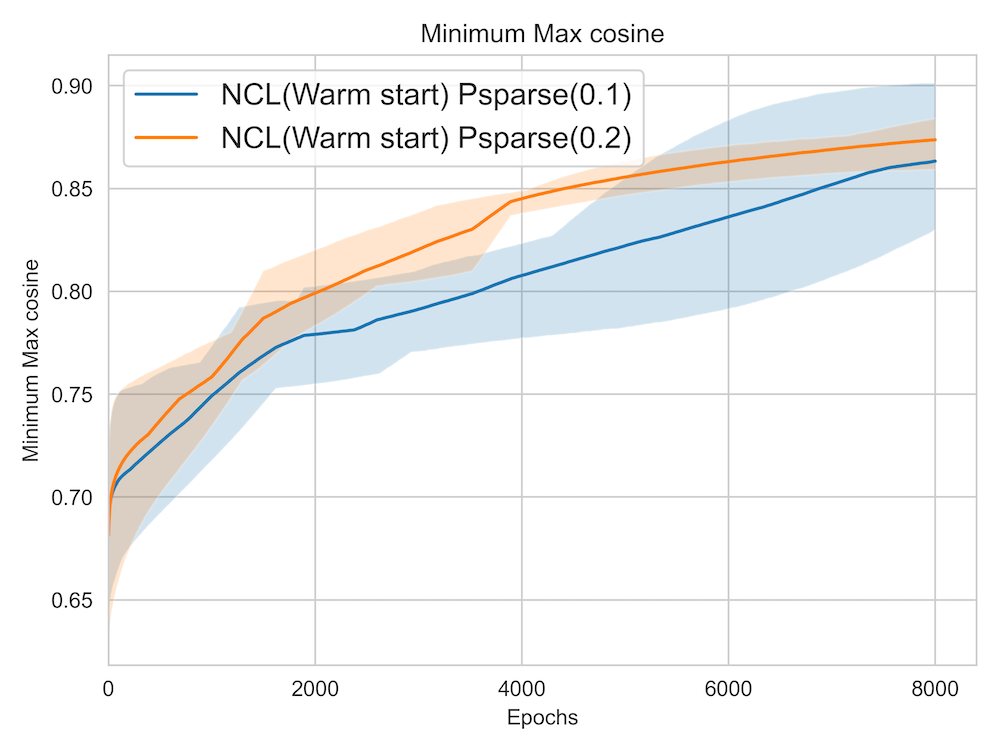}
  \end{minipage}
  \hfill
  \begin{minipage}[b]{0.45\textwidth}
    \centering
    \includegraphics[width=0.75\textwidth]{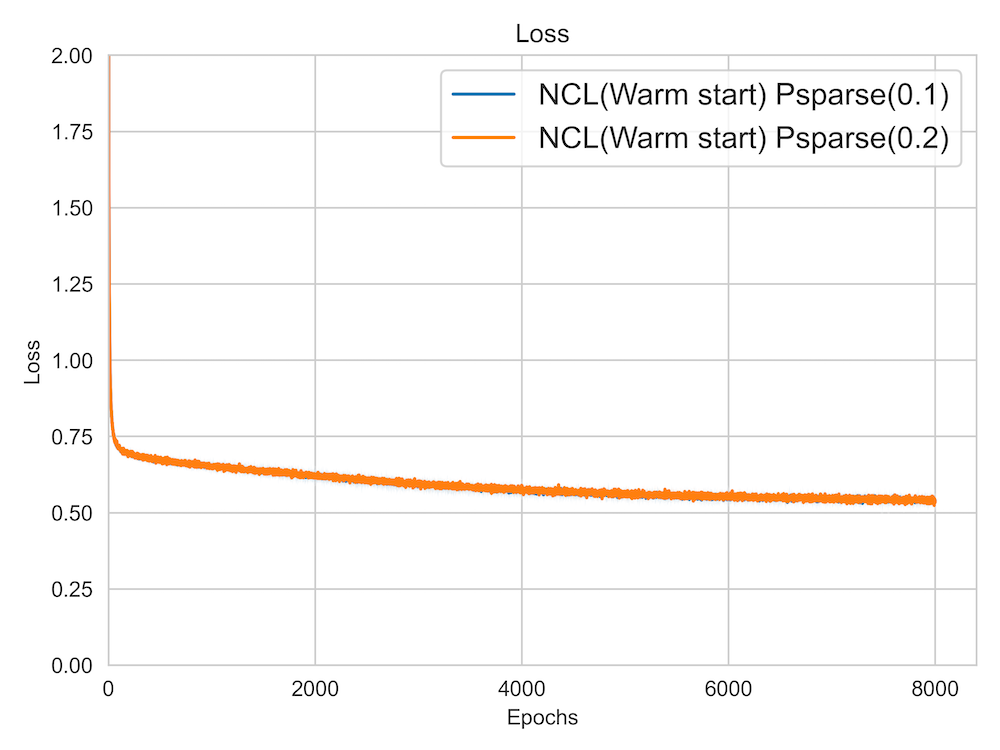}
  \end{minipage}
  \caption{(NCL with a linear prediction head and warm-start works) Minimum Max-Cosine (left) and Loss (right) curves for non-contrastive loss (NCL) with warm-started linear encoder and a linear predictor. Reported numbers are averaged over 3 different runs.}
 \label{fig:theorem-4-ncl-pred-warm-start}
\end{figure}

Next, we provide several experiments that demonstrate the importance of including a \emph{prediction head} $\bm{W}_p$. Namely, we will consider starting the training dynamics from a \emph{warm start} --- i.e. a point nearby the optimum. Precisely, we initialize the columns of the matrix $\bm{W}$ with random columns of the dictionary matrix $\bm{M}$ and add Gaussian noise $\mathcal{N}(0, \sigma^2\bI)$ with $\sigma = \frac{1}{p^{c/2}}$ where the choice of $\sigma$ determines the closeness of the matrix $\bm{W}$ to the dictionary matrix. For the results in this section, we use $c=1$. As we use $p=50$ for these experiments, we have $\sigma \approx 0.141$. We include additional results for other values of $\sigma$ in Appendix \ref{sec:appendix-training-dynamics}.

\paragraph{NCL with warm start only fails to learn good representations} We provide empirical evidence that even with warm-start, non-contrastive loss without a prediction head $\bm{W}_p$ fails to learn the correct ground truth representations. Figure \ref{fig:theorem-3-warm-start-ncl} shows that for non-contrastive loss with warm start, \textit{Minimum Max-Cosine} decreases as the training proceeds which points towards failure of this model to learn good representations.

\paragraph{NCL with warm start and prediction head learns good representations}
On the other hand, from Figure \ref{fig:theorem-4-ncl-pred-warm-start}, we can conclude that if we do include a linear predictor, from a warm start non-contrastive loss objective can learn the correct ground truth representations. We observe that \textit{Minimum Max-Cosine} increases on average from around $0.7$ to $0.87$ as the training proceeds. Finally, Figure \ref{fig:theorem-4-ncl-pred-no-warm-start} shows that in the absence of "warm-started" encoder, even with a predictor, NCL without a projection head fails to learn the ground truth representations. This experiment, together with Figure \ref{fig:theorem-4-ncl-pred-no-warm-start} indicate that the inclusion of a predictor head $W_p$ is of paramount importance to training; however even with the inclusion of the predictor head, the training dynamics seem unable to escape the plethora of bad minima of the optimization landscape. We provide some theoretical evidence for these phenomena as well in Appendix \ref{sec:appendix-training-dynamics}.

As a technical implementation detail, we note for all the experiments in this subsection the encoder includes a batch normalization layer. There is no symmetric ReLU activation or Batch Normalization layer after the predictor.

\begin{figure}[htp!]
  \centering
  \begin{minipage}[b]{0.45\textwidth}
  \centering
    \includegraphics[width=0.75\textwidth]{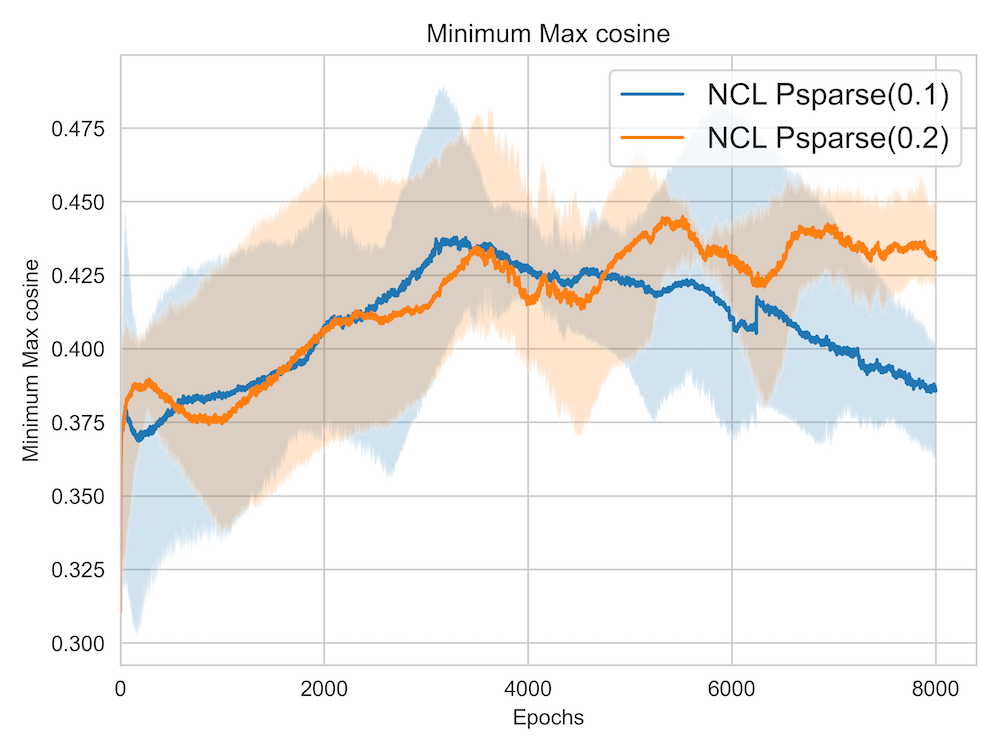}
  \end{minipage}
  \hfill
  \begin{minipage}[b]{0.45\textwidth}
  \centering
    \includegraphics[width=0.75\textwidth]{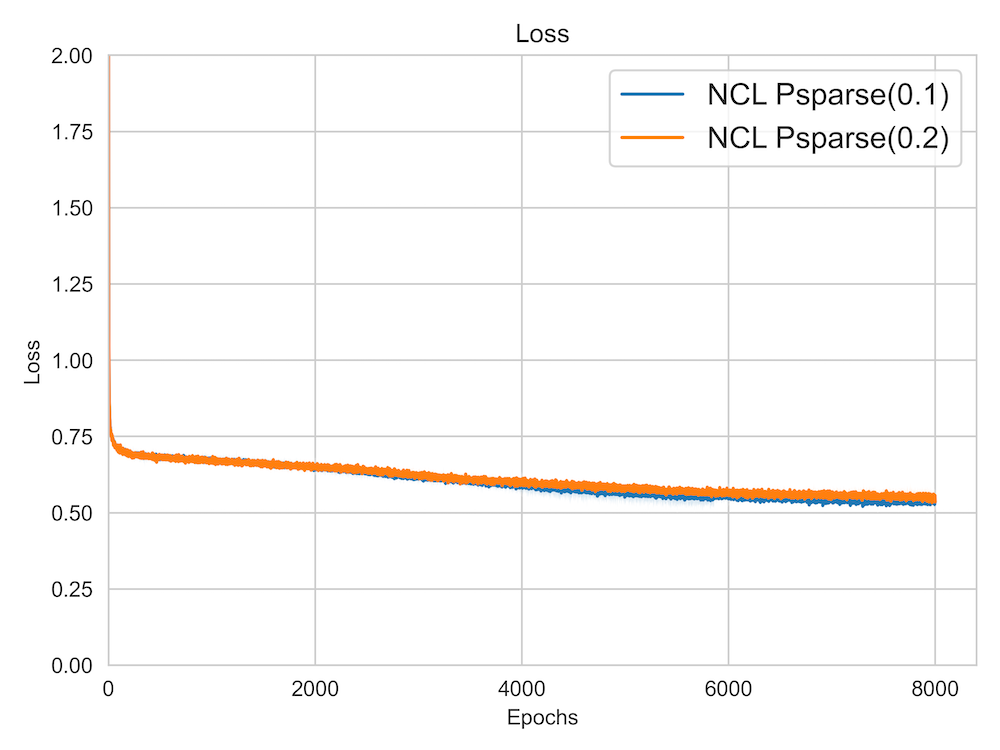}
  \end{minipage}
  \caption{(NCL with a linear prediction head does not work without warm-start ) Minimum Max-Cosine (left) and Loss (right) curves for non-contrastive loss (NCL) with non warm-started linear encoder and a linear predictor. Reported numbers are averaged over 3 different runs.}
 \label{fig:theorem-4-ncl-pred-no-warm-start}
\end{figure}

\begin{table*}[th!]
\centering
\begin{tabular}{ccccc}
\toprule
Model & Pr(sparse) & \multicolumn{1}{p{2.5cm}}{\centering Maximum \\ Max cosine $\uparrow$} & \multicolumn{1}{p{2.5cm}}{\centering Median \\ Max cosine $\uparrow$ }  & \multicolumn{1}{p{2.5cm}}{\centering Minimum \\ Max cosine $\uparrow$}  \\
\midrule
Simplified-SimCLR & 0.1 &  $0.94 \pm 0.005$ & $0.92 \pm 0.004$ & $0.83 \pm 0.16$ \\
Simplified-SimCLR & 0.2 & $0.94 \pm 0.004$ & $0.93 \pm 0.003$ & $0.90 \pm 0.01$ \\
Simplified-SimCLR & 0.3 & $0.94 \pm 0.006$ & $0.92 \pm 0.007 $ & $0.57 \pm 0.019$ \\
Simplified-SimSiam & 0.1 & $0.94 \pm 0.007$ & $0.87 \pm 0.08$ & $0.47 \pm 0.07$ \\
Simplified-SimSiam & 0.2 & $0.93 \pm 0.008$ & $0.82 \pm 0.01$ & $0.45 \pm 0.005$ \\
Simplified-SimSiam & 0.3 & $0.678 \pm 0.2$ & $0.45 \pm 0.01$ & $0.37 \pm 0.03$ \\
\bottomrule
\end{tabular}
\caption{\label{tab:table-simsiam-simclr} Comparison of the cosine values learnt by the simplified-SimCLR and simplified-SimSiam. Pr(sparse) indicates the probability $Pr(\bm{z}_i = \pm 1), i \in [d]$ in the sparse coding vector $\bm{z}$. We report mean $\pm$ std. deviation over 5 runs. $\uparrow$ symbol indicates that the higher value is better for the associated metric. Note that we sample the diagonal entries in random masks from Bernoulli($0.1$) for these experiments.} 
\end{table*}

\begin{figure}[!htbp]
  \centering
  \begin{minipage}[b]{0.45\textwidth}
    \centering
    \includegraphics[width=0.75\textwidth]{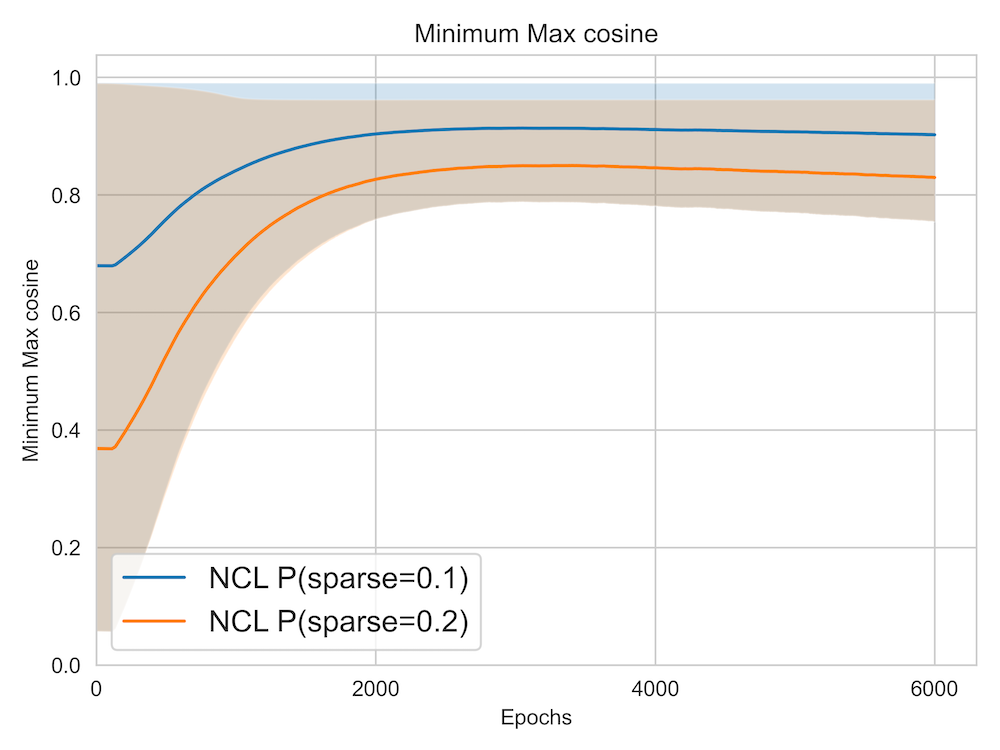}
  \end{minipage}
  \hfill
  \begin{minipage}[b]{0.45\textwidth}
    \centering
    \includegraphics[width=0.75\textwidth]{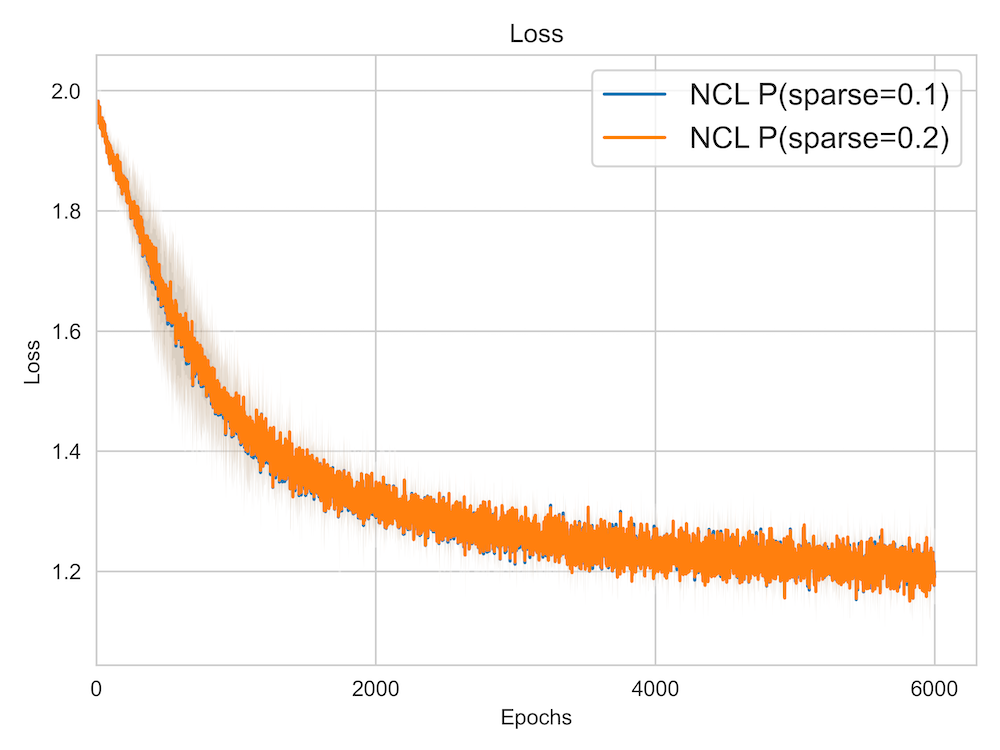}
  \end{minipage}
  \caption{(NCL [Eq. \ref{eq:loss_non-CL}] with warm-start and row normalized encoders $\bW^o$, $\bW^t$ works even in the absence of a prediction head) Minimum Max-Cosine (left) and Loss (right) curves for non-contrastive loss (NCL) with warm-started linear encoder. Reported numbers are averaged over 3 different runs. The warm start parameter $c=2$ and probability of random masking is $0.9$}
 \label{fig:ncl-row-norm-warm-start}
\end{figure}

\paragraph{NCL with warm start, and row or column normalized encoder learns good representations} Another interesting observation from our experiments is that we can skip a linear predictor if the encoder is sufficiently "warm-started", and the rows or columns of the encoder $\bW^o$ and $\bW^t$ are normalized after every gradient descent update. In Figure \ref{fig:ncl-row-norm-warm-start}, where we normalize the rows of encoder, we observe that \textit{Minimum Max-Cosine} increases on average from around $0.62$ to $0.85$ as the training proceeds. We include some additional results with column normalization of the weights of encoder in Appendix \ref{sec:appendix-training-dynamics}. Note that we also normalize the outputs of the encoder in these experiments.

\paragraph{Robustness across architectures and loss functions} In the previous subsections, we used a relatively simple architecture for contrastive and non-contrastive learning. Our empirical observations in the previous sections generalize to the cases where we use a slightly more complex architecture and a different variant of loss. Table \ref{tab:table-simsiam-simclr} lists the cosine values learnt by two architectures  \textit{Simplified-SimCLR} and \textit{Simplified-Simsiam}. \textit{Simplified-SimCLR} uses a linear projector in addition to a linear encoder, and optimizes the normalized temperature-scaled cross entropy loss (\cite{chen2020simple}). \textit{Simplified-Simsiam} architecture uses a linear predictor, and minimizes negative cosine similarity as proposed in (\cite{chen2021exploring}). The encoders of both the architectures are randomly initialized. Both the architectures use batch-normalization layer and ReLU activation after the encoder. We observe that \textit{Simplified-SimCLR} successfully recovers the  ground-truth support but \textit{Simplified-Simsiam} fails to do so, as evident by its low value of \textit{Minimum Max-Cosine}. This is consistent with our observations in previous subsections.

\vspace{-2mm}
\section{Conclusion}
\vspace{-3mm}
In this work, we present some interesting theoretical results that highlight some fundamental differences in the representations learnt by contrastive learning and non-contrastive learning loss objectives in the sparse coding model setting, provided that the encoder architecture is fixed. We use a simple dual network architecture with ReLU activation. We theoretically prove that in this setting, non-contrastive  loss objective has an ample amount of non-collapsed global minima that might not learn the correct ground truth features and therefore fail to recover the correct ground truth dictionary matrix. In contrast, optimizing the contrastive loss objective guarantees recovery of the correct ground truth dictionary matrix. We provide additional empirical  results which show that even non-contrastive training process cannot avoid these bad non-collapsed global minima. We then empirically show that using  warm-start and a linear predictor aids non-contrastive loss to learn the correct ground truth representations. While we worked in a relatively simple setting, we unearthed some fundamental key differences in the quality of representations learnt by contrastive and non-contrastive loss objectives. We hope that these results will motivate further studies to understand the qualitative and quantitative differences in representations learnt by contrastive and non-contrastive loss objectives.

\subsubsection*{ACKNOWLEDGEMENTS}
The authors would like to thank Bingbin Liu for helpful comments and a careful proofread. 

\bibliography{references}
\bibliographystyle{ref_style}


\clearpage
\appendix

\thispagestyle{empty}

\def\toptitlebar{
\hrule height4pt
\vskip .25in}

\def\bottomtitlebar{
\vskip .25in
\hrule height1pt
\vskip .25in}

\newcommand{\makesupplementtitle}{\hsize\textwidth
    \linewidth\hsize \toptitlebar {\centering
        {\Large\bfseries Supplementary Material \par}}
    \bottomtitlebar}

\makesupplementtitle


\renewcommand{\theequation}{\thesection.\arabic{equation}}

\section{LANDSCAPE ANALYSIS}
\label{sec:appendix-landscape}

\subsection{Proof for Theorem~\ref{thm:byolbad}}

\begin{theorem*}[Landscape of contrastive and non-contrastive loss (Theorem~\ref{thm:byolbad} restated)]
Let the data generating process and network architecture be specified as in Section \ref{sec:setup}, 
and consider the setting $d=p=m, \bm{M} = \mathbf{I}$, and $\sigma^2_0 = 0$.
Moreover, let the latent vectors $\{\bm{z}_j \in \mathbb{R}^{d \times 1}\}_{j=1}^d$ be chosen by a uniform distribution over 1-sparse vectors. 

Then, we have:
\begin{itemize}
    \item[(a)] 
    $\mathcal{U}_\ge \subseteq \argmin_{W \in \mathcal{U}} L_{\text{non-CL}}(W, W)$
    \item[(b)] $\argmin_{W \in \mathcal{U}} L_{\text{CL}}(W, W)$ 
    is the set of permutation matrices. 
\end{itemize}
\end{theorem*}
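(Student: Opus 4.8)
Because $\bm M = \bm I$, $\sigma_0^2 = 0$, and the latents are uniform over $\{\bm e_1,\dots,\bm e_d\}$, a data point is simply $\bm x = \bm e_k$ for $k$ uniform in $[d]$, and the masked views are $\bm D_i \bm e_k = s_i \bm e_k$ with $s_i \sim \mathrm{Bernoulli}(\alpha)$ (the $\alpha$ from the augmentation). Writing $\bm v_k := \mathrm{ReLU}(W_{*k})$, each encoder output on a view is $s_i \bm v_k$, so every inner product appearing in either loss has the form $s_i s_j \langle \bm v_k, \bm v_{k'}\rangle$. Note $\bm v_k \ge 0$ and $\|\bm v_k\|_2 \le \|W_{*k}\|_2 = 1$, with equality iff $W_{*k}\ge 0$. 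Thus both losses are functions of the Gram matrix $(\langle \bm v_k,\bm v_{k'}\rangle)_{k,k'}$ only.

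\textbf{Part (a).} Directly, $L_{\text{non-CL}}(W,W) = 2 - 2\,\mathbb E \langle \mathrm{ReLU}(W\bm D_1\bm x),\mathrm{ReLU}(W\bm D_2\bm x)\rangle = 2 - \frac{2\alpha^2}{d}\sum_{k=1}^d \|\bm v_k\|_2^2 \ge 2 - 2\alpha^2$, since each $\|\bm v_k\|_2^2 \le 1$. Equality holds iff $\|\bm v_k\|_2 = 1$ for all $k$, i.e.\ iff every column of $W$ is non-negative, i.e.\ $W \in \mathcal U_\ge$. Hence $\mathcal U_\ge$ is exactly the argmin, which in particular contains $\mathcal U_\ge$, giving (a).

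\textbf{Part (b).} Taking $|\mathbb B| \to \infty$: representations are bounded (norm $\le 1$), so the law of large numbers applied to the negative terms gives, up to an additive constant independent of $W$ (namely $\log|\mathbb B|$), and after averaging out $s_1, s_2, s_3$ and the negative latent $k'$,
\[
L_{\text{CL}}(W,W) \;=\; \frac{\alpha}{d}\sum_{k=1}^{d}\left[-\tau\alpha\|\bm v_k\|_2^2 + \log\!\left((1-\alpha) + \frac{\alpha}{d}\sum_{k'=1}^{d} e^{\tau\langle \bm v_k,\bm v_{k'}\rangle}\right)\right] + \mathrm{const};
\]
(the event $s_1 = 0$ makes the encoder output zero and contributes only the constant). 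Since $\langle \bm v_k,\bm v_{k'}\rangle \ge 0$ we have $e^{\tau\langle \bm v_k,\bm v_{k'}\rangle}\ge 1$ for $k'\ne k$, so each summand is at least $f(\|\bm v_k\|_2^2)$ where $f(t) := -\tau\alpha t + \log(1 - \tfrac{\alpha}{d} + \tfrac{\alpha}{d}e^{\tau t})$, with equality iff $\bm v_k \perp \bm v_{k'}$ for every $k'\ne k$. A one-variable calculus check shows $f$ is strictly decreasing on $[0,1]$, hence uniquely minimized at $t=1$. Therefore $L_{\text{CL}}(W,W) \ge \alpha f(1)$ with equality iff $\|\bm v_k\|_2 = 1$ for all $k$ \emph{and} the $\bm v_k$ are pairwise orthogonal. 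Translating back: $\|\bm v_k\|_2 = 1 = \|W_{*k}\|_2$ forces $W_{*k} = \bm v_k \ge 0$, so $W$ is a non-negative matrix with orthonormal columns; since $d=p=m$, such a $W$ is a non-negative orthogonal matrix, hence (a standard fact) a permutation matrix. Conversely every permutation matrix attains both equality conditions. So $\argmin_{W\in\mathcal U} L_{\text{CL}}(W,W)$ is exactly the set of permutation matrices.

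\textbf{Main obstacle.} The crux is the last paragraph's one-variable analysis of $f$: the alignment term $-\tau\alpha t$ pushes $\|\bm v_k\|$ toward $1$, while the uniformity (log-sum-exp) term is driven by cross inner products which, because ReLU outputs are non-negative, can never go below $0$; reconciling these so that ``$f$ is minimized at $t=1$'' holds requires the temperature to be moderate — concretely $\tau \le \log\frac{d-\alpha}{1-\alpha}$ — and I expect the precise theorem to carry (or implicitly assume) such a hypothesis on $\tau$. A secondary, routine technical point is rigorously justifying the $|\mathbb B|\to\infty$ limit and isolating the diagonal ($k'=k$) ``self-collision'' term in the log-sum-exp; boundedness of the representations makes this straightforward.
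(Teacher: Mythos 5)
Part (a) is essentially the paper's proof: rewrite the loss as $2 - \frac{2\alpha^2}{d}\sum_k\|\mathrm{ReLU}(W_{*k})\|_2^2$, bound each summand by $\|W_{*k}\|_2^2=1$ via $(\mathrm{ReLU}(x))^2\le x^2$, and note equality iff $W_{*k}\ge 0$. (You in fact get that $\mathcal U_\ge$ \emph{equals} the argmin, a slight strengthening of the stated $\subseteq$.)

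Part (b) takes a genuinely different route, and it has the gap you yourself flag. You reduce each summand to $f(\|\bm v_k\|_2^2)$ with $f(t) = -\tau\alpha t + \log\bigl(1-\alpha/d + (\alpha/d)e^{\tau t}\bigr)$ and need $f$ to be decreasing on $[0,1]$; as you compute, this holds iff $e^\tau < (d-\alpha)/(1-\alpha)$. The theorem carries no such hypothesis, so as written your argument does not establish it --- indeed for $\tau$ above that threshold your lower bound is minimized at $t=0$ (e.g.\ $W=-I$, the collapsed encoding) rather than $t=1$. The paper's proof is temperature-free for a structural reason: their surrogate keeps the positive term $\mathbb A_i^+ = \exp\{\tau S^+\}$ in \emph{both} numerator and denominator, so each summand is $\log(1+\mathbb A_i^-/\mathbb A_i^+)$, and they only need $\mathbb A_i^+\le e^{\tau \bm D_{1,ii}\bm D_{2,ii}}$ (tight iff $W_{*i}\ge 0$) and $\mathbb A_i^-\ge d-1$ (tight iff the columns are orthogonal); these are achievable simultaneously, so no alignment-vs-uniformity tension and no constraint on $\tau$ appears. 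Your $|\mathbb B|\to\infty$ computation, by contrast, (correctly for that limit) drops $\exp\{\tau S^+\}$ from the denominator and replaces it with the independently-masked self-collision $\frac{\alpha}{d}e^{\tau\|\bm v_k\|_2^2}$, which no longer cancels the numerator --- exactly where the $\tau$-dependence enters. The two surrogates genuinely disagree at large $\tau$ (your formula is the more literal reading of the stated $|\mathbb B|\to\infty$ limit, which suggests the theorem may implicitly need a temperature bound); but measured against the paper's own argument, the missing idea is to keep the positive pair as the $j=i$ term inside the softmax denominator rather than marginalizing it away.
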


\label{sec:proof:non-CL}
\paragraph{Notation}: Recall that $\mathcal{U}:= \{V \in \mathbb{R}^{m \times p}: \|V_{*j}\|_2 = 1\}$; $\mathcal{U}_{\geq}:=\{V \in \mathcal{U}, V \geq 0\}$ 
We will also denote by $\bm{e}_j \in \mathbb{R}^{p \times 1}$ the vector which is one at $j$-th entry and zero elsewhere.

\begin{proof}
We proceed to claim (a) first. 
By the definition of $L_{non-CL}$, we have
\begin{align*}
    \mbox{argmin}_{\bm{W}\in\mathcal{U}} L_{non-CL}(\bm{W}, \bm{W}) &= \mbox{argmin}_{\bm{W}\in\mathcal{U}}    \sum_{j\in[d]} \left(2 - 2 \mathbb{E}_{\bm{D}_1, \bm{D}_2}\langle Relu(\bm{W} \bm{D}_1 \bm{e}_{j}) ,  Relu(\bm{W}\bm{D}_2 \bm{e}_{j})\rangle\right) \\ 
    &= \sum_{j\in[d]}\mbox{argmax}_{\bm{W}\in\mathcal{U}} \mathbb{E}_{\bm{D}_1, \bm{D}_2}\langle Relu(\bm{W} \bm{D}_1 \bm{e}_{j}) ,  Relu(\bm{W}\bm{D}_2 \bm{e}_{j}) \rangle \\
\end{align*}

Furthermore, we have 
\begin{align*}
    \langle Relu(\bm{W} \bm{D}_1 \bm{e}_{j}) ,  Relu(\bm{W}\bm{D}_2 \bm{e}_{j}) \rangle  &= \bm{D}_{1,jj} \bm{D}_{2,jj} \langle ReLU(\bm{W}_{*j}), ReLU(\bm{W}_{*j}) \rangle \\ 
    &\leq^{(1)} \bm{D}_{1,jj} \bm{D}_{2,jj} \|\bm{W}_{*j}\|^2 \\
    &= \bm{D}_{1,jj} \bm{D}_{2,jj} 
\end{align*}
where (1) follows since $(ReLU(x))^2 \leq x^2, \forall x \in \mathbb{R}$ and the last equality follows since for any $\bm{W} \in \mathcal{U}$, we have $\|\bm{W}_{*j}\|=1$.  
Hence, 
\begin{align*}
\max_{\bm{W}\in\mathcal{U}} \sum_{j\in[d]} \mathbb{E}_{\bm{D}_1, \bm{D}_2}\langle Relu(\bm{W} \bm{D}_1 \bm{e}_{j}) ,  Relu(\bm{W}\bm{D}_2 \bm{e}_{j}) \rangle &\leq \max_{\bm{W}\in\mathcal{U}} \sum_{j\in[d]} \mathbb{E}_{\bm{D}_1, \bm{D}_2} \bm{D}_{1,jj} \bm{D}_{2,jj} =
d \alpha^2
\end{align*}
Moreover, for any $\bm{W} \geq 0$, the inequality (1) is an equality, as $Relu(x) = x$ --- thus any such $\bm{W}$ is a maximum of the objective, which is what we wanted to show. 

\end{proof}

Next, we proceed to (b). 

Recall our definition of contrastive loss:
\begin{align}
    &L_{CL}(\bW) := - \mathbb{E}\left[ \log \frac{\exp\{\tau S^{+} \} }{ \exp\{\tau   S^{+} \} + \sum_{\bm{x}' \in \mathbb{B}} \exp\{\tau S^{-} \}} \right].\nonumber
\end{align}
Assume that $|\mathbb{B}|\to \infty$, according to law of large numbers, we have 
\begin{align}
    &L_{CL}(\bW) \approx - \mathbb{E}\left[ \log \frac{\exp\{\tau S^{+} \} }{ \exp\{\tau   S^{+} \} +  B\mathbb{E}\exp\{\tau S^{-} \}} \right],\nonumber
\end{align}
where $B:=|\mathbb{B}|$.

Plug in the definition of $S^{+}$ and $S^{-}$, we have
\begin{align}\label{simclrloss-old}
    &L_{CL}(\bW) \approx -\frac{1}{d}\sum_{i \in [d]} \mathbb{E}_{D_1, D_2} \left[ \log \left[ \frac{\exp\{\tau\langle ReLU(\bm{W} \bm{D}^i_1 \bm{e}_{i}),  ReLU(\bm{W} \bm{D}^i_2 \bm{e}_{i})\} \rangle }{ \sum_{j \in [d]}\exp\{\tau  \langle ReLU(\bm{W} \bm{D}^i_1 \bm{e}_{i}),  ReLU(\bm{W} \bm{D}^j_2 \bm{e}_j) \rangle \}} \right] \right] +\log{\frac{B}{d}}
\end{align}

We can drop the constant $\log{\frac{B}{d}}$ and $\frac{1}{d}$, and consider the surrogate loss function
\begin{equation} \label{simclrloss-genm}
    \widetilde{L}_{CL}(\bm{W}) := -\sum_{i \in [d]} \mathbb{E}_{D_1, D_2} \left[ \log \left[ \frac{\exp\{\tau\langle ReLU(\bm{W} \bm{D}^i_1 \bm{e}_{i}),  ReLU(\bm{W} \bm{D}^i_2 \bm{e}_{i})\} \rangle }{ \sum_{j \in [d]}\exp\{\tau  \langle ReLU(\bm{W} \bm{D}^i_1 \bm{e}_{i}),  ReLU(\bm{W} \bm{D}^j_2 \bm{e}_j) \rangle \}} \right] \right].
\end{equation}
Note that the minimizer of the $L_{CL}$ is exactly the same as those of $\widetilde{L}_{CL}$.

Denote 
\begin{align*}
    &\mathbb{A}^{+}_{i} : = \exp\{\tau  \langle ReLU(\bm{W} \bm{D}^i_1 \bm{e}_{i}),  ReLU(\bm{W} \bm{D}^i_2 \bm{e}_{i}) \rangle\}, \\
    &\mathbb{A}^{-}_i : = \sum_{j\neq i } \exp\{\tau  \langle ReLU(\bm{W} \bm{D}^i_1 \bm{e}_{i}),  ReLU(\bm{W} \bm{D}^j_2 \bm{e}_j) \rangle \},
\end{align*}
then the term inside the summation of \eqref{simclrloss-genm} corresponding to $i$ can be rewritten as
\begin{equation*}
  \mathbb{B}_i := \log \frac{1}{1+    \mathbb{A}^{-}_i \big / \mathbb{A}^{+}_i}.
\end{equation*}

First, we note that again we have: 
\begin{align}
    \langle Relu(\bm{W} \bm{D}_1 \bm{e}_{i}) ,  Relu(\bm{W}\bm{D}_2 \bm{e}_{i}) \rangle  &= \bm{D}_{1,ii} \bm{D}_{2,ii} \langle ReLU(\bm{W}_{*i}), ReLU(\bm{W}_{*i}) \rangle \nonumber \\ 
    &\leq^{(1)} \bm{D}_{1,ii} \bm{D}_{2,ii} \|\bm{W}_{*i}\|^2 \label{eq:nonnegative}\\
    &= \bm{D}_{1,ii} \bm{D}_{2,ii} \label{eq:mainineq} 
\end{align}
where (1) follows since $(ReLU(x))^2 \leq x^2, \forall x \in \mathbb{R}$ and the last equality follows since for any $\bm{W} \in \mathcal{U}$, we have $\|\bm{W}_{*i}\|=1$.

Additionally, since $ReLU(x) \geq 0$ we have 
\begin{equation}
     \langle ReLU(\bm{W} \bm{D}^i_1 \bm{e}_{i}),  ReLU(\bm{W} \bm{D}^j_2 \bm{e}_j) \rangle \geq 0 
     \label{eq:orthogonal}
\end{equation}

Let us denote $g(\bm{D}^i_1, \bm{D}^i_2):= \exp\{\tau \left(\sum_j (\bm{D}^i_1)_{jj}\right) \left(\sum_j (\bm{D}^i_2)_{jj} \right)\}$. We have by \eqref{eq:mainineq} 
\begin{equation}
\widetilde{L}_{CL}(\bm{W}) \geq -\sum_{i \in [d]}\mathbb{E}_{D_2, D_2}\log\left(\frac{1}{1 + (d-1)/g(\bm{D}^i_1, \bm{D}^i_2)}\right)
\label{eq:boundmin}
\end{equation}

We claim that the only $\bm{W}$ for which \eqref{eq:boundmin} is satisfied with an equality (thus, they are minima of $L_{CL}$) are $\bm{W} = \bm{P}$, for a permutation matrix $\bm{P}$. 

First, we show that such $\bm{W}$ have to satisfy $\bm{W}_{*i} \geq 0, \forall i$, where the inequality is understood to apply entrywise. 
Indeed, for the sake of contradiction, assume otherwise. Then, consider an $i \in [d]$, and a mask $\bm{D}^i_1$, s.t. $\bm{D}^i_{1,ii}=1$ . (Note, such a mask has a non-zero probability of occurring.) For this choice of $i, \bm{D}^i_1$, we have 
\begin{align}
    \|ReLU(\bm{W} \bm{D}^i_1 \bm{e}_i)\| = \|ReLU(\bm{W}_{*i})\| < \|\bm{W}_{*i}\|
\end{align}
since at least one element of $\bm{W}_{*i}$ is negative. Thus, \eqref{eq:nonnegative} is a strict inequality, and thus \eqref{eq:boundmin} cannot yield an equality. 

Next, we show that  $\forall i \neq j, \langle \bm{W}_{*i}, \bm{W}_{*j} \rangle = 0$ (i.e. the vectors $\bm{W}_{*i}$ are orthogonal). 
Again, we proceed by contradiction. Let us assume that there exist $i,j$, s.t. 
$\langle \bm{W}_{*i}, \bm{W}_{*j} \rangle > 0$. (Note, as we concluded before, all coordinates of $\bm{W}_{*i}$ have to be nonnegative, so if the above inner product is non-zero, it has to be nonnegative.) 
Consider a pair of $i\neq j \in [d]$, and two masks $\bm{D}^i_1, \bm{D}^j_2$, s.t.  $\bm{D}^i_{1,ii}=\bm{D}^j_{2,jj}=1$. (Again, such masks occur with non-zero probability.) 
For such choices of $i,j, \bm{D}^i_1,\bm{D}^j_2$, we have 
\begin{align*}
     \langle ReLU(\bm{W} \bm{D}^i_1 \bm{e}_{i}),  ReLU(\bm{W} \bm{D}^j_2 \bm{e}_j) \rangle =
    \langle \bm{W}_{*i}, \bm{W}_{*j} \rangle > 0
\end{align*}
 which implies that \eqref{eq:orthogonal} cannot be satisfied with an inequality. 
Thus, again \eqref{eq:boundmin} cannot yield an equality.

Thus, we concluded that, in order to achieve the minimum of $\widetilde{L}_{CL}$ over $\bm{W}\in\mathcal{U}$, the vectors $\bm{W}_{*i}$ have to be nonnegative and orthonormal for all $i$. Then, we claim this means $\bm{W}$ is a permutation matrix. Indeed, assume otherwise --- i.e. that there exists some row, for which two columns $i,i'$ has a non-zero element in that row. But then, these columns have a stricly positive inner product, so cannot be orthogonal. Thus, $\bm{W}$ has to be a permutation matrix. 

On the other hand, for $\bm{W}$ a permutation matrix, \eqref{eq:orthogonal} and \eqref{eq:nonnegative} is an equality, and consequently \eqref{eq:boundmin} is an equality. Thus, any permutation matrix is a minimum of $L_{CL}$. Altogether, this concludes the proof of the theorem.

\subsection{Additional Empirical results for Landscape analysis}
\begin{figure}[!h]
  \centering
  \begin{minipage}[b]{0.23\textwidth}
    \includegraphics[width=\textwidth]{sections/plots/min-max-cosine-k-sparse-bn-relu-cl-vs-ncl.png}
  \end{minipage}
  \hfill
  \begin{minipage}[b]{0.23\textwidth}
    \includegraphics[width=\textwidth]{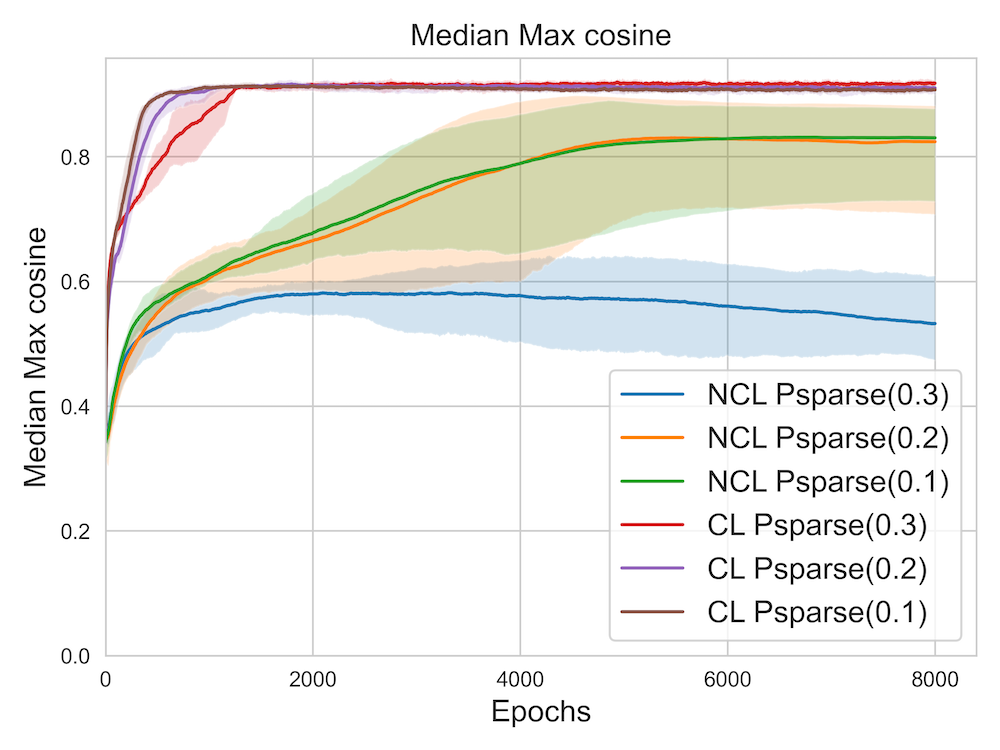}
  \end{minipage}
  \hfill
  \begin{minipage}[b]{0.23\textwidth}
    \includegraphics[width=\textwidth]{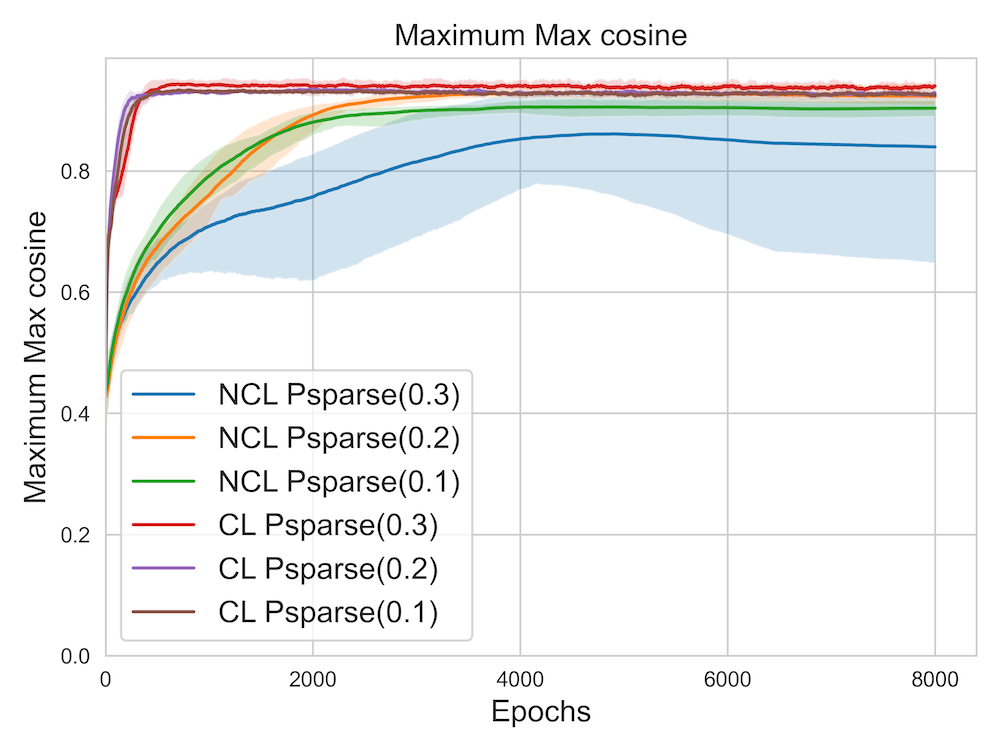}
  \end{minipage}
  \hfill
  \begin{minipage}[b]{0.23\textwidth}
    \includegraphics[width=\textwidth]{sections/plots/loss-k-sparse-bn-relu-cl-vs-ncl.png}
  \end{minipage}
  \\~\\
  \caption{(NCL vs CL with k-sparse latents - Overparametrized network) (left-to-right) Minimum Max-Cosine,  Median Max-Cosine, Maximum Max-Cosine, and Loss curves for non-contrastive loss (NCL) and contrastive loss (CL) on an architecture with a randomly initialized linear encoder. We normalize the representations before computing the loss and use a symmetric ReLU after the linear encoder. Reported numbers are averaged over 5 different runs. The shaded area represents the maximum and the minimum values observed across those 5 runs. We use p = 50, d=10, m=10.}  \label{fig:landscape-p-50-m10-d10-k-sparse-overparam}
\end{figure}

\begin{figure}[!h]
  \centering
  \begin{minipage}[b]{0.23\textwidth}
    \includegraphics[width=\textwidth]{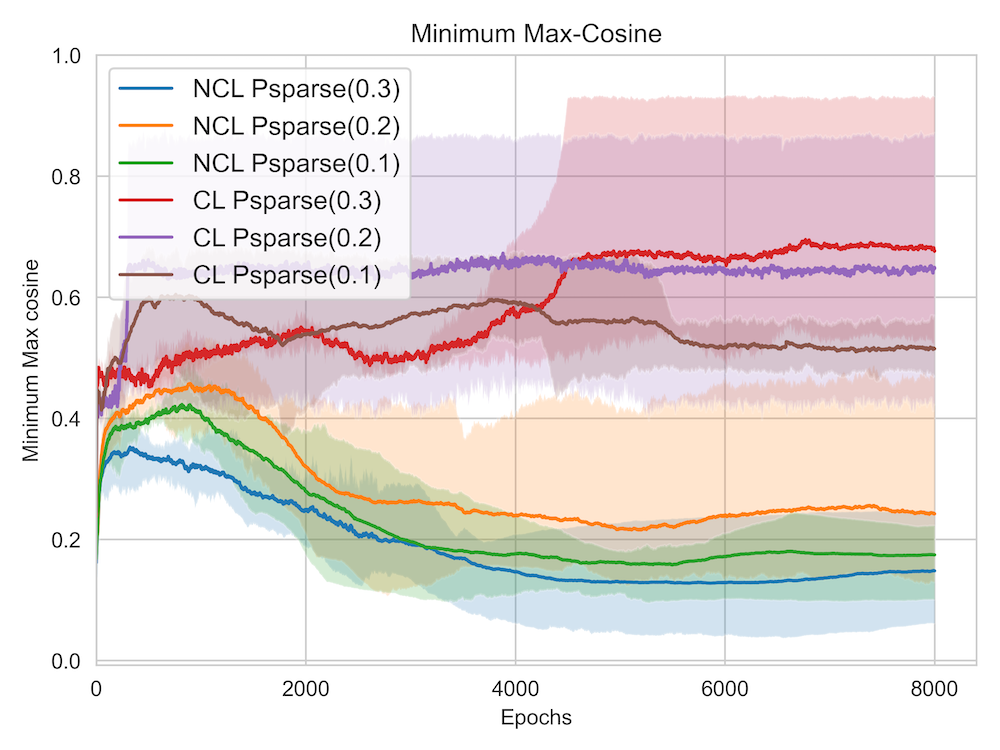}
  \end{minipage}
  \hfill
  \begin{minipage}[b]{0.23\textwidth}
    \includegraphics[width=\textwidth]{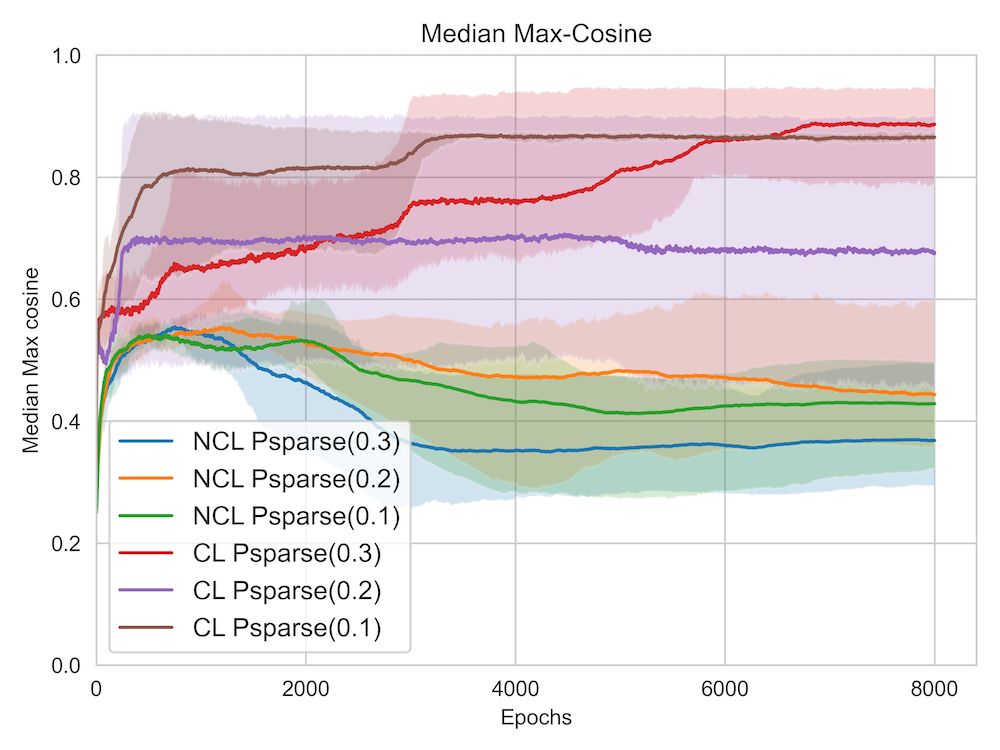}
  \end{minipage}
  \hfill
  \begin{minipage}[b]{0.23\textwidth}
    \includegraphics[width=\textwidth]{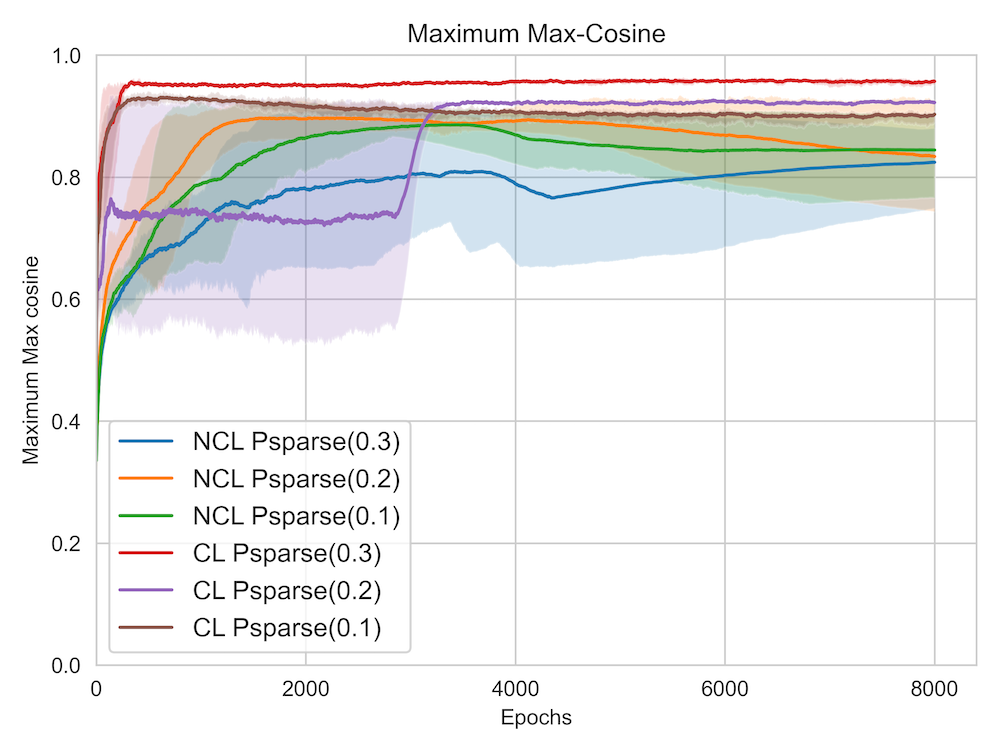}
  \end{minipage}
  \hfill
  \begin{minipage}[b]{0.23\textwidth}
    \includegraphics[width=\textwidth]{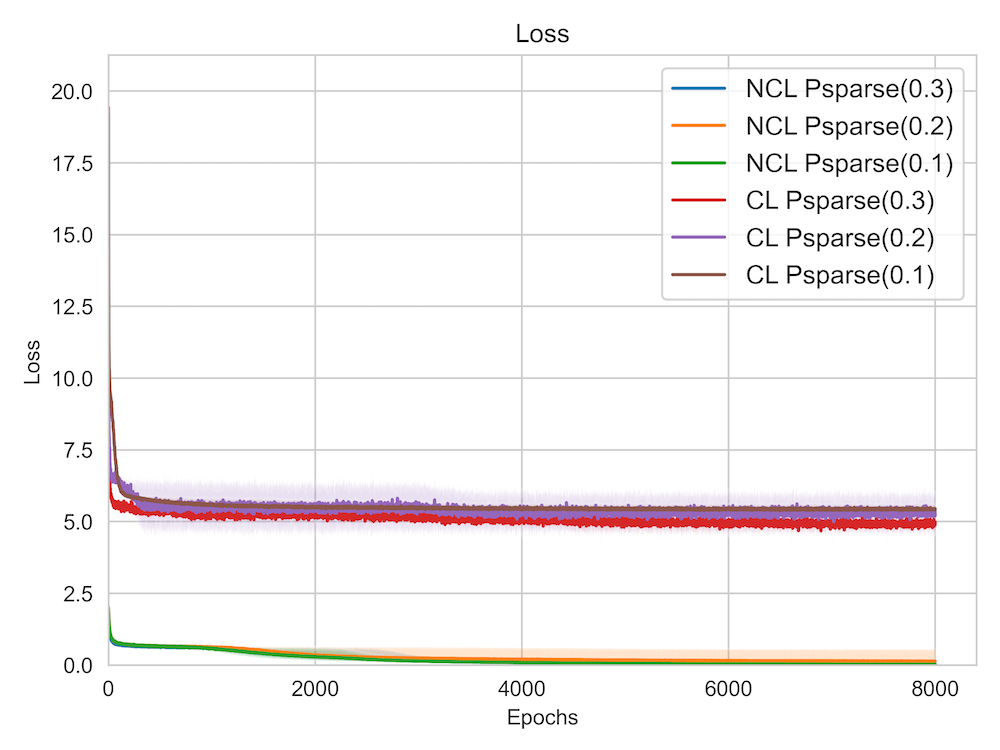}
  \end{minipage}
    \\~\\
  \caption{(NCL vs CL with k-sparse latents) (left-to-right) Minimum Max-Cosine,  Median Max-Cosine, Maximum Max-Cosine, and Loss curves for non-contrastive loss (NCL) and contrastive loss (CL) on an architecture with a randomly initialized linear encoder. We normalize the representations before computing the loss and use a symmetric ReLU after the linear encoder. Reported numbers are averaged over 5 different runs. The shaded area represents the maximum and the minimum values observed across those 5 runs. We use p=50, d=10, m=10.}  \label{fig:landscape-p-50-m10-d10-k-sparse-z}
\end{figure}

\begin{figure}[!h]
  \centering
  \begin{minipage}[b]{0.23\textwidth}
    \includegraphics[width=\textwidth]{sections/plots/ncl-ml-mmc-k-sparse.png}
  \end{minipage}
  \hfill
    \begin{minipage}[b]{0.23\textwidth}
    \includegraphics[width=\textwidth]{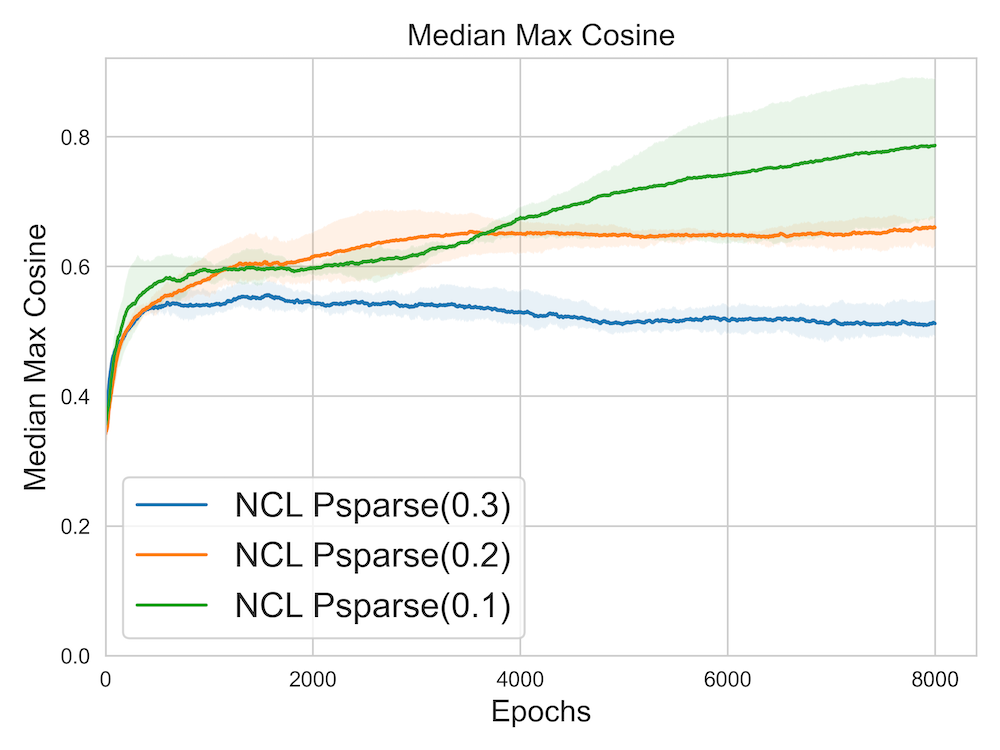}
  \end{minipage}
  \hfill
  \begin{minipage}[b]{0.23\textwidth}
    \includegraphics[width=\textwidth]{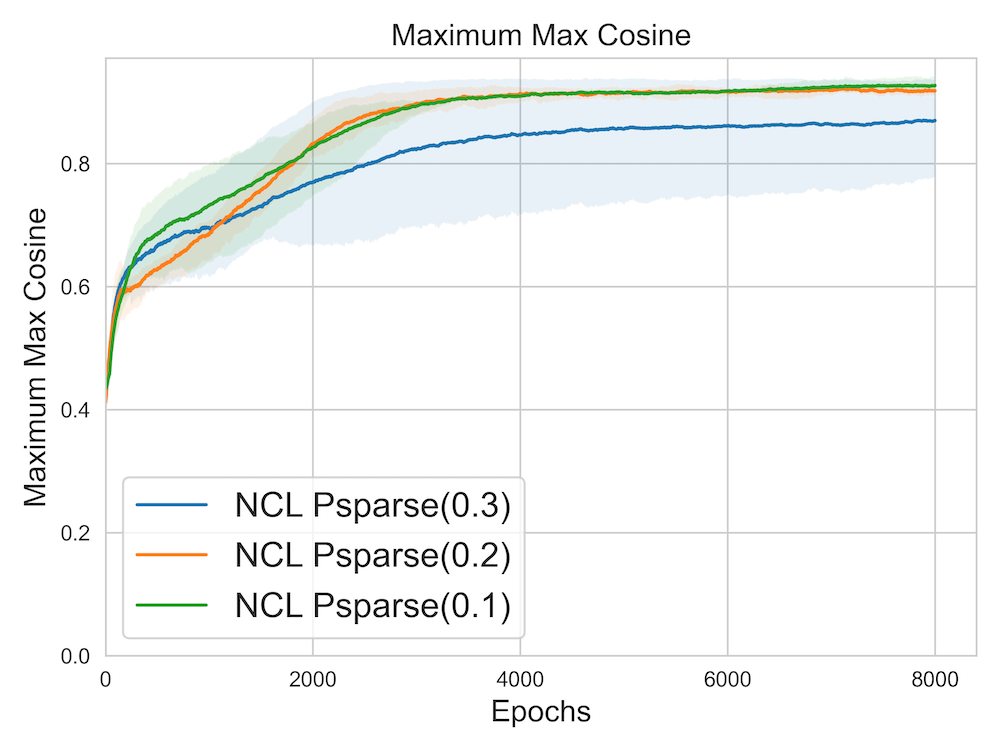}
  \end{minipage}
  \hfill
  \begin{minipage}[b]{0.23\textwidth}
    \includegraphics[width=\textwidth]{sections/plots/ncl-ml-loss-k-sparse.png}
  \end{minipage}
    \\~\\
  \caption{(NCL with 2 layered encoder does not work) (left to right) Minimum Max-Cosine, Median Max-Cosine, Maximum Max-Cosine and Loss curves for non-contrastive loss (NCL) with a two-layered linear encoder with batch-normalization and symmetric ReLU activation. Psparse indicates $Pr(\bm{z}_i = \pm 1), i \in [d]$ in the sparse coding vector $\bm{z}$. Reported numbers are averaged over 5 different runs. The shaded area represents the maximum and the minimum values observed across those 5 runs. We use p = 50, d=10, m=50.}
\end{figure}

\subsubsection{Non contrastive loss collapses to bad minima}

The results have been summarized in Table \ref{tab:table-basic-no-relu-no-bn}. We observe that the non-contrastive learning model \textit{NCL-basic} fails to achieve good values for the metric \textit{Minimum max-cosine} while \textit{NCL-linear} fails to achieve good values on all the three metrics. In practice, we observe a collapse in \textit{NCL-linear} as the weights of the encoder are driven to 0. This indicates that the simple encoder when training with non-CL loss objective fails to recover the correct ground-truth dictionary matrix $\bm{M}$. On the other hand, we observe that the encoders trained with CL loss objective  \textit{CL-linear} and \textit{CL-basic} have high values for all the three metrics which indicates that they succeed in recovering most of the ground truth support. This raises a question \textit{Does introducing more fully connected layers in the encoder help non-CL loss objective to learn better representations?} We test this in our model \textit{NCL-basic-2L} that uses a two-layered encoder with batch norm and symmetric ReLU. 

\begin{table*}[th!]
\centering
\begin{tabular}{cccccccc}
\toprule
Model & BN & SymReLU & Pr(sparse) & \multicolumn{1}{p{2.5cm}}{\centering Maximum \\ Max cosine $\uparrow$} & \multicolumn{1}{p{2.5cm}}{\centering Median \\ Max cosine $\uparrow$ }  & \multicolumn{1}{p{2.5cm}}{\centering Minimum \\ Max cosine $\uparrow$}  \\
\midrule
NCL-linear & $\times$ & $\times$ & 0.1 & $0.12 \pm 0.03$ & $0.10 \pm 0.01$ & $0.08 \pm 0.1$ \\
NCL-linear & $\times$ & $\times$ & 0.2 & $0.11 \pm 0.02$ & $0.09 \pm 0.03 $ & $0.08 \pm 0.3$ \\
NCL-linear & $\times$ & $\times$ & 0.3 & $0.31 \pm 0.006$ & $0.29 \pm 0.007 $ & $0.28 \pm 0.002$ \\
\midrule
CL-linear & $\times$ & $\times$ & 0.1 & $0.77 \pm 0.01$ & $0.69 \pm 0.01$ & $0.58 \pm 0.01$ \\
CL-linear & $\times$ & $\times$ & 0.2 & $0.78 \pm 0.06$ & $0.68 \pm 0.02$ & $0.59 \pm 0.01$ \\
CL-linear & $\times$ & $\times$ & 0.3 & $0.78 \pm 0.04$ & $0.67 \pm 0.01$ & $0.6 \pm 0.03$ \\
\midrule
NCL-basic & $\checkmark$ & $\checkmark$ & 0.1 & $0.9 \pm 0.01 $ & $0.83 \pm 0.06$ & $0.52 \pm 0.06$ \\
NCL-basic & $\checkmark$ & $\checkmark$ & 0.2 & $0.92 \pm 0.01$ & $0.82 \pm 0.07$  & $0.51 \pm 0.03$ \\
NCL-basic & $\checkmark$ & $\checkmark$ & 0.3 & $0.84 \pm 0.11$ & $0.58 \pm 0.04$ & $0.42 \pm 0.06$ \\
\midrule
NCL-basic-2L & $\checkmark$ & $\checkmark$ & 0.1 & $0.94 \pm 0.01$ & $0.91 \pm 0.01$ & $0.66 \pm 0.21$ \\
NCL-basic-2L & $\checkmark$ & $\checkmark$ & 0.2 & $0.94 \pm 0.009$ & $0.92 \pm 0.002$ & $0.51 \pm 0.11$ \\
NCL-basic-2L & $\checkmark$ & $\checkmark$ & 0.3 & $0.83 \pm 0.16$ & $0.48 \pm 0.04$ & $0.36 \pm 0.01$ \\
\midrule
CL-basic & $\checkmark$ & $\checkmark$ & 0.1 & $0.93 \pm 0.007$ & $0.91 \pm 0.005$  & $0.88 \pm 0.002$ \\
CL-basic & $\checkmark$ & $\checkmark$ & 0.2 & $0.93 \pm 0.002$  & $0.91 \pm 0.009$ & $0.79 \pm 0.14$ \\
CL-basic & $\checkmark$ & $\checkmark$ & 0.3 & $0.93 \pm 0.01$ & $0.86 \pm 0.07$ & $0.76 \pm 0.18$\\
\bottomrule
\end{tabular}
\caption{\label{tab:table-basic-no-relu-no-bn} Comparison of the cosine values learnt by contrastive vs non-contrastive losses. The column BN indicates the presence of a batch normalization layer after linear layer in the encoder. SymReLU indicates whether the encoder uses symmetric ReLU activation. Pr(sparse) indicates the probability $Pr(\bm{z}_i = \pm 1), i \in [d]$ in the sparse coding vector $\bm{z}$. We report mean $\pm$ std. deviation over 5 runs. $\uparrow$ symbol indicates that the higher value is better for the associated metric.} 
\end{table*}

\section{TRAINING DYNAMICS}

In this section, we include additional empirical results that elucidate the training dynamics of contrastive and non-contrastive training. 

\label{sec:appendix-training-dynamics}
\subsection{Additional Empirical results for Training dynamics}
\subsubsection{Non-contrastive model needs a warm start and a linear prediction head to learn better representations}

\begin{figure}[!ht]
  \centering
  \begin{minipage}[b]{0.23\textwidth}
    \includegraphics[width=\textwidth]{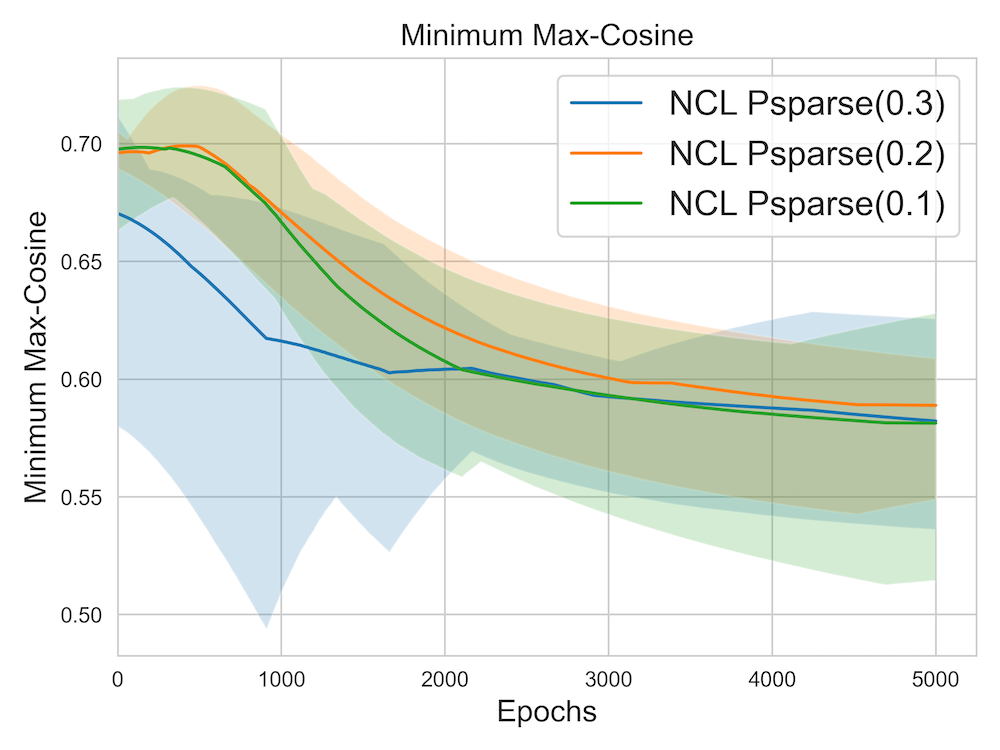}
  \end{minipage}
    \hfill
  \begin{minipage}[b]{0.23\textwidth}
    \includegraphics[width=\textwidth]{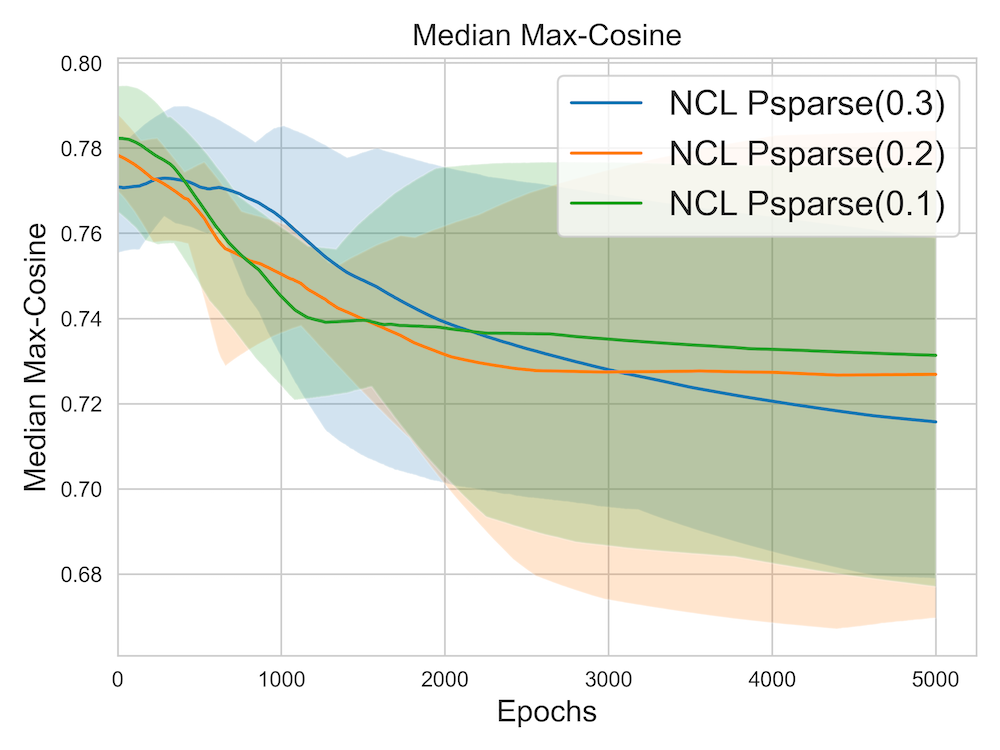}
  \end{minipage}
  \hfill
  \begin{minipage}[b]{0.23\textwidth}
    \includegraphics[width=\textwidth]{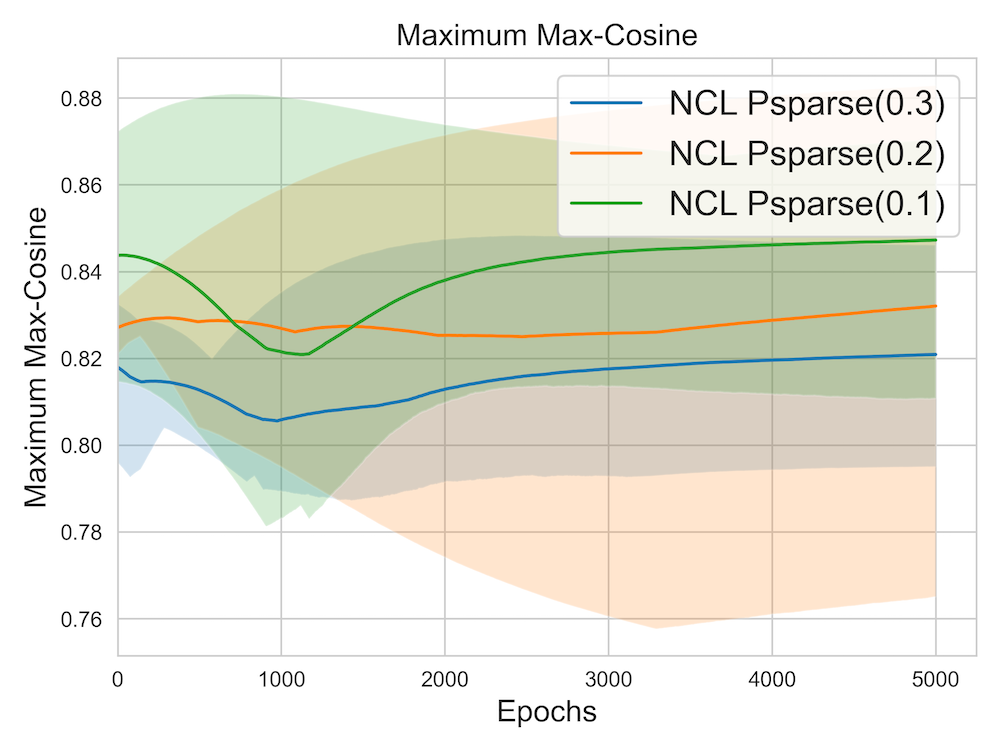}
  \end{minipage}
   \hfill
  \begin{minipage}[b]{0.23\textwidth}
    \includegraphics[width=\textwidth]{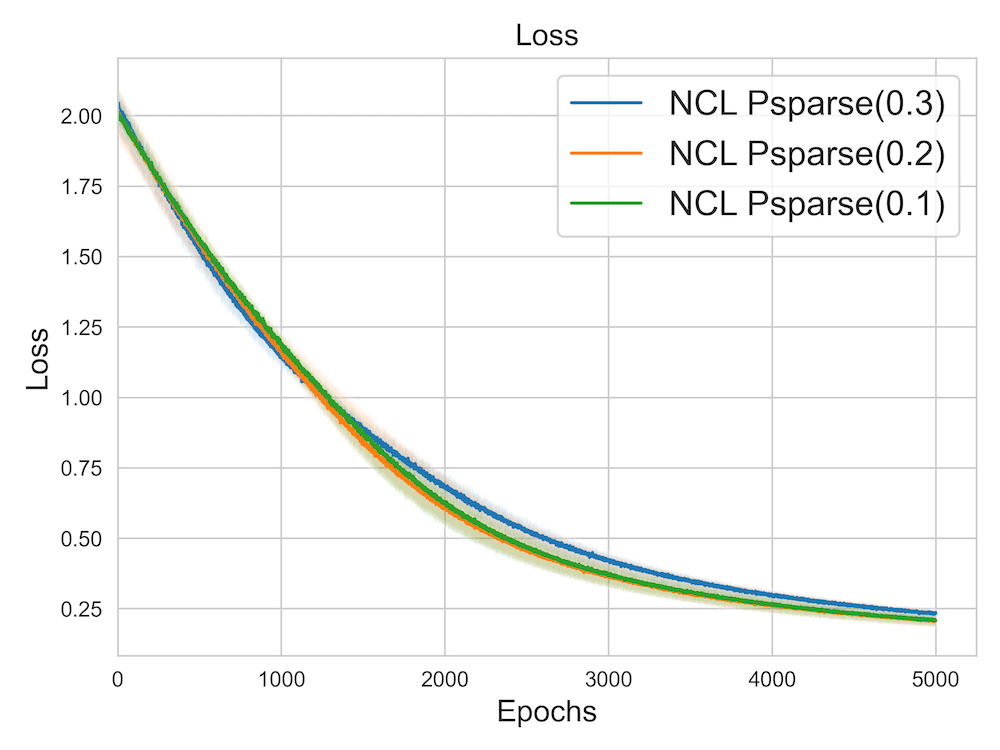}
  \end{minipage}
    \\~\\
  \caption{(NCL with warm start does not work) Minimum Max-Cosine (left) and Loss (right) curves for non-contrastive loss (NCL) with a warm-started linear encoder. Psparse indicates $Pr(\bm{z}_i = \pm 1), i \in [d]$ in the sparse coding vector $\bm{z}$. Reported numbers are averaged over 5 different runs. The shaded area represents the maximum and the minimum values observed across those 5 runs. We use p=50, m=50, d=10}
\end{figure}

\begin{figure}[!htb]
  \centering
  \begin{minipage}[b]{0.23\textwidth}
    \includegraphics[width=\textwidth]{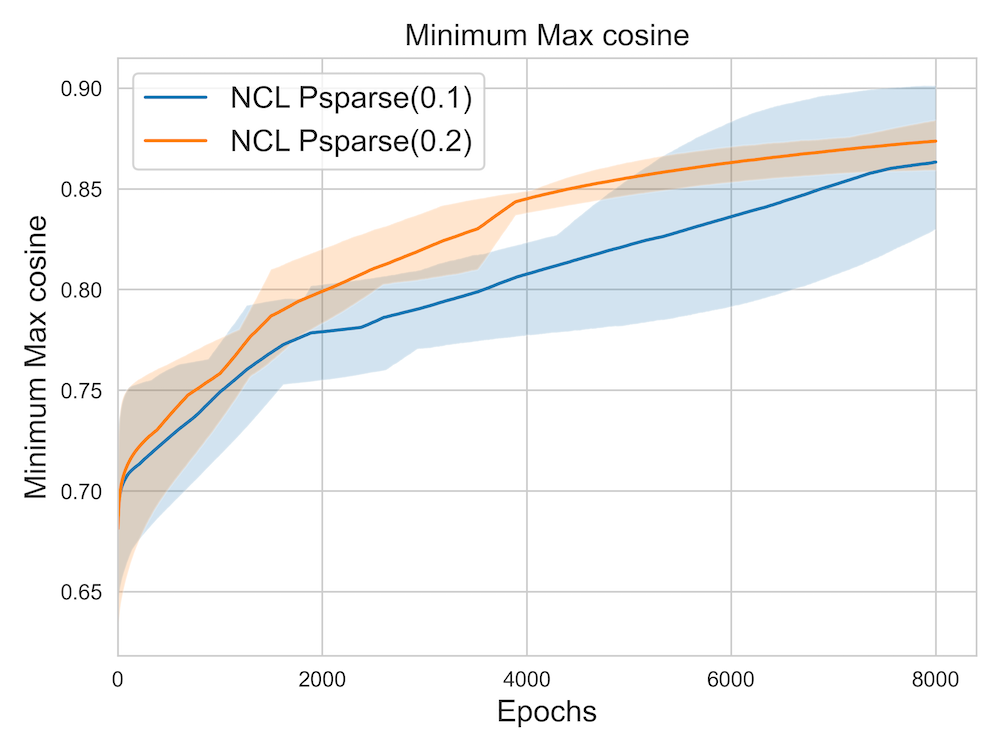}
  \end{minipage}
  \hfill
  \begin{minipage}[b]{0.23\textwidth}
    \includegraphics[width=\textwidth]{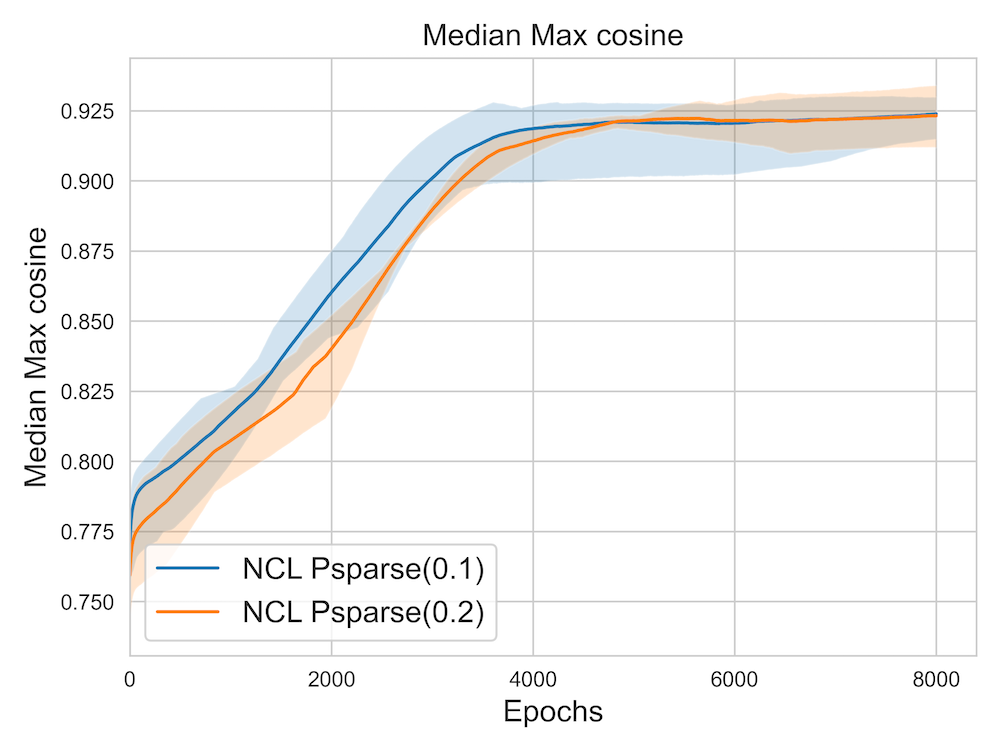}
  \end{minipage}
   \hfill
  \begin{minipage}[b]{0.23\textwidth}
    \includegraphics[width=\textwidth]{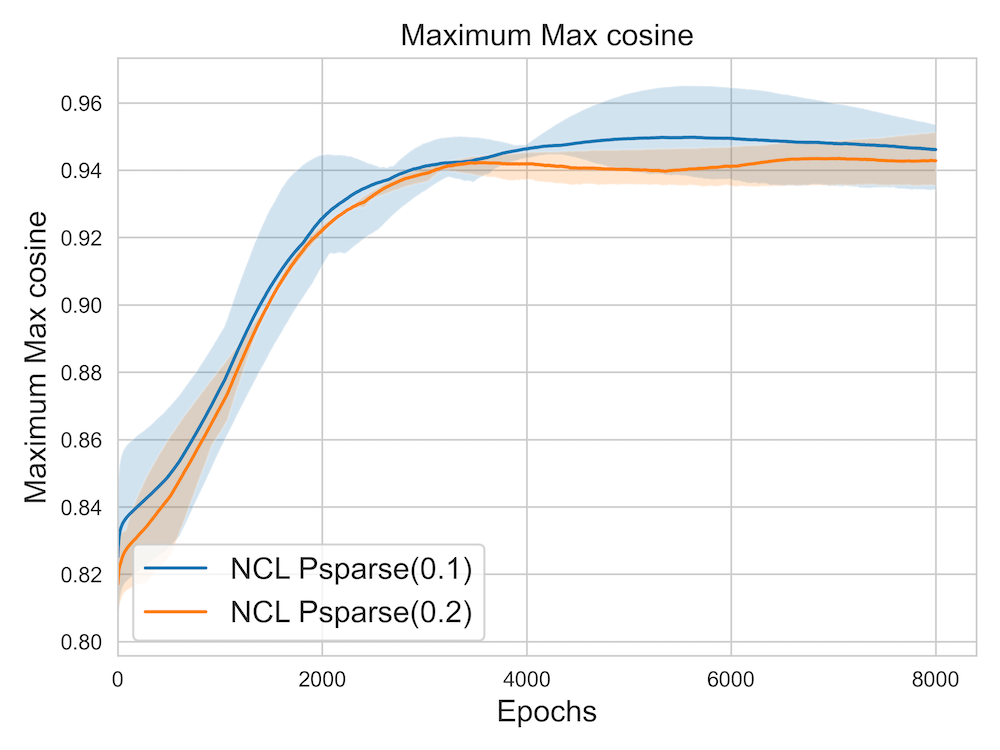}
  \end{minipage}
   \hfill
  \begin{minipage}[b]{0.23\textwidth}
    \includegraphics[width=\textwidth]{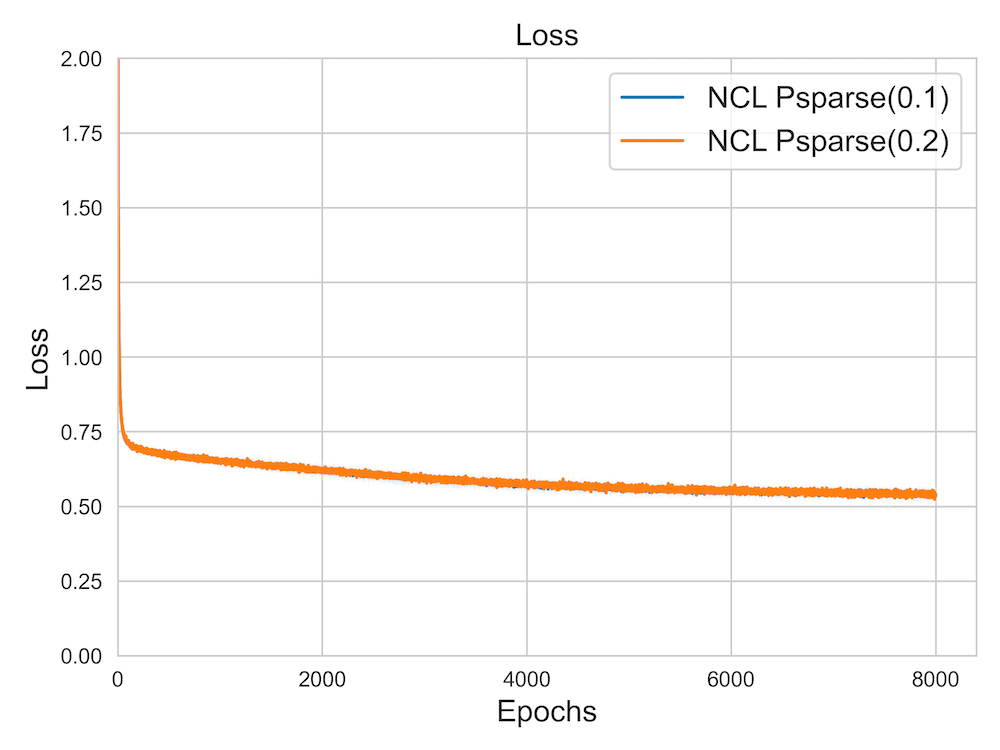}
  \end{minipage}
    \\~\\
  \caption{(NCL with a linear prediction head and warm-start works) (left-to-right) Minimum Max-Cosine,  Median Max-Cosine, Maximum Max-Cosine, and Loss curves for non-contrastive loss (NCL) with warm-started linear encoder and a linear predictor. Reported numbers are averaged over 3 different runs. The shaded area represents the maximum and the minimum values observed across those 3 runs. We use p=50, m=50, d=10}
\end{figure}

\newpage
\subsubsection{Non-contrastive model with warm start learns good  representations if weights of the encoder are row-normalized or column-normalized}

\begin{figure}[!htb]
  \centering
  \begin{minipage}[b]{0.23\textwidth}
    \includegraphics[width=\textwidth]{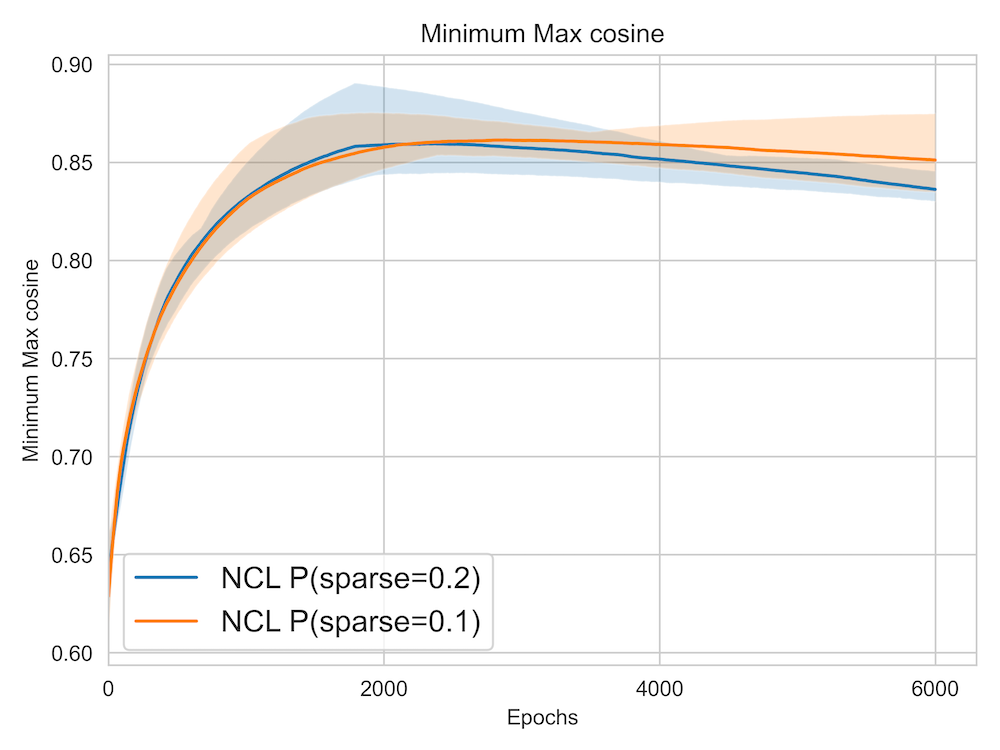}
  \end{minipage}
  \hfill
  \begin{minipage}[b]{0.23\textwidth}
    \includegraphics[width=\textwidth]{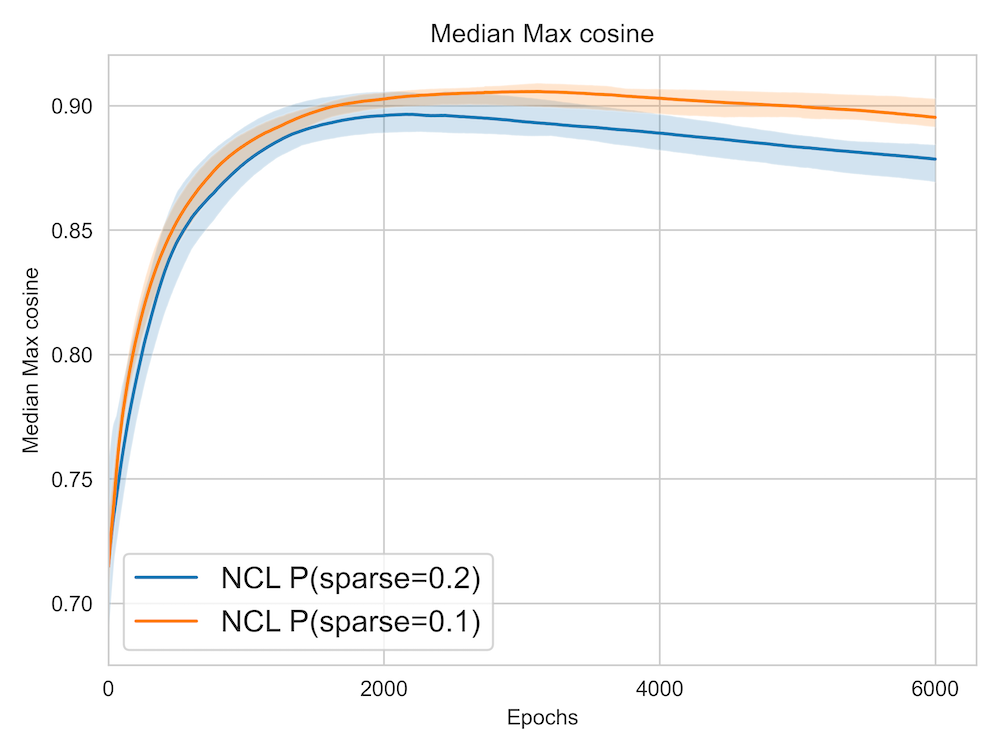}
  \end{minipage}
   \hfill
  \begin{minipage}[b]{0.23\textwidth}
    \includegraphics[width=\textwidth]{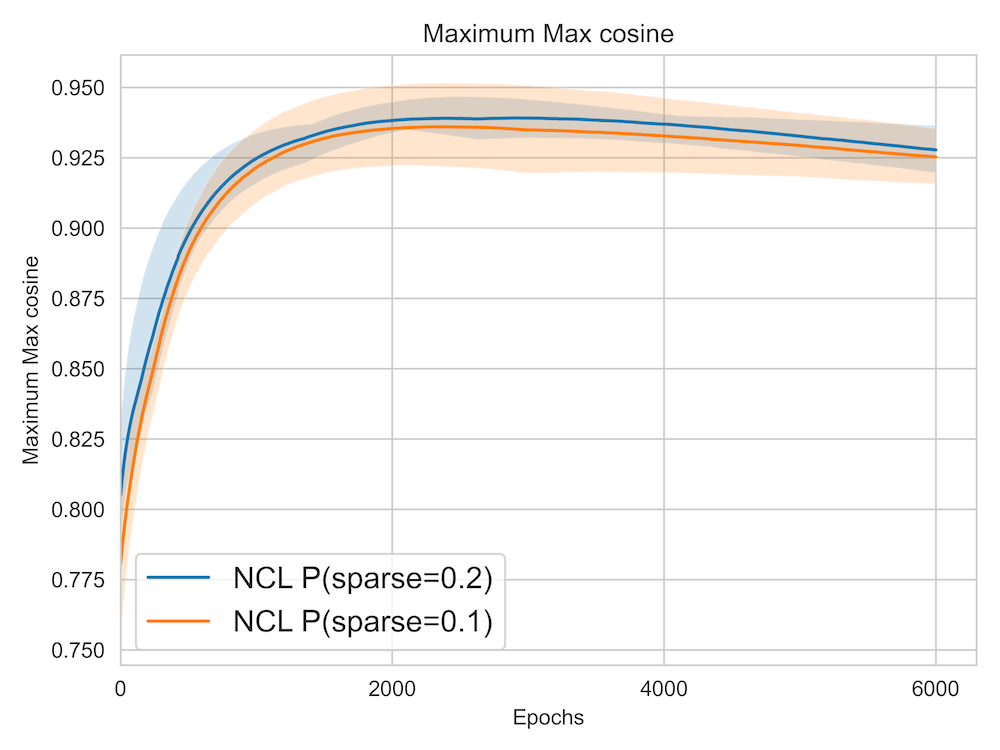}
  \end{minipage}
   \hfill
  \begin{minipage}[b]{0.23\textwidth}
    \includegraphics[width=\textwidth]{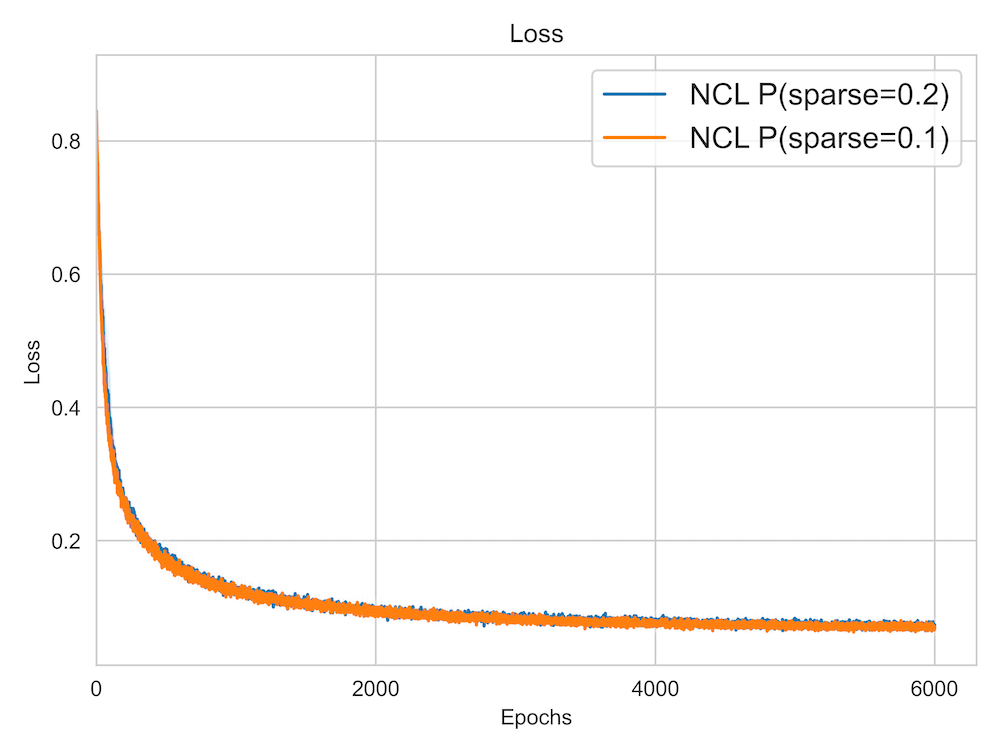}
  \end{minipage}
    \\~\\
  \caption{(NCL with warm-start and row-normalized encoder works) (left-to-right) Minimum Max-Cosine,  Median Max-Cosine, Maximum Max-Cosine, and Loss curves for  non-contrastive loss [BYOL \cite{grill2020bootstrap}] with warm-started linear encoder and row normalized encoder. Reported numbers are averaged over 3 different runs. The shaded area represents the maximum and the minimum values observed across those 3 runs. We use p=50, d=10, m=10. Warm start parameter $c=1$.}
\end{figure}

\begin{figure}[!htb]
  \centering
  \begin{minipage}[b]{0.23\textwidth}
    \includegraphics[width=\textwidth]{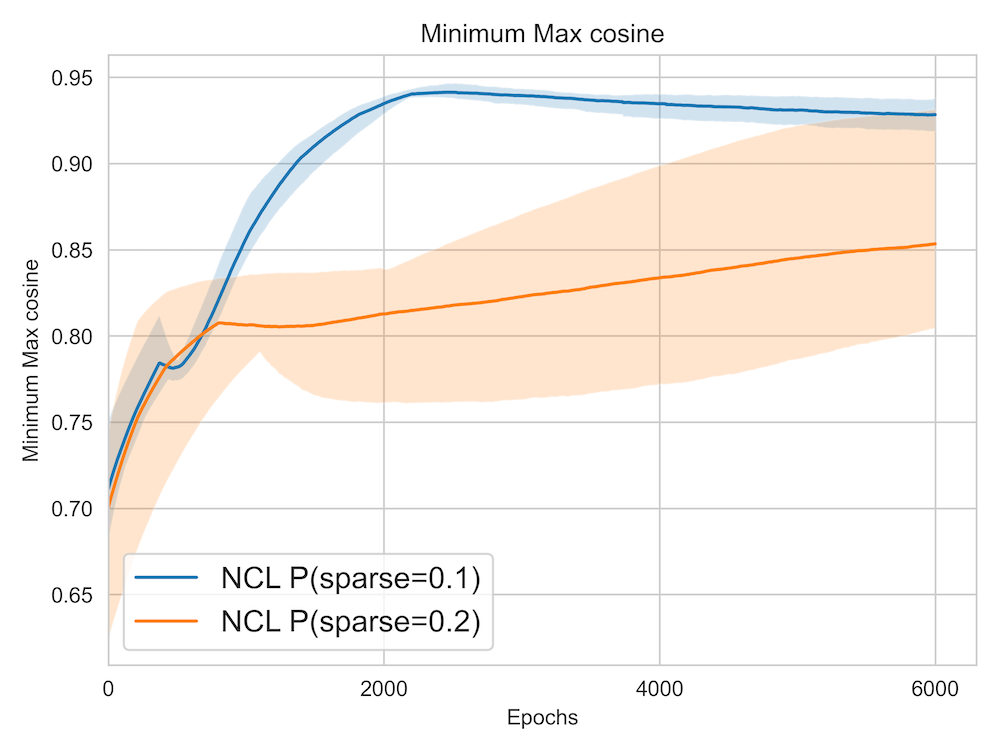}
  \end{minipage}
  \hfill
  \begin{minipage}[b]{0.23\textwidth}
    \includegraphics[width=\textwidth]{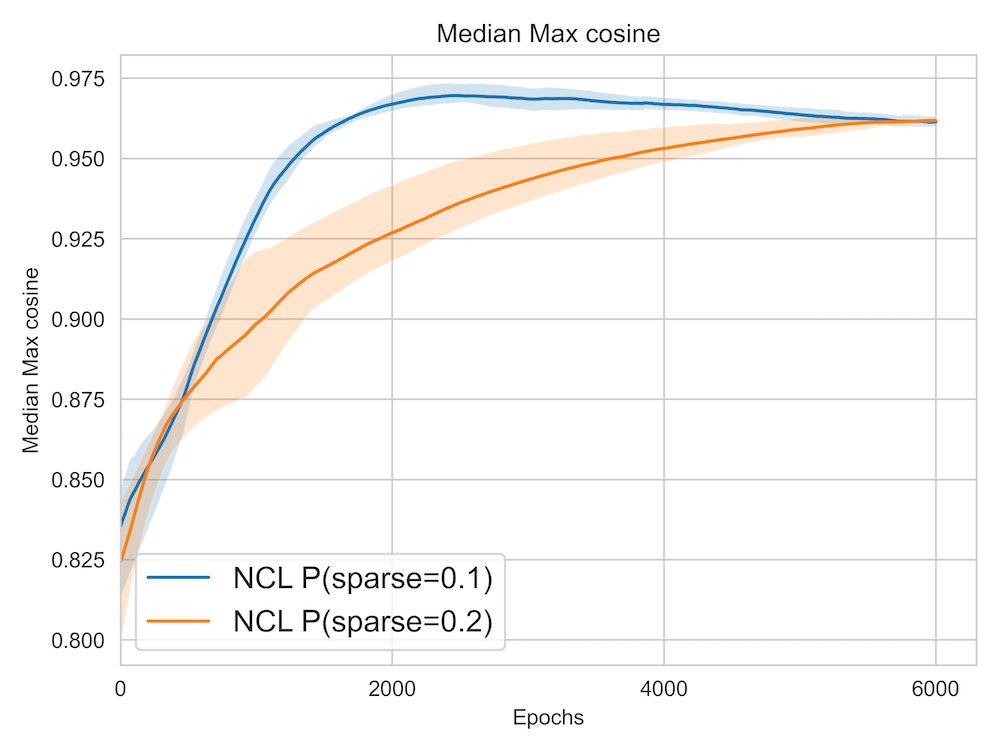}
  \end{minipage}
   \hfill
  \begin{minipage}[b]{0.23\textwidth}
    \includegraphics[width=\textwidth]{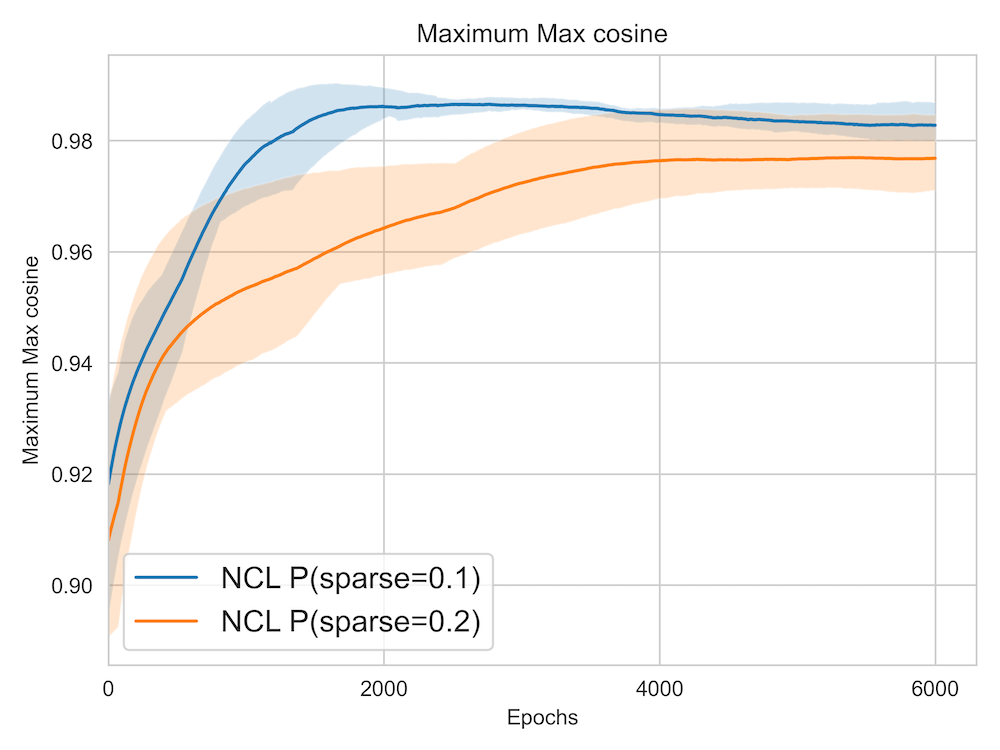}
  \end{minipage}
   \hfill
  \begin{minipage}[b]{0.23\textwidth}
    \includegraphics[width=\textwidth]{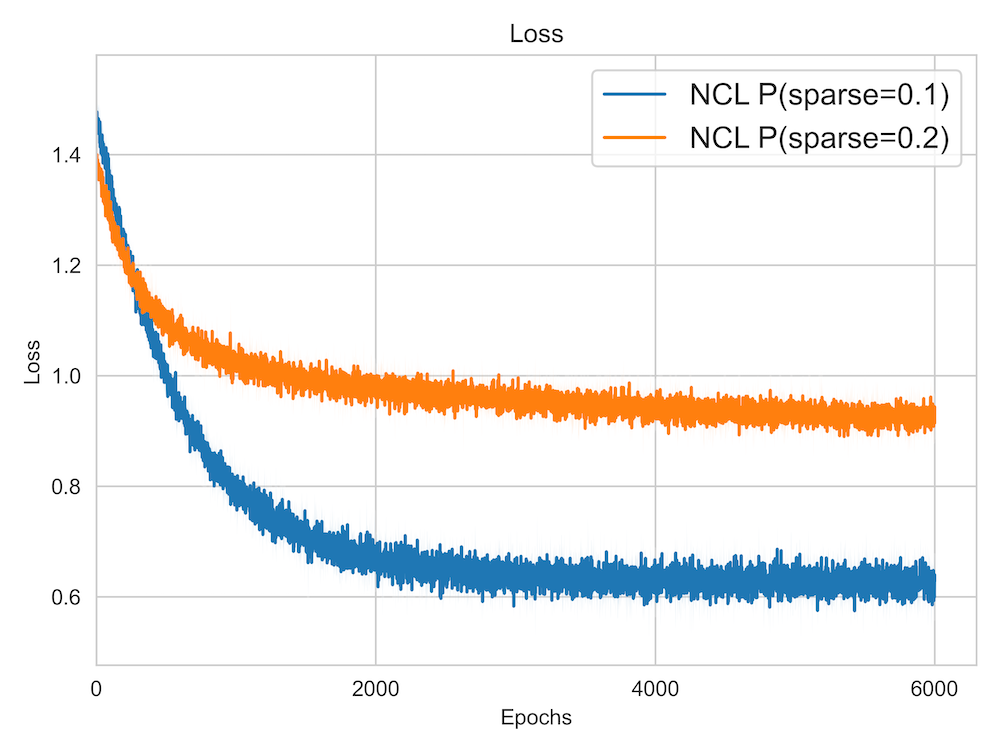}
  \end{minipage}
    \\~\\
  \caption{(NCL with warm-start and row-normalized encoder works) (left-to-right) Minimum Max-Cosine,  Median Max-Cosine, Maximum Max-Cosine, and Loss curves for  non-contrastive loss [BYOL \cite{grill2020bootstrap}] with warm-started linear encoder and row normalized encoder. Reported numbers are averaged over 3 different runs. The shaded area represents the maximum and the minimum values observed across those 3 runs. We use p=20, d=20, m=20. Warm start parameter $c=1.25$.}
\end{figure}

\begin{figure}[!htb]
  \centering
  \begin{minipage}[b]{0.23\textwidth}
    \includegraphics[width=\textwidth]{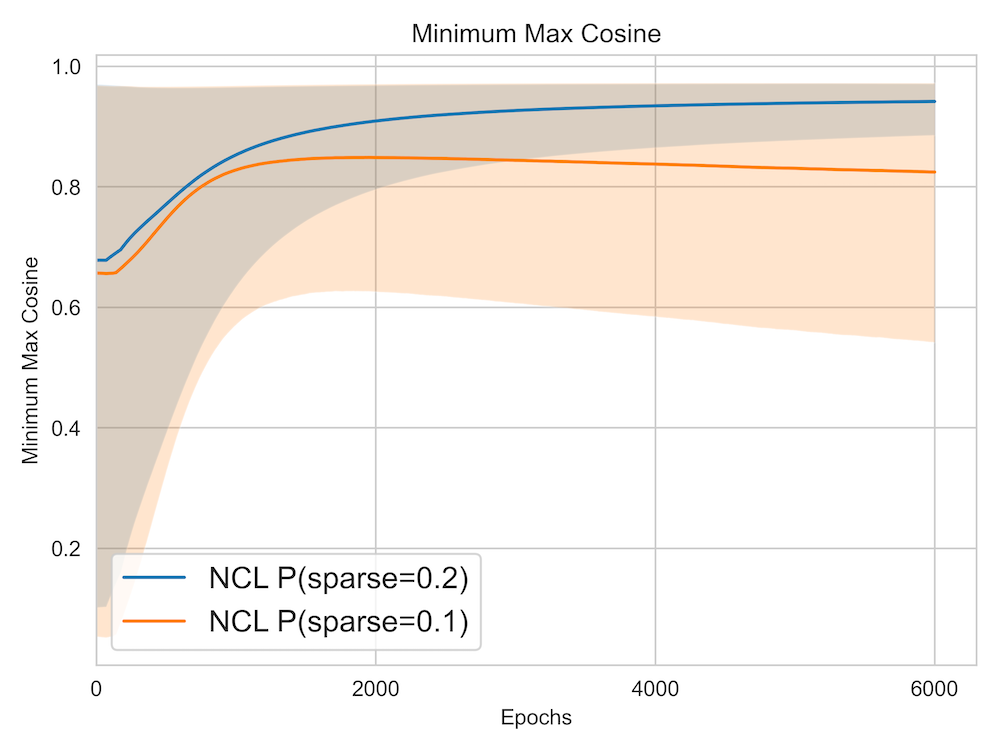}
  \end{minipage}
  \hfill
  \begin{minipage}[b]{0.23\textwidth}
    \includegraphics[width=\textwidth]{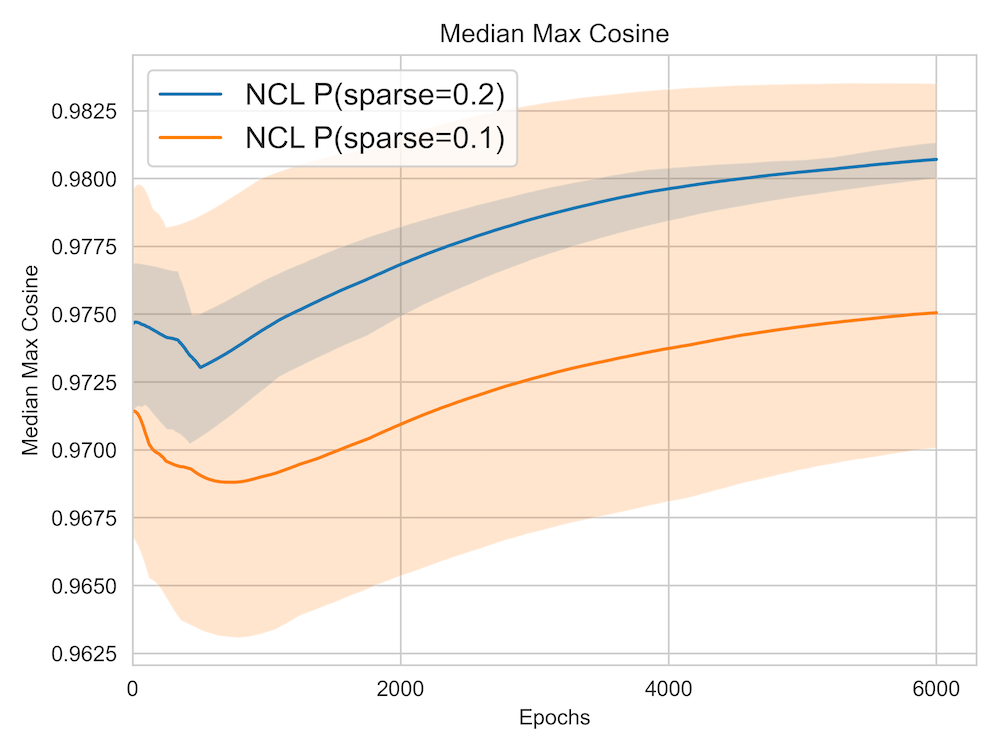}
  \end{minipage}
   \hfill
  \begin{minipage}[b]{0.23\textwidth}
    \includegraphics[width=\textwidth]{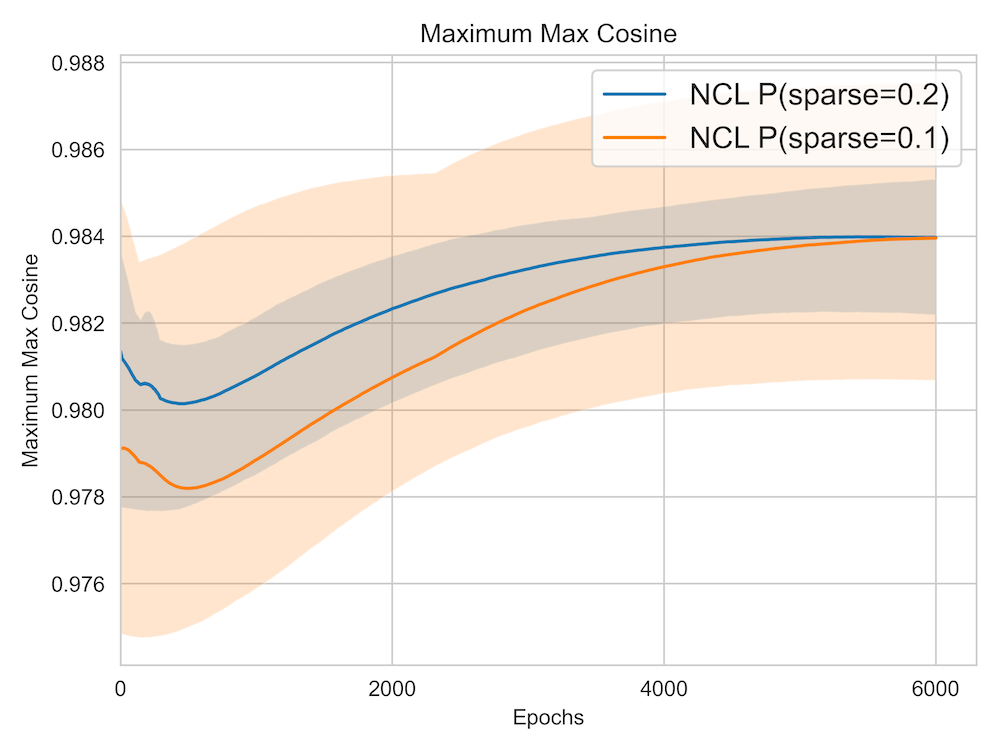}
  \end{minipage}
   \hfill
  \begin{minipage}[b]{0.23\textwidth}
    \includegraphics[width=\textwidth]{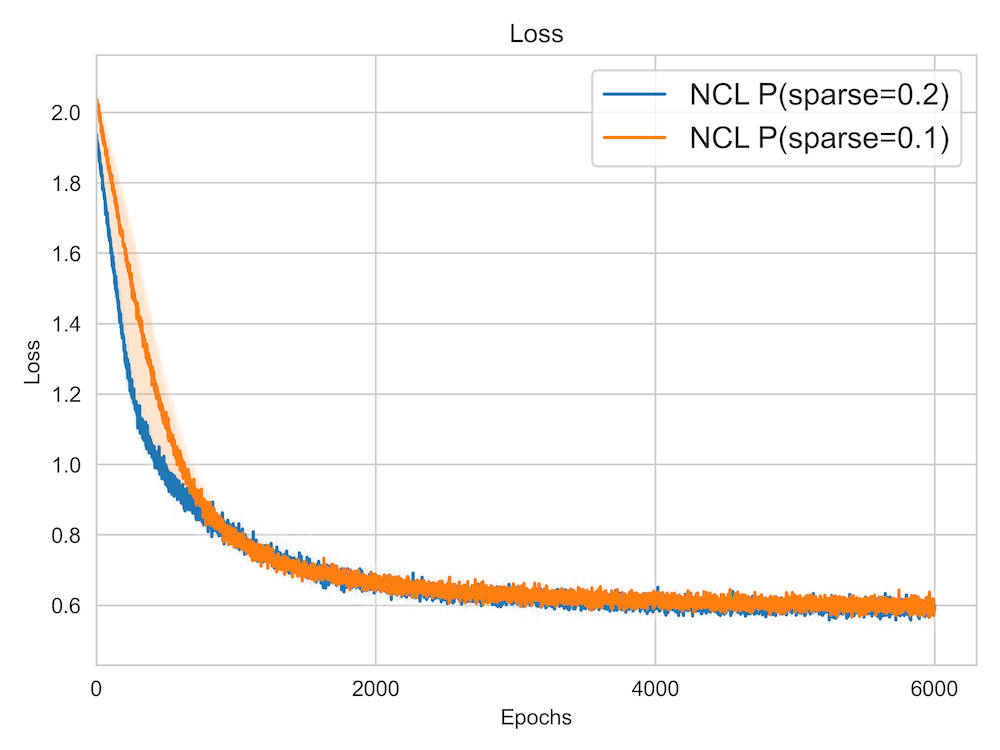}
  \end{minipage}
    \\~\\
  \caption{(NCL with warm-start and column-normalized encoder works) (left-to-right) Minimum Max-Cosine,  Median Max-Cosine, Maximum Max-Cosine, and Loss curves for  non-contrastive loss [BYOL \cite{grill2020bootstrap}] with warm-started linear encoder and column normalized encoder. Reported numbers are averaged over 3 different runs. The shaded area represents the maximum and the minimum values observed across those 3 runs. We use p=50, d=10, m=50. Warm start parameter $c=2$ and probability of random masking is 0.75.}
\end{figure}

\begin{figure}[!htb]
  \centering
  \begin{minipage}[b]{0.23\textwidth}
    \includegraphics[width=\textwidth]{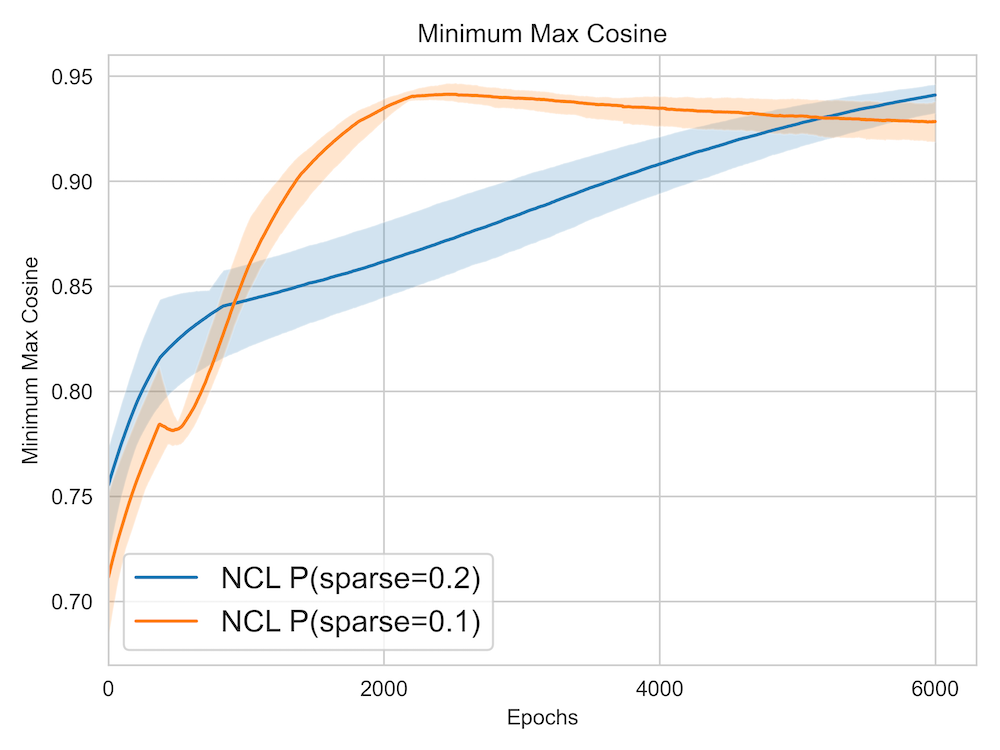}
  \end{minipage}
  \hfill
  \begin{minipage}[b]{0.23\textwidth}
    \includegraphics[width=\textwidth]{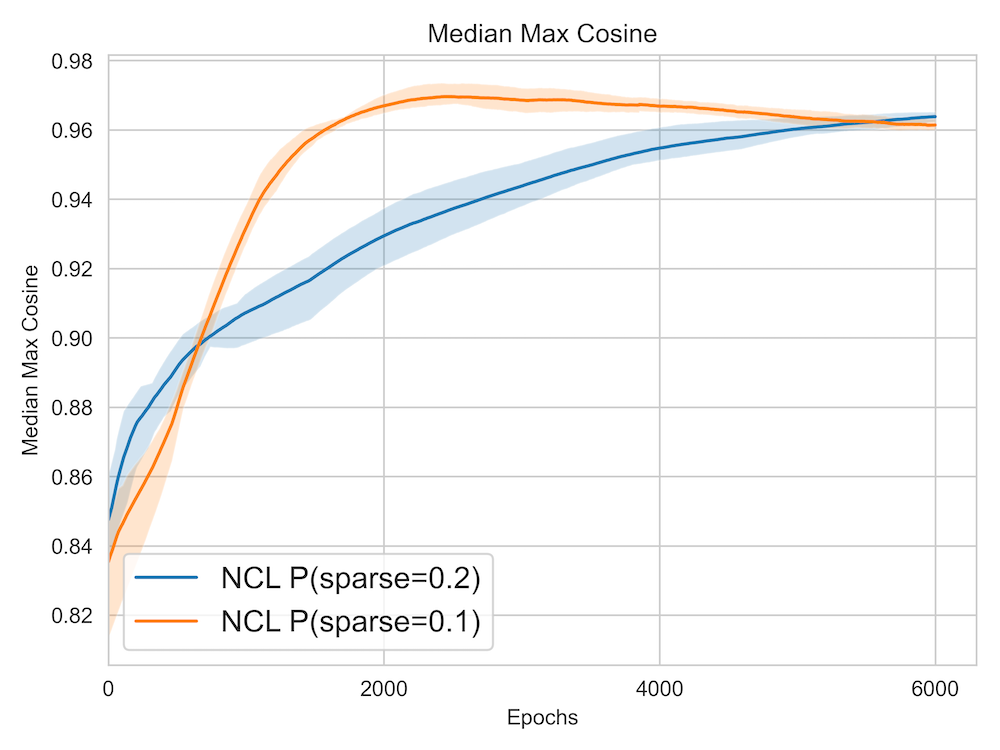}
  \end{minipage}
   \hfill
  \begin{minipage}[b]{0.23\textwidth}
    \includegraphics[width=\textwidth]{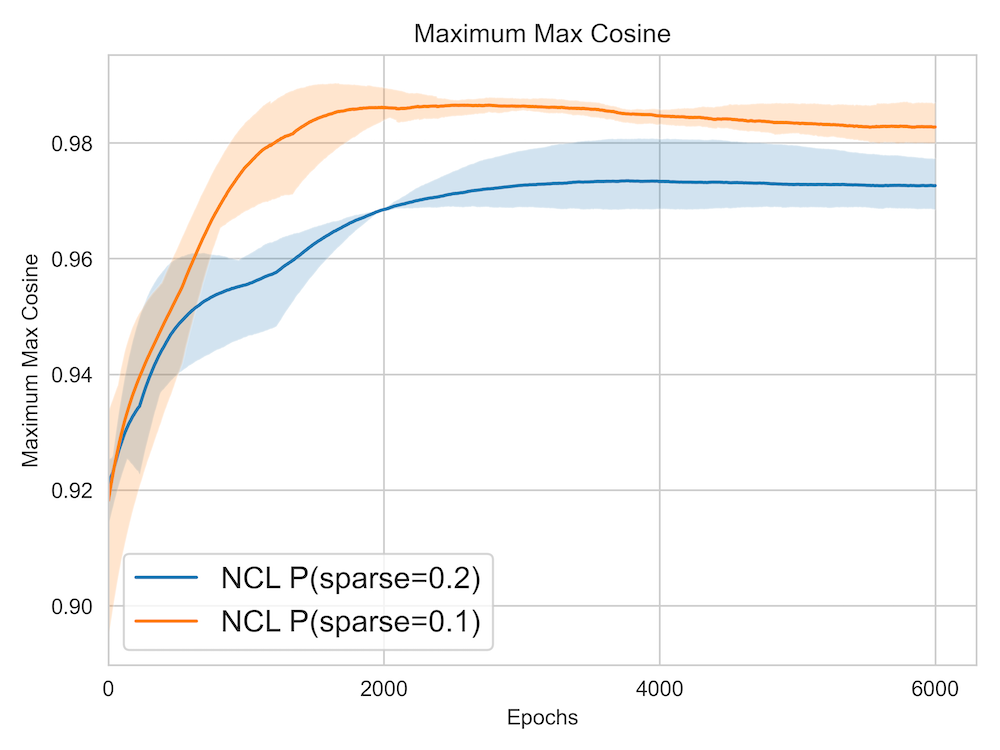}
  \end{minipage}
   \hfill
  \begin{minipage}[b]{0.23\textwidth}
    \includegraphics[width=\textwidth]{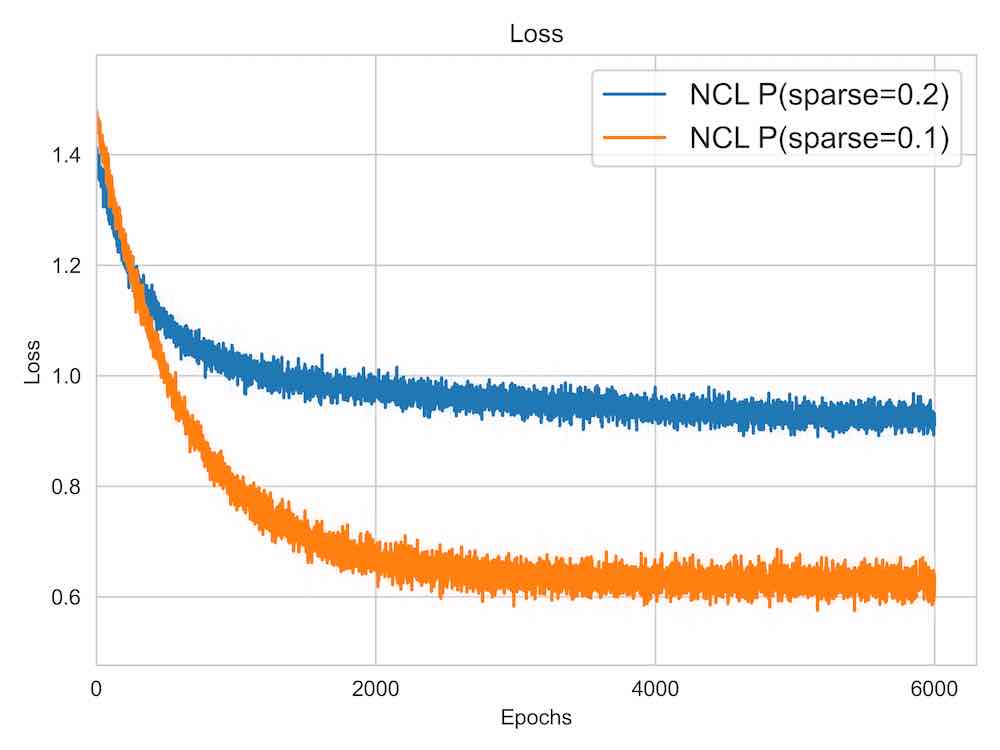}
  \end{minipage}
    \\~\\
  \caption{(NCL with warm-start and column-normalized encoder works) (left-to-right) Minimum Max-Cosine,  Median Max-Cosine, Maximum Max-Cosine, and Loss curves for  non-contrastive loss [BYOL \cite{grill2020bootstrap}] with warm-started linear encoder and column normalized encoder. Reported numbers are averaged over 3 different runs. The shaded area represents the maximum and the minimum values observed across those 3 runs. We use p=20, d=20, m=20. Warm start parameter $c=1.25$ and probability of random masking is 0.5.}
\end{figure}
\FloatBarrier

\subsubsection{Robustness across architectures and loss functions}
\begin{figure}[!h]
  \centering
  \begin{minipage}[b]{0.23\textwidth}
    \includegraphics[width=\textwidth]{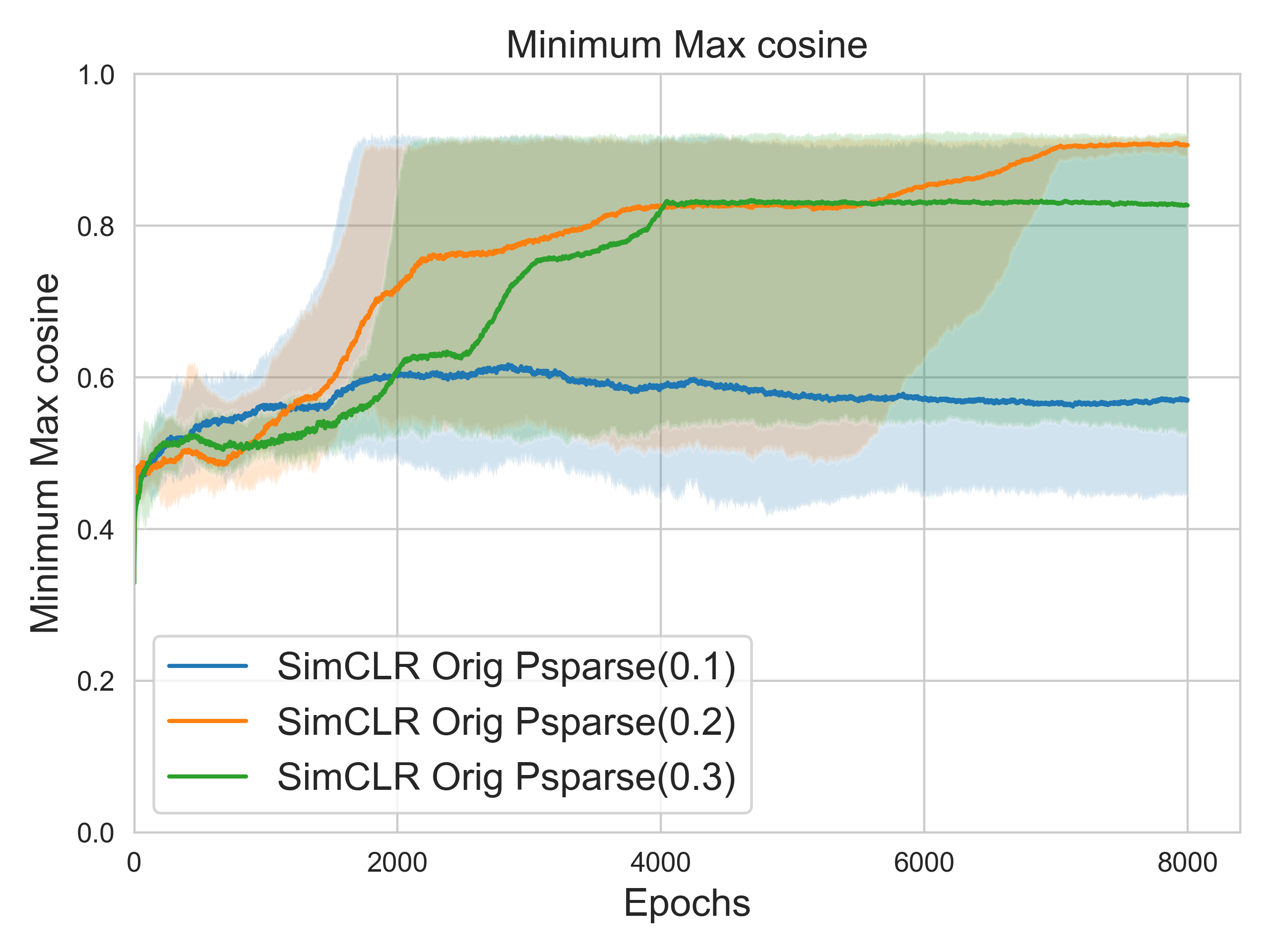}
  \end{minipage}
  \hfill
  \begin{minipage}[b]{0.23\textwidth}
    \includegraphics[width=\textwidth]{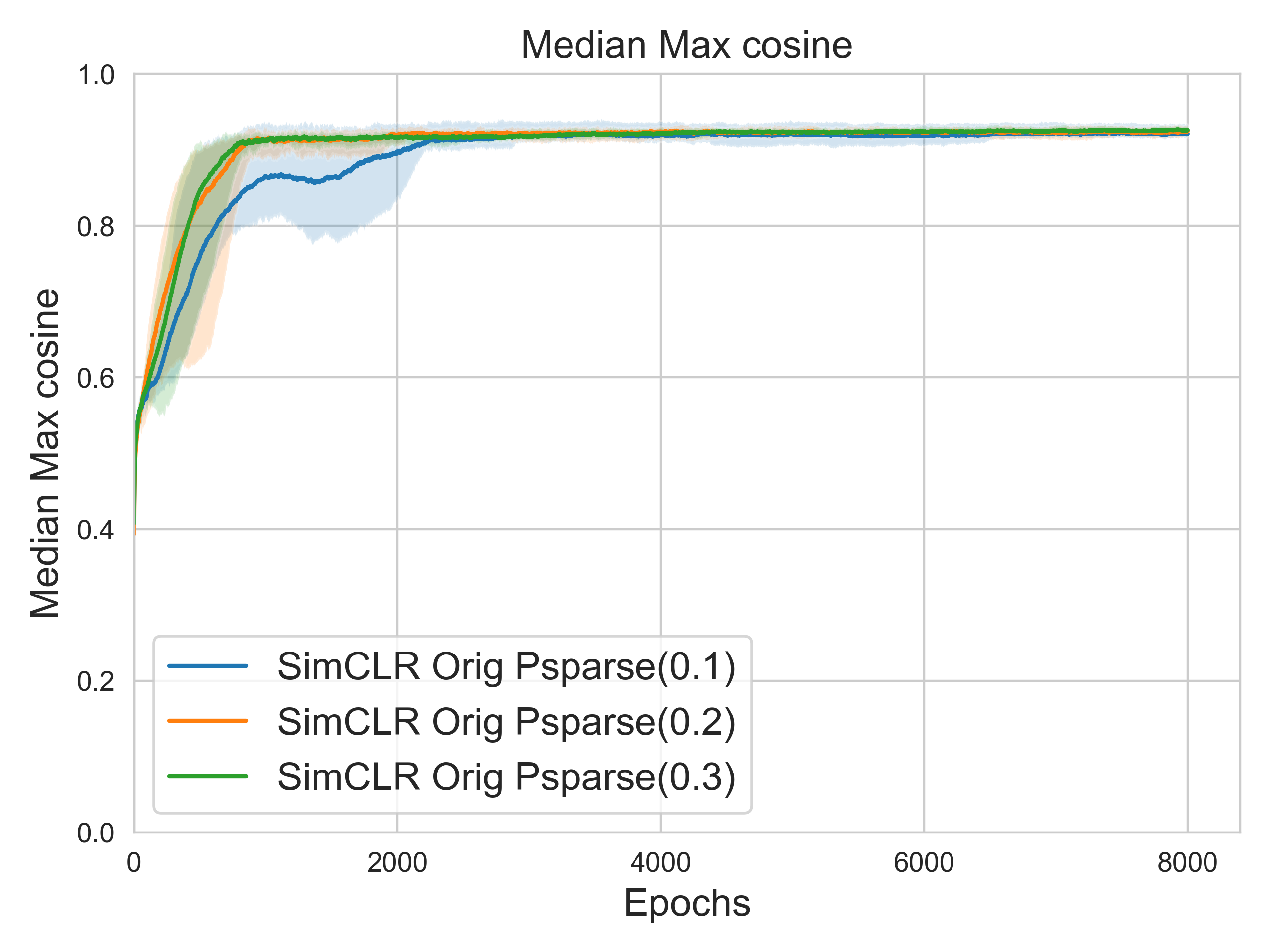}
  \end{minipage}
   \hfill
  \begin{minipage}[b]{0.23\textwidth}
    \includegraphics[width=\textwidth]{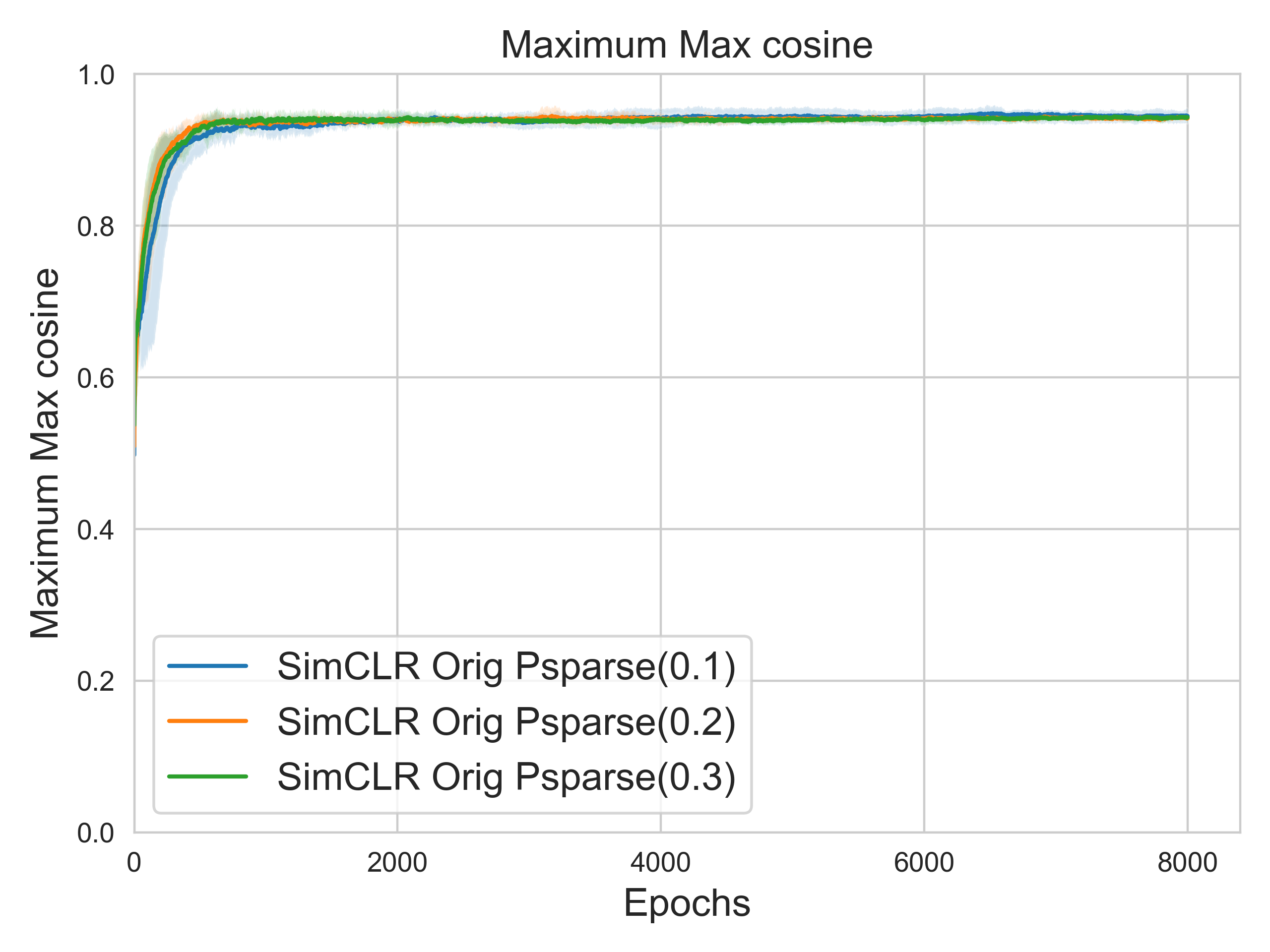}
  \end{minipage}
   \hfill
  \begin{minipage}[b]{0.23\textwidth}
    \includegraphics[width=\textwidth]{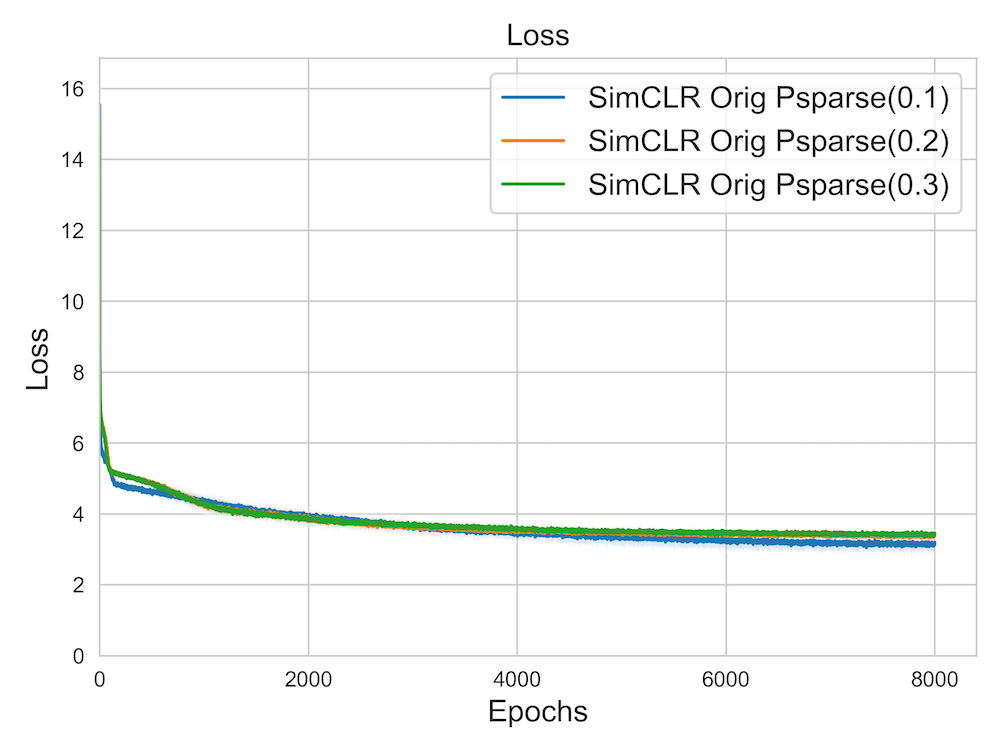}
  \end{minipage}
    \\~\\
  \caption{(Simplified SimCLR with random initialization learns good representations when the latents have low sparsity) (left-to-right) Minimum Max-Cosine,  Median Max-Cosine, Maximum Max-Cosine, and Loss curves for Simplified SimCLR model discussed in Table \ref{tab:table-simsiam-simclr}. Reported numbers are averaged over 5 different runs. The shaded area represents the maximum and the minimum values observed across those 5 runs. We use p = 50, d=10, m=50. Probability of random masking is 0.1. }
\end{figure}

\begin{figure}[!h]
  \centering
  \begin{minipage}[b]{0.23\textwidth}
    \includegraphics[width=\textwidth]{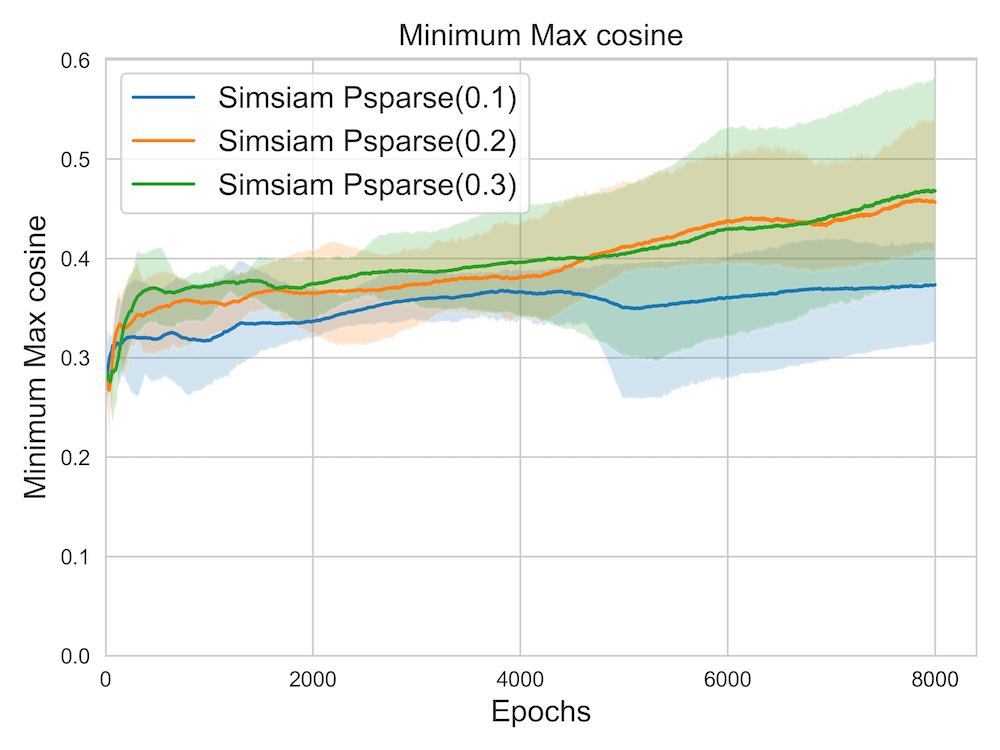}
  \end{minipage}
  \hfill
  \begin{minipage}[b]{0.23\textwidth}
    \includegraphics[width=\textwidth]{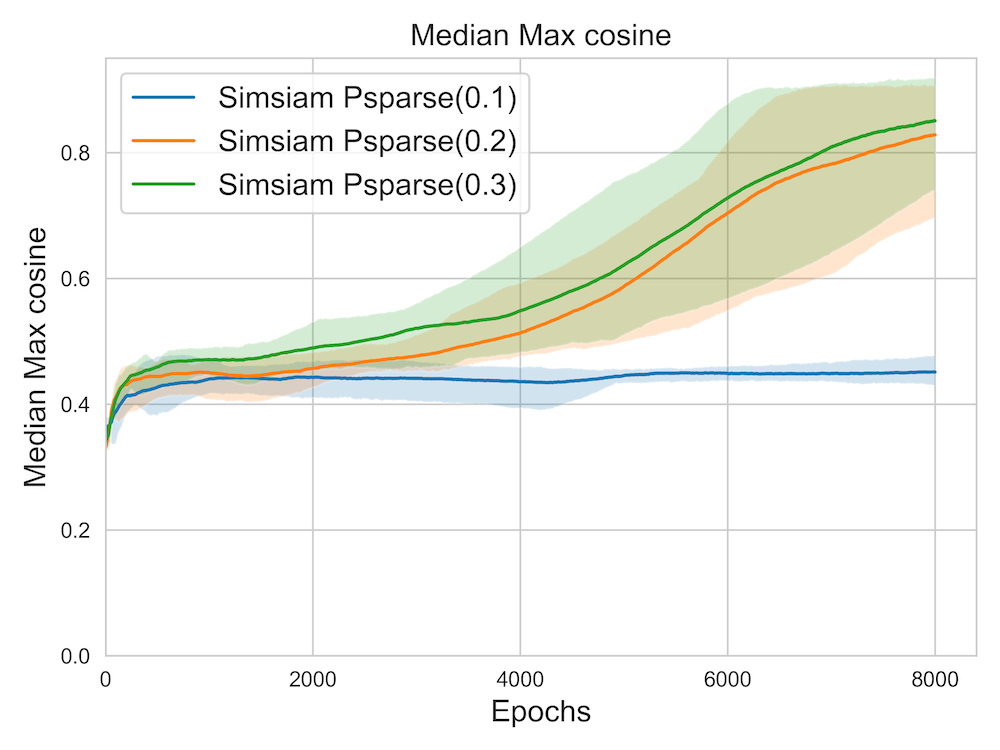}
  \end{minipage}
   \hfill
  \begin{minipage}[b]{0.23\textwidth}
    \includegraphics[width=\textwidth]{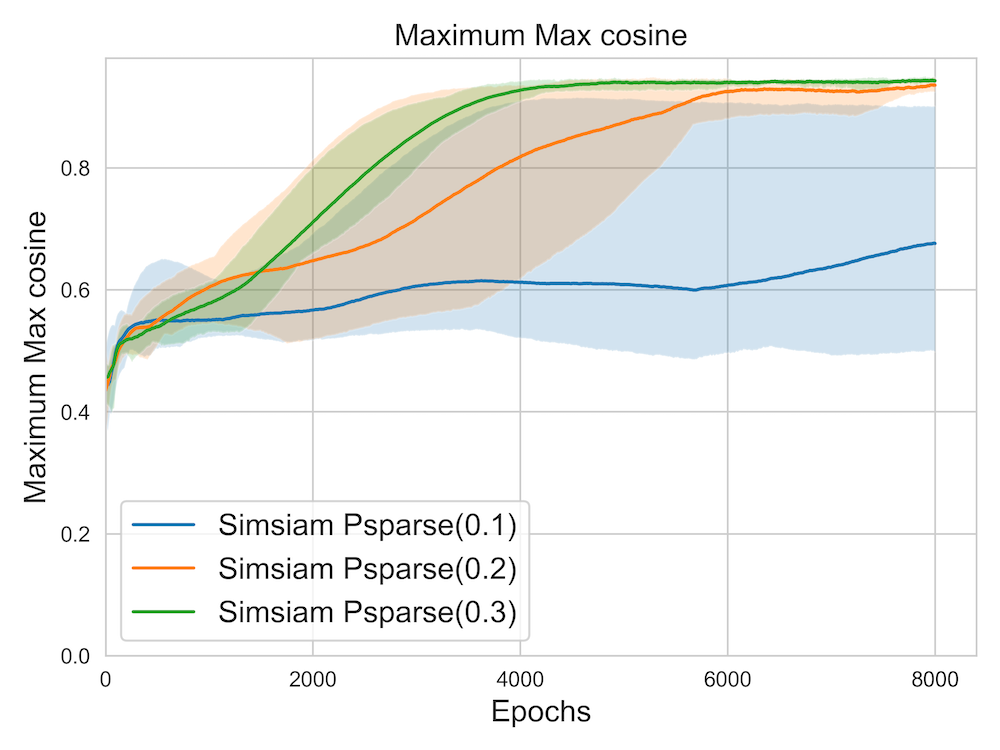}
  \end{minipage}
   \hfill
  \begin{minipage}[b]{0.23\textwidth}
    \includegraphics[width=\textwidth]{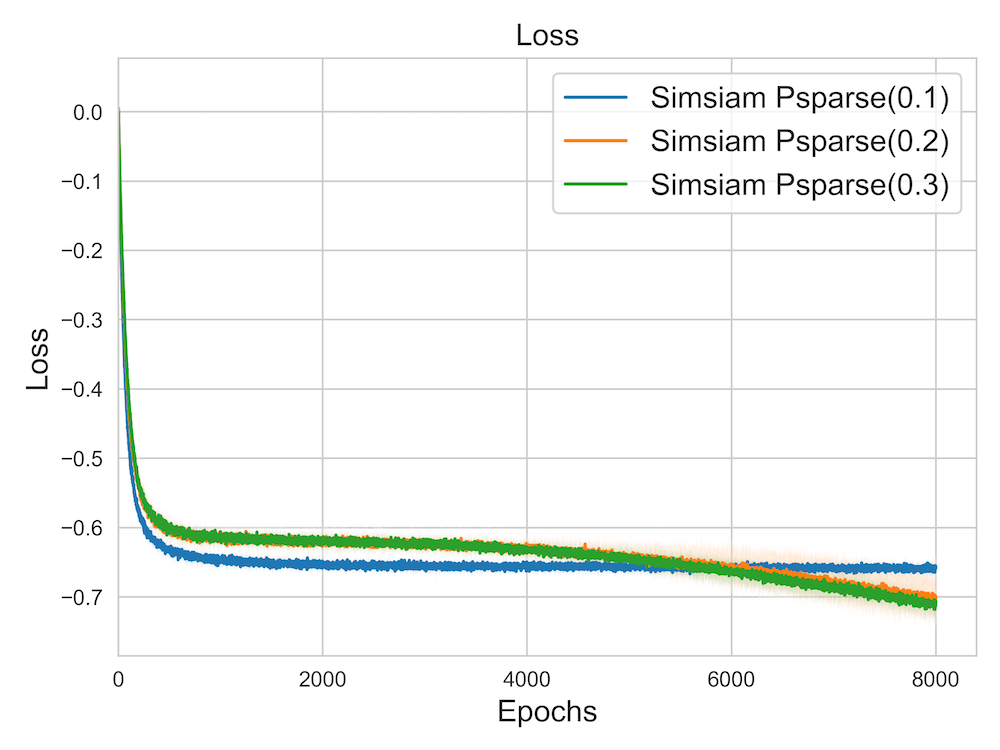}
  \end{minipage}
    \\~\\
  \caption{(Simplified Simsiam with random initialization fails to learn good representations) (left-to-right) Minimum Max-Cosine,  Median Max-Cosine, Maximum Max-Cosine, and Loss curves for Simplified Simsiam model discussed in Table \ref{tab:table-simsiam-simclr}. Reported numbers are averaged over 5 different runs. The shaded area represents the maximum and the minimum values observed across those 5 runs. We use p = 50, m=50, d=10. Probability of random masking is 0.1.}
\end{figure}

\subsubsection{Additional results for different levels of warm start}

In our experiments we find that initializing the weights of the encoder $\bm{W}$ close to the dictionary matrix $\bm{M}$ helps the non-contrastive learning algorithm to learn better cosine values. For warm start, we initialize the columns of the matrix $\bm{W}$ with random columns of the dictionary matrix $\bm{M}$. In addition, we add Gaussian noise $\mathcal{N}(0, \sigma^2)I$ to the matrix $\bm{W}$ where the choice of $\sigma$ determines the closeness of the matrix $\bm{W}$ to the dictionary matrix. The results of these experiments have been summarized in Table \ref{tab:table-warm-start}. 
As expected, a warmer start (corresponding to smaller $\sigma$) helps in all settings of Pr(sparse) that we tried.

\begin{table}[h!]
\centering
\begin{tabular}{cccccc}
\toprule
Model & Pr(sparse)& $\sigma$ & \multicolumn{1}{p{2.5cm}}{\centering Maximum \\ Max cosine $\uparrow$} & \multicolumn{1}{p{2.5cm}}{\centering Median \\ Max cosine $\uparrow$ }  & \multicolumn{1}{p{2.5cm}}{\centering Minimum \\ Max cosine $\uparrow$}  \\
\midrule
NCL-basic & 0.1 & 0.14 &$0.84 \pm 0.02$ & $0.73 \pm 0.04$ & $0.58 \pm 0.04$ \\
NCL-basic & 0.1 & 0.02 &$0.97 \pm 0.005$ & $0.77 \pm 0.01$ & $0.69 \pm 0.04$ \\
NCL-basic & 0.1 & 0.002 &$0.98 \pm 0.003$ & $0.78 \pm 0.04 $ & $0.69 \pm 0.02$ \\
\midrule
NCL-basic & 0.2 & 0.14  &$0.84 \pm 0.04$ & $0.73 \pm 0.05$ & $0.57 \pm 0.05$ \\
NCL-basic & 0.2 & 0.02 & $0.97 \pm 0.006$ & $0.79 \pm 0.07$ & $0.69 \pm 0.03$ \\
NCL-basic & 0.2 & 0.002  & $0.97 \pm 0.006$ & $0.81 \pm 0.08$ & $0.67 \pm 0.03$ \\
\midrule
NCL-basic & 0.3 & 0.14  &$0.82 \pm 0.02$ & $0.71 \pm 0.03$ & $0.58 \pm 0.03$ \\
NCL-basic & 0.3 & 0.02 & $0.95 \pm 0.009$ & $0.77 \pm 0.06$ & $0.66 \pm 0.01$ \\
NCL-basic & 0.3 & 0.002  & $0.96 \pm 0.02$ & $0.75 \pm 0.04$ & $0.66 \pm 0.02$ \\
\bottomrule
\end{tabular}
\caption{\label{tab:table-warm-start} Summary of cosine values learnt by the simple linear encoder by non contrastive algorithm with an overparameterized linear encoder (with batch normalization and symmetric ReLU) when the weights of the encoder $\bm{W}$ are initialized close to the ground truth dictionary $\bm{M}$. Pr(sparse) indicates the probability $Pr(\bm{z}_i = \pm 1), i \in [d]$ in the sparse coding vector $\bm{z}$. $\sigma$ denotes the std. deviation of the Gaussian noise added to  $\bm{W}$ that is initialized with random columns of the ground truth dictionary matrix $\bm{M}$. We report mean $\pm$ std. deviation over 5 runs.} 
\end{table}

\section{THEORETICAL RESULTS FOR TRAINING DYNAMICS}

\subsection{Proof of the limitation of linear networks}
\label{sec:proof:linear}

\begin{theorem*}[non-contrastive loss on linear network, Theorem~\ref{thm:linear_network} restated]
    Suppose that the data generating process is specified as in Section~\ref{sec:setup} with the latent distribution changed to symmetric (Assumption~\ref{assumption:symmetric_bernoulli_latent}), 
    and the weights are initialized to $\mW^o_0$ and $\mW^t_0$, respectively.
    In step $t$, denote the learning rate as $\eta_t < \frac{\lambda}{2 \alpha^2 (2 \frac{\kappa}{2} + \sigma_0^2)}, \forall t$.
    Then, running gradient descent on the loss $L_{\text{linear-non-CL}}(\mW^o_t, \mW^t_t)$ in \eqref{eq:linear_network_weight_decay}
    will lead to
    \begin{align*}
        \mW^o_t \mM &= C_{1,t} \mW^o_0 \mM + C_{2,t} (\mW^o_0 \mM + \mW^t_0 \mM) \\
        \mW^t_t \mM &= C_{1,t} \mW^t_0 \mM + C_{2,t} (\mW^o_0 \mM + \mW^t_0 \mM)
    \end{align*}
    for some scalars $C_{1,t} := \prod_{i=0}^{t-1}(\lambda-c_i) \in (0, 1)$, and $C_{2,t} := \sum_{j=0}^{t-1} \left( c_i \prod_{i=j}^{t-1}(\lambda-c_i) \prod_{i=0}^{j-1}(\lambda+c_i) \right) > 0$,
    in which $c_i = 2 \eta_i \alpha^2 (2 p_z + \sigma_0^2) \in (0, \lambda)$.
\end{theorem*}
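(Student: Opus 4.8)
The plan is to reduce the matrix gradient-descent recursion to a coupled \emph{linear} recursion in the two ``aligned'' matrices $\mW^o_t\mM$ and $\mW^t_t\mM$, diagonalize it in sum/difference coordinates, and then read off the closed form.

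\textbf{Step 1: population gradient.} First I would compute $\mathbb{E}[\mD_2\vx\vx^\top\mD_1]$. Since $\mD_1,\mD_2$ are independent of $\vx$ and of each other and each has $\mathrm{Bernoulli}(\alpha)$ diagonal entries, the $(i,j)$ entry equals $\alpha^2\,\mathbb{E}[x_ix_j]$ for all $i,j$, so $\mathbb{E}[\mD_2\vx\vx^\top\mD_1] = \alpha^2\,\mSigma_x$ with $\mSigma_x := \mathbb{E}[\vx\vx^\top] = \mM\,\mathbb{E}[\vz\vz^\top]\,\mM^\top + \sigma_0^2\mI_p$. Under Assumption~\ref{assumption:symmetric_bernoulli_latent} the coordinates of $\vz$ are i.i.d.\ mean-zero with $\mathbb{E}[z_i^2] = 2p_z$, hence $\mathbb{E}[\vz\vz^\top] = 2p_z\,\mI_d$ and $\mSigma_x = 2p_z\,\mM\mM^\top + \sigma_0^2\mI_p$. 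Therefore $\mathbb{E}\langle\mW^o\mD_1\vx,\mW^t\mD_2\vx\rangle = \alpha^2\,\mathrm{tr}\!\big((\mW^o)^\top\mW^t\mSigma_x\big)$, so (with $\|\cdot\|_2$ the Frobenius norm, whose gradient is $(1-\lambda)\mW$) the gradients of \eqref{eq:linear_network_weight_decay} are $\nabla_{\mW^o}L = (1-\lambda)\mW^o - 2\alpha^2\mW^t\mSigma_x$ and symmetrically for $\mW^t$. Interpreting the weight decay multiplicatively as in Remark~\ref{remark:weight_decay} (i.e.\ $\mW \leftarrow \lambda\mW$ each step, decoupled from $\eta_t$), and noting that \eqref{eq:linear_network_weight_decay} has no stop-gradient so both blocks are updated by one simultaneous step, the update is
\[
\mW^o_{t+1} = \lambda\mW^o_t + 2\eta_t\alpha^2\mW^t_t\mSigma_x,\qquad
\mW^t_{t+1} = \lambda\mW^t_t + 2\eta_t\alpha^2\mW^o_t\mSigma_x .
\]

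\textbf{Step 2: reduce and diagonalize.} I would then right-multiply both updates by $\mM$. Column-orthonormality $\mM^\top\mM = \mI_d$ gives $\mSigma_x\mM = (2p_z + \sigma_0^2)\mM$, so setting $u_t := \mW^o_t\mM$, $v_t := \mW^t_t\mM$ and $c_t := 2\eta_t\alpha^2(2p_z+\sigma_0^2)$ the system collapses to $u_{t+1} = \lambda u_t + c_t v_t$ and $v_{t+1} = \lambda v_t + c_t u_t$. Passing to $s_t := u_t+v_t$ and $d_t := u_t-v_t$ decouples this into $s_{t+1} = (\lambda+c_t)s_t$ and $d_{t+1} = (\lambda-c_t)d_t$, whence $s_t = \prod_{i=0}^{t-1}(\lambda+c_i)\,s_0$ and $d_t = \prod_{i=0}^{t-1}(\lambda-c_i)\,d_0$. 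Recovering $u_t = \tfrac12(s_t+d_t)$ and collecting the coefficients of $u_0 = \mW^o_0\mM$ and $v_0 = \mW^t_0\mM$ yields exactly the claimed shape $u_t = C_{1,t}u_0 + C_{2,t}(u_0+v_0)$ with $C_{1,t} = \prod_{i=0}^{t-1}(\lambda-c_i)$ and $C_{2,t} = \tfrac12\big(\prod_{i=0}^{t-1}(\lambda+c_i) - \prod_{i=0}^{t-1}(\lambda-c_i)\big)$; the analogous identity for $v_t$ follows by the $o\leftrightarrow t$ symmetry. The representation of $C_{2,t}$ as the stated telescoping sum follows (after the evident reindexing) from summing $T_{j+1}-T_j = 2c_j\prod_{i=0}^{j-1}(\lambda+c_i)\prod_{i=j+1}^{t-1}(\lambda-c_i)$ over $j$, where $T_j := \prod_{i=0}^{j-1}(\lambda+c_i)\prod_{i=j}^{t-1}(\lambda-c_i)$, so that $T_t - T_0 = \prod(\lambda+c_i) - \prod(\lambda-c_i) = 2C_{2,t}$.

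\textbf{Step 3: sign/magnitude bounds.} Finally, the hypothesis $\eta_t < \lambda/\big(2\alpha^2(2p_z+\sigma_0^2)\big)$ is precisely $c_t\in(0,\lambda)$; together with $\lambda\in(0,1)$ this gives each factor $\lambda-c_i\in(0,1)$, hence $C_{1,t}\in(0,1)$, and each summand in $C_{2,t}$ strictly positive, hence $C_{2,t}>0$.

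\textbf{Main obstacle.} The only genuinely delicate point is pinning down the exact update rule so the constants line up: treating the weight decay as a per-step multiplicative factor $\lambda$ (decoupled from $\eta_t$, matching Remark~\ref{remark:weight_decay}) rather than as an ordinary gradient step on $\tfrac{1-\lambda}{2}\|\mW\|^2$, and confirming that the no-stop-gradient loss \eqref{eq:linear_network_weight_decay} is updated simultaneously in both blocks so that the $(u_t,v_t)$ system remains symmetric and hence is diagonalizable by sums and differences. Everything downstream is the routine computation sketched above.
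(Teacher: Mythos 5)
Your proof is correct and proceeds along the same broad lines as the paper's -- compute the population gradient, right-multiply by $\mM$ to get the coupled recursion $u_{t+1}=\lambda u_t+c_t v_t$, $v_{t+1}=\lambda v_t+c_t u_t$, then solve -- but the way you solve the recursion differs in a pleasant way. The paper uses only the sum $s_t=u_t+v_t$, which satisfies $s_t=\prod_{i=0}^{t-1}(\lambda+c_i)\,s_0$, then substitutes this back into the single recursion for $u_{t+1}$ and verifies the claimed closed form by an induction on $t$ (this is where the awkward telescoping-sum expression for $C_{2,t}$ comes from). You instead fully diagonalize the $2\times 2$ system by passing to both $s_t=u_t+v_t$ and $d_t=u_t-v_t$, which immediately yields
\[
u_t=\tfrac12\bigl(\textstyle\prod_{i<t}(\lambda+c_i)+\prod_{i<t}(\lambda-c_i)\bigr)u_0+\tfrac12\bigl(\textstyle\prod_{i<t}(\lambda+c_i)-\prod_{i<t}(\lambda-c_i)\bigr)v_0,
\]
so no induction is needed, and the coefficient of $v_0$ is the cleaner closed form $C_{2,t}=\tfrac12\bigl(\prod(\lambda+c_i)-\prod(\lambda-c_i)\bigr)$; your telescoping identity with $T_j$ then shows this equals the paper's sum (modulo the paper's off-by-one typo: the theorem statement writes $\prod_{i=j}^{t-1}$ where its own Appendix proof, and your derivation, give $\prod_{i=j+1}^{t-1}$). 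Your explicit handling of the two delicate modeling points -- multiplicative weight decay decoupled from $\eta_t$ as per Remark~\ref{remark:weight_decay}, and simultaneous update of both blocks since \eqref{eq:linear_network_weight_decay} has no stop-gradient -- also matches what the paper does implicitly. In short: same reduction, but you replace the paper's substitution-plus-induction with a direct eigen-decomposition of the symmetric recursion, which is arguably cleaner and makes the sign of $C_{2,t}$ and the bound $C_{1,t}\in(0,1)$ transparent.
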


\begin{proof}
    Differentiating \eqref{eq:linear_network_weight_decay} gives 
    \begin{align*}
        \nabla_{\mW^o} L_{\text{linear-non-CL}}(\mW^o, \mW^t) &= -2 \mathbb{E}_{\vx, \mD_1, \mD_2} [(\mW^t \mD_2 \vx) (\mD_1 \vx)^\top] + (1-\lambda) \mW^o 
    \end{align*}
    By the setup in Section~\ref{sec:setup} and the latent distribution Assumption~\ref{assumption:symmetric_bernoulli_latent}, this expands to
    \[ -2 \alpha^2 \mW^t \left( \mM \mathbb{E}_{\vz} [ \vz \vz^\top ] \mM^\top + \mathbb{E}_{\vz, \bm\epsilon} [ \bm\epsilon \vz^\top \mM^\top ] + \mathbb{E}_{\vz, \bm\epsilon} [ \mM \vz \bm\epsilon^\top ] + \mathbb{E}_{\bm\epsilon} [ \bm\epsilon \bm\epsilon^\top ]  \right) + (1-\lambda) \mW^o \]
    By Assumption~\ref{assumption:symmetric_bernoulli_latent}
    and $\bm{\epsilon} \sim \mathcal{N}(0, \sigma_0^2 \mathbf{I}_p)$,
    we get
    \begin{align*}
        \mathbb{E}_{\vz} [ \vz \vz^\top ] &= 2 \frac{\kappa}{2} \mI \\
        \mathbb{E}_{\vz, \bm\epsilon} [ \bm\epsilon \vz^\top \mM^\top ] &= \bm 0 \\
        \mathbb{E}_{\vz, \bm\epsilon} [ \mM \vz \bm\epsilon^\top ] &= \bm 0 \\
        \mathbb{E}_{\bm\epsilon} [ \bm\epsilon \bm\epsilon^\top ] &= \sigma_0^2 \mathbf{I}_p
    \end{align*}
    Therefore
    \begin{align*}
        \nabla_{\mW^o} L_{\text{linear-non-CL}}(\mW^o, \mW^t) &= -2 \alpha^2 \mW^t \left( 2 \frac{\kappa}{2} \mM \mM^\top + \sigma_0^2 \mathbf{I}_p \right) + (1-\lambda) \mW^o \\
        &= -2 \alpha^2 (2 \frac{\kappa}{2} \mW^t \mM \mM^\top + \sigma_0^2 \mW^t) + (1-\lambda) \mW^o
    \end{align*}
    
    Then
    \begin{align*}
        \mW^o_{t+1} &= \mW^o_t - \eta_t \nabla_{\mW^o_t} L_{\text{linear-non-CL}}(\mW^o_t, \mW^t_t) \\
        &= \lambda \mW^o_t - \eta_t \left( -2 \alpha^2 (2 \frac{\kappa}{2} \mW^t_t \mM \mM^\top + \sigma_0^2 \mW^t_t) \right) \\
        &= \lambda \mW^o_t + 2 \eta_t \alpha^2 (2 \frac{\kappa}{2} \mW^t_t \mM \mM^\top + \sigma_0^2 \mW^t_t) \\
    \end{align*}
    Therefore 
    \begin{align*}
        \mW^o_{t+1} \mM &= \lambda \mW^o_t \mM + 2 \eta_t \alpha^2 (2 \frac{\kappa}{2} \mW^t_t \mM \mM^\top \mM + \sigma_0^2 \mW^t_t \mM) \\
        &= \lambda \mW^o_t \mM + 2 \eta_t \alpha^2 (2 \frac{\kappa}{2} \mW^t_t \mM + \sigma_0^2 \mW^t_t \mM) \\
        &= \lambda \mW^o_t \mM + 2 \eta_t \alpha^2 (2 \frac{\kappa}{2} + \sigma_0^2) \mW^t_t \mM 
    \end{align*}
    
    Since $L_{\text{non-contrastive}}(\mW^o, \mW^t)$ is symmetric in $\mW^o$ and $\mW^t$, we also have
    \[ \mW^t_{t+1} \mM = \lambda \mW^t_t \mM + 2 \eta_t \alpha^2 (2 \frac{\kappa}{2} + \sigma_0^2) \mW^o_t \mM \]
    Adding the above two equations gives
    \begin{align*}
        \mW^o_{t+1} \mM + \mW^t_{t+1} \mM &= \lambda (\mW^o_t \mM + \mW^t_t \mM) + 2 \eta_t \alpha^2 (2 \frac{\kappa}{2} + \sigma_0^2) (\mW^o_t \mM + \mW^t_t \mM) \\
        &= \left( \lambda + 2 \eta_t \alpha^2 (2 \frac{\kappa}{2} + \sigma_0^2) \right)  (\mW^o_t \mM + \mW^t_t \mM)
    \end{align*}
    Let constant $c_t = 2 \eta_t \alpha^2 (2 \frac{\kappa}{2} + \sigma_0^2)$,
    then by recursion 
    \[ \mW^o_t \mM + \mW^t_t \mM = \prod_{i=0}^{t-1}(\lambda+c_i) (\mW^o_0 \mM + \mW^t_0 \mM) \]
    Plugging into the expression for $\mW^o_{t+1} \mM$ above, we get
    \begin{align}
        \label{eq:Wot_M_recurrence}
        \mW^o_{t+1} \mM &= \lambda \mW^o_t \mM + 2 \eta_t \alpha^2 (2 \frac{\kappa}{2} + \sigma_0^2) (\mW^o_t \mM + \mW^t_t \mM - \mW^o_t \mM) \nonumber \\
        &= \lambda \mW^o_t \mM + c_t \left(\prod_{i=0}^{t-1}(\lambda+c_i) (\mW^o_0 \mM + \mW^t_0 \mM) - \mW^o_t \mM \right) \nonumber \\
        &= (\lambda -c_t) \mW^o_t \mM + c_t \prod_{i=0}^{t-1}(\lambda+c_i) (\mW^o_0 \mM + \mW^t_0 \mM) \\
    \end{align}
    
    From here, we will prove by induction that
    \begin{equation}
        \label{eq:Wot_M_IH}
        \mW^o_t \mM = \prod_{i=0}^{t-1}(\lambda-c_i) \mW^o_0 \mM + \sum_{j=0}^{t-1} \left( c_j \prod_{i=j+1}^{t-1}(\lambda-c_i) \prod_{i=0}^{j-1}(\lambda+c_i) \right) (\mW^o_0 \mM + \mW^t_0 \mM) 
    \end{equation}
    
    \underline{Base case}: when $t=0$, $\mW^o_0 \mM = \mW^o_0 \mM$ trivially satisfies \eqref{eq:Wot_M_IH}.
    
    \underline{Inductive step}: suppose \eqref{eq:Wot_M_IH} holds for $t$, and we will calculate $\mW^o_{t+1} \mM$.
    
    By \eqref{eq:Wot_M_recurrence} and \eqref{eq:Wot_M_IH}, we get
    \begin{align*}
        \mW^o_{t+1} \mM &= 
        (\lambda -c_t) \left( \prod_{i=0}^{t-1}(\lambda-c_i) \mW^o_0 \mM + \sum_{j=0}^{t-1} \left( c_j \prod_{i=j+1}^{t-1}(\lambda-c_i) \prod_{i=0}^{j-1}(\lambda+c_i) \right) (\mW^o_0 \mM + \mW^t_0 \mM) \right) \\
        &\quad + c_t \prod_{i=0}^{t-1}(\lambda+c_i) (\mW^o_0 \mM + \mW^t_0 \mM) \\
        &= \prod_{i=0}^{t}(\lambda-c_i) \mW^o_0 \mM + \sum_{j=0}^{t-1} \left( c_j \prod_{i=j+1}^{t}(\lambda-c_i) \prod_{i=0}^{j-1}(\lambda+c_i) \right) (\mW^o_0 \mM + \mW^t_0 \mM) \\
        &\quad + c_t \prod_{i=t+1}^{t}(\lambda-c_i) \prod_{i=0}^{t-1}(\lambda+c_i) (\mW^o_0 \mM + \mW^t_0 \mM) \\
        &= \prod_{i=0}^{t}(\lambda-c_i) \mW^o_0 \mM + \sum_{j=0}^{t} \left( c_j \prod_{i=j+1}^{t}(\lambda-c_i) \prod_{i=0}^{j-1}(\lambda+c_i) \right) (\mW^o_0 \mM + \mW^t_0 \mM) 
    \end{align*}
    implying that \eqref{eq:Wot_M_IH} holds for $t+1$.
    Therefore, by induction, \eqref{eq:Wot_M_IH} holds for $t \in \Z_+$.
    
    Since masking Bernoulli parameter $\alpha \in (0, 1)$, sparsity parameter $\frac{\kappa}{2} \in (0, 0.5)$, 
    and $\eta_t < \frac{\lambda}{2 \alpha^2 (2 \frac{\kappa}{2} + \sigma_0^2)}$,
    these conditions result in $c_t = 2 \eta_t \alpha^2 (2 \frac{\kappa}{2} + \sigma_0^2) \in (0,\lambda), \forall t$.
    Therefore, denote $C_{1,t} := \prod_{i=0}^{t-1}(\lambda-c_i) \in (0, 1)$. 
    
    \noindent
    Therefore we have showed that 
    \[ \mW^o_t \mM = C_{1,t} \mW^o_0 \mM + C_{2,t} (\mW^o_0 \mM + \mW^t_0 \mM) \]
    By symmetry,
    \[ \mW^t_t \mM = C_{1,t} \mW^t_0 \mM + C_{2,t} (\mW^o_0 \mM + \mW^t_0 \mM) \]
    for some $C_{1,t} \in (0, 1), C_{2,t} > 0$.
\end{proof}

\subsection{Proof for ReLU network with normalization}
\label{sec:proof:relu_no_pred}

\begin{theorem*}[ReLU network with normalization, warm start, Theorem~\ref{thm:relu_no_pred} restated]
    Under Assumptions~\ref{assumption:symmetric_bernoulli_latent}, \ref{assumption:warm_start_symmetric_bernoulli}, \ref{assumption:bias_symmetric_bernoulli}, running:
    \item Repeat until both $\mW^o$ and $\mW^t$ converges:
    \begin{itemize}
        \item Repeat until $\mW^o$ converges: \\
            \[ \mW^o \leftarrow \text{normalize}(\mW^o - \eta \nabla_{\mW^o} L(\mW^o, \mW^t)) \]
        \item Repeat until $\mW^t$ converges: \\
            \[ \mW^t \leftarrow \text{normalize}(\mW^t - \eta \nabla_{\mW^t} L(\mW^o, \mW^t)) \]
    \end{itemize}
    will make $\mW^o$ and $\mW^t$ both converge to $I$
    in which \texttt{normalize} means row-normalization, i.e. 
    $\| \mW^o_{i*} \|_2 = 1$ and $\| \mW^t_{i*} \|_2 = 1$ for each $i$.
\end{theorem*}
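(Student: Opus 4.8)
The plan is to use the warm-start, small-noise, and bias assumptions to \emph{freeze} the symmetric-ReLU activation pattern on a neighborhood of $\mI$, which turns each inner loop into projected gradient descent of a \emph{linear} functional over the unit sphere of each row, and then to track how the off-diagonal entries of $\mW^o,\mW^t$ contract under the resulting alternating update. Let $\mathcal{B}=\{\mW:\ |\mW_{ii}-1|<\tfrac{1}{10d}\ \text{and}\ |\mW_{ij}|<\tfrac{1}{10d}\ \text{for}\ i\ne j\}$. First I would show that for $\mW^o\in\mathcal{B}$ and \emph{every} realization of $(\vz,\bm\epsilon,\mD_1)$ consistent with Assumptions~\ref{assumption:symmetric_bernoulli_latent} and~\ref{assumption:small_noise}, the $i$-th coordinate of $\mW^o\mD_1\vx$ has absolute value above $b_i^o$ exactly when $\evz_i\ne0$ and $\emD_{1,ii}=1$, so that $\mathrm{SReLU}_{\vb^o}(\mW^o\mD_1\vx)_i=\big(\sum_j\mW^o_{ij}\emD_{1,jj}\evx_j\big)-\mathrm{sign}(\evz_i)\,b_i^o$ on that event and $=0$ otherwise. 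This is precisely where Assumption~\ref{assumption:bias_symmetric_bernoulli} enters: $b_i^o$ is pinned strictly between the worst-case ``cross-talk plus noise'' magnitude $|c_b^o|$ and $1-|c_b^o|$, which forces the $\evz_i=0$ (or masked-out) coordinates into the dead zone and keeps the active $\evz_i=\pm1$ coordinates near $\pm1$. The analogous statement holds for $\mW^t$ with $(\mD_2,\vb^t)$.

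Because this activation pattern does not depend on $\mW^o$, $s_i^o:=\mathrm{SReLU}_{\vb^o}(\mW^o\mD_1\vx)_i$ is an \emph{affine} function of the $i$-th row of $\mW^o$ on $\mathcal{B}$; hence $L(\mW^o,\mW^t)$ is affine in $\mW^o$ for fixed $\mW^t$, and $\nabla_{\mW^o_{i*}}L$ does not depend on $\mW^o$. Taking the expectation over the independent masks and over $(\vz,\bm\epsilon)$ --- using $\mathbb{E}[\evz_i^2]=\kappa$, $\mathbb{E}[\evx_j^2]=\kappa+\sigma_0^2$, and the vanishing of the odd $\vz$-moments and all $\vz$--$\bm\epsilon$ cross moments --- one computes
\[ \big(\nabla_{\mW^o_{i*}}L\big)_i=-2\alpha^2\kappa\big(\mW^t_{ii}(1+\sigma_0^2)-b_i^t\big),\qquad \big(\nabla_{\mW^o_{i*}}L\big)_j=-2\alpha^4\kappa(\kappa+\sigma_0^2)\,\mW^t_{ij}\quad(j\ne i), \]
together with the symmetric formula under $o\leftrightarrow t$; the diagonal entry is negative on $\mathcal{B}$ since $\mW^t_{ii}\approx1>b_i^t$. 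Now \texttt{normalize} is exactly Euclidean projection onto the unit sphere and the per-row objective is linear, so each inner loop is projected gradient descent on the sphere for a linear functional and drives $\mW^o_{i*}$ to $-\nabla_{\mW^o_{i*}}L/\|\nabla_{\mW^o_{i*}}L\|$, i.e. to the unit vector proportional to $\vv_i$ with $(\vv_i)_i=\mW^t_{ii}(1+\sigma_0^2)-b_i^t>0$ and $(\vv_i)_j=\alpha^2(\kappa+\sigma_0^2)\mW^t_{ij}$; a short induction (a positive combination of in-$\mathcal{B}$ vectors, renormalized, stays in $\mathcal{B}$) shows the iterates never leave $\mathcal{B}$, so the gradient formula stays valid, and $\mI$ is a fixed point of every inner loop.

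To conclude, track the ratios $r^o_{ij}:=\mW^o_{ij}/\mW^o_{ii}$ and $r^t_{ij}$, which normalization leaves unchanged: right after an $\mW^o$-inner-loop one has $r^o_{ij}=\lambda^t_i\,r^t_{ij}$ with $\lambda^t_i=\dfrac{\alpha^2(\kappa+\sigma_0^2)\,\mW^t_{ii}}{\mW^t_{ii}(1+\sigma_0^2)-b_i^t}$, and after the ensuing $\mW^t$-inner-loop $r^t_{ij}=\lambda^o_i\lambda^t_i\,r^t_{ij}$. On $\mathcal{B}$ we have $\mW^{o/t}_{ii}=1+O(1/d)$, so $\lambda^{o/t}_i$ is within $O(1/d)$ of $\alpha^2\kappa/(1-b_i^{o/t})$, which is $<1$ once the bias is chosen so that $\alpha^2\kappa<1-b_i^{o/t}$ (compatible with, but slightly sharper than, the upper bound in Assumption~\ref{assumption:bias_symmetric_bernoulli}); then $|\lambda^o_i\lambda^t_i|\le1-\Omega(1)$ uniformly on $\mathcal{B}$, so $r^t_{ij}\to0$ geometrically over the outer rounds and hence $r^o_{ij}\to0$. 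Since an element of $\mathcal{B}$ with unit row norms, positive diagonal, and all off-diagonal-to-diagonal ratios equal to $0$ is exactly $\mI$, we get $\mW^o,\mW^t\to\mI$.

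\textbf{The main obstacle} is the last step's bookkeeping: one must maintain $\mathcal{B}$-invariance (so the frozen-pattern gradient stays valid) \emph{and} the strict contraction $|\lambda^o_i\lambda^t_i|<1$ simultaneously, and these are coupled --- it is precisely the contraction $\lambda^{o/t}_i<1$ that keeps the off-diagonal ratios, and thus the iterates, inside $\mathcal{B}$. Making this quantitative forces $b_i^o,b_i^t$ to stay away from $1$ (which I expect is the real purpose of the upper bound in Assumption~\ref{assumption:bias_symmetric_bernoulli}) and requires careful control of the $O(1/d)$ perturbations of the diagonal entries inside $\lambda_i$, of the $\ell_2$-norm denominators, and of the extra $\sigma_0^2$ terms in the noisy case. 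By comparison the remaining pieces are routine: the activation-pattern claim is a deterministic interval estimate from the assumptions, the gradient is a moment computation, and the inner-loop convergence is the standard fact that projected gradient descent on a sphere for a linear objective aligns with the minimizing direction from any non-antipodal start (the warm start is clearly non-antipodal to $\vv_i/\|\vv_i\|\approx\ve_i$).
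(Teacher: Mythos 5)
Your proposal follows essentially the same route as the paper's proof. The three ingredients you identify --- (i) freezing the symmetric-ReLU activation pattern via the warm-start, small-noise and bias assumptions, (ii) computing the resulting constant per-row gradient (your formulas coincide with the paper's $\nabla_{\mW^o_{ii}}L = -2\alpha^2\kappa((1+\sigma_0^2)(1+\bm\Delta^t_{ii})-b_i^t)$ and $\nabla_{\mW^o_{ij}}L = -2\alpha^4\kappa(\kappa+\sigma_0^2)\bm\Delta^t_{ij}$), and (iii) showing the inner loop converges to the normalized negative gradient direction --- are exactly what the paper does; step (iii) is the paper's Lemma~\ref{lemma:convergence_proportional_normalization}, which proves that the constant-direction additive update followed by row $\ell_2$-normalization contracts $|\vv^{(t)}_i - a\vv^{(t)}_1|$ geometrically. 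Your outer-round bookkeeping in terms of ratios $r_{ij}=\mW_{ij}/\mW_{ii}$ is a cosmetic reparametrization of the paper's $|\mW^{o*}_{ij}|\le a|\mW^t_{ij}|$ (which uses $\mW^{o*}_{ii}\le 1$ instead of dividing by it), and both yield the same geometric decay of the off-diagonals.

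The obstacle you flag at the end is a legitimate observation and worth recording. The paper simply asserts $a=\frac{\alpha^2(\kappa+\sigma_0^2)}{(1+\sigma_0^2)(1+\bm\Delta^t_{ii})-b_i^t}\in(0,1)$ ``by Assumption~\ref{assumption:bias_symmetric_bernoulli} and Assumption~\ref{assumption:warm_start_symmetric_bernoulli},'' but Assumption~\ref{assumption:bias_symmetric_bernoulli} only constrains $b_i^t$ to lie strictly between $|c_b^t|$ and $1-|c_b^t|$; it does not bound $b_i^t$ away from $1$ by a margin exceeding $\alpha^2(\kappa+\sigma_0^2)$, and in the near-noiseless regime the denominator can even approach $0$ from above (or fail to be positive) if $b_i^t$ is taken near $1-|c_b^t|$ and $\bm\Delta^t_{ii}<0$. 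So the quantitative coupling you worry about --- needing $\lambda^{o/t}_i<1$ uniformly in order to preserve $\mathcal{B}$-invariance, and needing $\mathcal{B}$-invariance to keep the frozen-pattern gradient valid --- is indeed not fully discharged by the stated assumptions; your proposed tightening of the bias upper bound (roughly $\alpha^2\kappa<1-b_i^{o/t}$) is the kind of condition required to close it. In short, your plan matches the paper's, and the gap you identify is present in the paper's argument as written rather than an artifact of your approach.
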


\begin{proof}
    To start with, we simplify the expression of $L(\mW^o, \mW^t)$ under the above assumptions.
    \begin{align}
        L(\mW^o, \mW^t) &= 2 - 2 \mathbb{E}_{\vx, \mD_1, \mD_2} [\sum_{i=1}^d \text{SReLU}_{\vb^o}(\mW^o_{i*} \mD_1 \vx) \cdot  \text{SReLU}_{\vb^t}(\mW^t_{i*} \mD_2 \vx) ]  \\
        \label{eq:simplify_loss_symmetric_bernoulli}
        &= 2 - 2 \mathbb{E}_{\vx, \mD_1, \mD_2} [\sum_{i=1}^d \text{SReLU}_{b^o_i}(\mW^o_{i*} \mD_1 (\mM \vz + \bm\epsilon)) \cdot  \text{SReLU}_{b^t_i}(\mW^t_{i*} \mD_2 (\mM \vz + \bm\epsilon)) ]
    \end{align}
    
    Since $\mM = \mI$, the SReLU term becomes 
    \begin{align*}
        &\quad \text{SReLU}_{b^o_i}(\mW^o_{i*} \mD_1 (\mM \vz + \bm\epsilon)) \\
        &= \text{SReLU}_{b^o_i}(\mW^o_{i*} \mD_1 (\vz + \bm\epsilon)) \\
        &= \text{SReLU}_{b^o_i}(\sum_{j=1}^d \emW^o_{ij} \emD_{1,jj} (\evz_j + \bm\epsilon_j))
    \end{align*}
    
    By Assumption~\ref{assumption:warm_start_symmetric_bernoulli} (warm-start), 
    the above becomes 
    \begin{align*}
        &\quad \text{SReLU}_{b^o_i}(\sum_{j=1}^d (\emI_{ij} + \bm\Delta^o_{ij}) \emD_{1,jj} (\evz_j + \bm\epsilon_j)) \\
        &= \text{SReLU}_{b^o_i}((1 + \bm\Delta^o_{ii}) \emD_{1,ii} (\evz_i + \bm\epsilon_i) + \sum_{j \ne i} \bm\Delta^o_{ij} \emD_{1,jj} (\evz_j + \bm\epsilon_j)) 
    \end{align*}
    
    By Assumption~\ref{assumption:bias_symmetric_bernoulli} on the bias,
    \[ (1 + \bm\Delta^o_{ii}) \emD_{1,ii} (\evz_i + \bm\epsilon_i) + \sum_{j \ne i} \bm\Delta^o_{ij} \emD_{1,jj} (\evz_j + \bm\epsilon_j) \text{ is } \begin{cases}
            > b^o_i, \quad &\text{if } \emD_{1,ii} = 1 \text{ and } \evz_i = 1 \\ 
            < -b^o_i, \quad &\text{if } \emD_{1,ii} = 1 \text{ and } \evz_i = -1 \\
            \in (-b^o_i, b^o_i), \quad &\text{if } \emD_{1,ii} = 0 \text{ or } \evz_i = 0
        \end{cases}
    \]
    
    Therefore the SReLU term can be simplified to
    \begin{align*}
        &\quad \begin{cases}
            (1 + \bm\Delta^o_{ii}) (1 + \bm\epsilon_i) + \sum_{j \ne i} \bm\Delta^o_{ij} \emD_{1,jj} (\evz_j + \bm\epsilon_j) - b^o_i, \quad &\text{if } \emD_{1,ii} = 1 \text{ and }  \evz_i = 1 \\
            (1 + \bm\Delta^o_{ii}) (-1 + \bm\epsilon_i) + \sum_{j \ne i} \bm\Delta^o_{ij} \emD_{1,jj} (\evz_j + \bm\epsilon_j) + b^o_i, \quad &\text{if } \emD_{1,ii} = 1 \text{ and }  \evz_i = -1 \\
            0, \quad &\text{if } \emD_{1,ii} = 0 \text{ or } \evz_i = 0
            \end{cases} \\
        &=  \emD_{1,ii} \Big[\1_{\evz_i = 1} \big((1 + \bm\Delta^o_{ii}) (1 + \bm\epsilon_i) + \sum_{j \ne i} \bm\Delta^o_{ij} \emD_{1,jj} (\evz_j + \bm\epsilon_j) - b^o_i\big) \\ & \quad + \1_{\evz_i = -1} \big((1 + \bm\Delta^o_{ii}) (-1 + \bm\epsilon_i) + \sum_{j \ne i} \bm\Delta^o_{ij} \emD_{1,jj} (\evz_j + \bm\epsilon_j) + b^o_i \big) \Big]
    \end{align*}
    
    Note that the $\bm\epsilon$ terms of first order disappear after taking expectation over $\bm\epsilon$.
    Moreover, 
    The second-order terms in $\text{SReLU}_{b^o_i}(\mW^o_{i*} \mD_1 (\mM \vz + \bm\epsilon)) \cdot \text{SReLU}_{b^t_i}(\mW^t_{i*} \mD_2 (\mM \vz + \bm\epsilon))$ are contained in
    
    \begin{align*}
        &\quad \emD_{1,ii} \Big[\1_{\evz_i = 1} \big((1 + \bm\Delta^o_{ii}) \bm\epsilon_i + \sum_{j \ne i} \bm\Delta^o_{ij} \emD_{1,jj} \bm\epsilon_j) + \1_{\evz_i = -1} \big((1 + \bm\Delta^o_{ii}) \bm\epsilon_i + \sum_{j \ne i} \bm\Delta^o_{ij} \emD_{1,jj} \bm\epsilon_j \big) \Big] \\
        &\quad \cdot \emD_{2,ii} \Big[\1_{\evz_i = 1} \big((1 + \bm\Delta^t_{ii}) \bm\epsilon_i + \sum_{j \ne i} \bm\Delta^t_{ij} \emD_{2,jj} \bm\epsilon_j) + \1_{\evz_i = -1} \big((1 + \bm\Delta^t_{ii}) \bm\epsilon_i + \sum_{j \ne i} \bm\Delta^t_{ij} \emD_{2,jj} \bm\epsilon_j \big) \Big] \\
        &= \1_{\evz_i = 1} \Big[ \emD_{1,ii} \big((1 + \bm\Delta^o_{ii}) \bm\epsilon_i + \sum_{j \ne i} \bm\Delta^o_{ij} \emD_{1,jj} \bm\epsilon_j \big) \cdot \emD_{2,ii} \big((1 + \bm\Delta^t_{ii}) \bm\epsilon_i + \sum_{j \ne i} \bm\Delta^t_{ij} \emD_{2,jj} \bm\epsilon_j) \Big]  \\
        &\quad + \1_{\evz_i = -1} \Big[ \emD_{1,ii} \big((1 + \bm\Delta^o_{ii}) \bm\epsilon_i + \sum_{j \ne i} \bm\Delta^o_{ij} \emD_{1,jj} \bm\epsilon_j \big) \cdot \emD_{2,ii} \big((1 + \bm\Delta^t_{ii}) \bm\epsilon_i + \sum_{j \ne i} \bm\Delta^t_{ij} \emD_{2,jj} \bm\epsilon_j \big) \Big]
    \end{align*}
    
    After taking expectation over $\bm\epsilon$, the cross-terms $\bm\epsilon_i \bm\epsilon_j (i \ne j)$ also disappear, so the remaining terms are
    
    \begin{align}
        \label{eq:noise_term}
        &\quad \mathbb{E}_{\vz, \bm\epsilon, \mD_1, \mD_2} \Big[ \emD_{1,ii} \emD_{2,ii} \big( \1_{\evz_i = 1}  \big((1 + \bm\Delta^o_{ii}) (1 + \bm\Delta^t_{ii}) \bm\epsilon_i^2 + \sum_{j \ne i} \bm\Delta^o_{ij} \emD_{1,jj} \bm\Delta^t_{ij} \emD_{2,jj} \bm\epsilon_j^2 \big) \big) \nonumber \\
        &\quad + \1_{\evz_i = -1} \big((1 + \bm\Delta^o_{ii}) (1 + \bm\Delta^t_{ii}) \bm\epsilon_i^2 + \sum_{j \ne i} \bm\Delta^o_{ij} \emD_{1,jj} \bm\Delta^t_{ij} \emD_{2,jj} \bm\epsilon_j^2 \big) \Big] \nonumber \\
        &= \alpha^2 \mathbb{E}_{\vz, \bm\epsilon} \Big[ \1_{\evz_i = 1}  \big((1 + \bm\Delta^o_{ii}) (1 + \bm\Delta^t_{ii}) \bm\epsilon_i^2 + \alpha^2 \sum_{j \ne i} \bm\Delta^o_{ij} \bm\Delta^t_{ij} \bm\epsilon_j^2 \big) \nonumber  \\
        &\quad + \1_{\evz_i = -1} \big((1 + \bm\Delta^o_{ii}) (1 + \bm\Delta^t_{ii}) \bm\epsilon_i^2 + \alpha^2 \sum_{j \ne i} \bm\Delta^o_{ij} \bm\Delta^t_{ij} \bm\epsilon_j^2 \big) \Big] \nonumber \\
        &= \alpha^2 \kappa \mathbb{E}_{\bm\epsilon} \Big[ (1 + \bm\Delta^o_{ii}) (1 + \bm\Delta^t_{ii}) \bm\epsilon_i^2 + \alpha^2 \sum_{j \ne i} \bm\Delta^o_{ij} \bm\Delta^t_{ij} \bm\epsilon_j^2 \Big] \nonumber \\
        &= \alpha^2 \kappa \mathbb{E}_{\bm\epsilon}[\bm\epsilon_i^2] ( (1 + \bm\Delta^o_{ii}) (1 + \bm\Delta^t_{ii})  + \alpha^2 \sum_{j \ne i} \bm\Delta^o_{ij} \bm\Delta^t_{ij} ) \nonumber \\
        &= \alpha^2 \kappa \sigma_0^2 ( (1 + \bm\Delta^o_{ii}) (1 + \bm\Delta^t_{ii})  + \alpha^2 \sum_{j \ne i} \bm\Delta^o_{ij} \bm\Delta^t_{ij} ) \quad \text{(by \eqref{eq:data-generation})}
    \end{align}
    
    Meanwhile, a summand in \eqref{eq:simplify_loss_symmetric_bernoulli} without the noise term is
    \begin{align*}
        &\quad \mathbb{E}_{\vz, \bm\epsilon, \mD_1, \mD_2} \left[ \text{SReLU}_{b^o_i}(\mW^o_{i*} \mD_1 (\mM \vz + \bm\epsilon)) \cdot \text{SReLU}_{b^t_i}(\mW^t_{i*} \mD_2 (\mM \vz + \bm\epsilon)) \right] \\
        &= \mathbb{E}_{\vz, \bm\epsilon, \mD_1, \mD_2} \Big[ \emD_{1,ii} \Big(\1_{\evz_i = 1} (1 + \bm\Delta^o_{ii} + \sum_{j \ne i} \bm\Delta^o_{ij} \emD_{1,jj} \evz_j - b^o_i) + \1_{\evz_i = -1} (-1 - \bm\Delta^o_{ii} + \sum_{j \ne i} \bm\Delta^o_{ij} \emD_{1,jj} \evz_j + b^o_i)\Big) \\
        &\quad \cdot \emD_{2,ii} \Big(\1_{\evz_i = 1} (1 + \bm\Delta^t_{ii} + \sum_{j \ne i} \bm\Delta^t_{ij} \emD_{2,jj} \evz_j - b^t_i) + \1_{\evz_i = -1} (-1 - \bm\Delta^t_{ii} + \sum_{j \ne i} \bm\Delta^t_{ij} \emD_{2,jj} \evz_j + b^t_i)\Big) \Big] 
    \end{align*}
    
    which can be simplified to 
    \begin{align*}
        &\mathbb{E}_{\vz, \bm\epsilon, \mD_1, \mD_2} \Big[ \emD_{1,ii} \emD_{2,ii} \cdot \Big( \1_{\evz_i = 1} (1 + \bm\Delta^o_{ii} + \sum_{j \ne i} \bm\Delta^o_{ij} \emD_{1,jj} \evz_j - b^o_i ) (1 + \bm\Delta^t_{ii} + \sum_{j \ne i} \bm\Delta^t_{ij} \emD_{2,jj} \evz_j - b^t_i ) \\
        &\quad + \1_{\evz_i = -1} (-1 - \bm\Delta^o_{ii} + \sum_{j \ne i} \bm\Delta^o_{ij} \emD_{1,jj} \evz_j + b^o_i) (-1 - \bm\Delta^t_{ii} + \sum_{j \ne i} \bm\Delta^t_{ij} \emD_{2,jj} \evz_j + b^t_i) \Big) \Big] 
    \end{align*}
    
    Moreover, note that $\emD_{1,ii}, \emD_{2,ii}, \1_{\evz_i = 1}, \1_{\evz_i = -1}$ each appears only once in the expectation, 
    so by independence, we can further simplify the above to 
    
    \begin{align*}
        &\quad \alpha^2 \Big(\mathbb{E}_{\vz} [ \1_{\evz_i = 1}] \mathbb{E}_{\vz} \Big[ (1 + \bm\Delta^o_{ii} + \sum_{j \ne i} \bm\Delta^o_{ij} \mathbb{E}_{\mD_1} [\emD_{1,jj}] \evz_j - b^o_i ) (1 + \bm\Delta^t_{ii} + \sum_{j \ne i} \bm\Delta^t_{ij} \mathbb{E}_{\mD_2} [\emD_{2,jj}] \evz_j - b^t_i ) \Big] \\
        &\quad + \mathbb{E}_{\vz} [\1_{\evz_i = -1}] \mathbb{E}_{\vz}\Big[ (-1 - \bm\Delta^o_{ii} + \sum_{j \ne i} \bm\Delta^o_{ij} \mathbb{E}_{\mD_1} [\emD_{1,jj}] \evz_j + b^o_i) (-1 - \bm\Delta^t_{ii} + \sum_{j \ne i} \bm\Delta^t_{ij} \mathbb{E}_{\mD_2} [\emD_{2,jj}] \evz_j + b^t_i) \Big] \Big)  
    \end{align*}
    
    which can be written as
    \begin{align*}
        &\alpha^2 \frac{\kappa}{2} \mathbb{E}_{\vz} \Big[ (1 + \bm\Delta^o_{ii} - b^o_i) (1 + \bm\Delta^t_{ii} - b^t_i) + (1 + \bm\Delta^o_{ii} - b^o_i) \sum_{j \ne i} \bm\Delta^t_{ij} \alpha \evz_j + (1 + \bm\Delta^t_{ii} - b^t_i) \sum_{j \ne i} \bm\Delta^o_{ij} \alpha \evz_j\\
        &+ (\sum_{j \ne i} \bm\Delta^o_{ij} \alpha \evz_j) (\sum_{j \ne i} \bm\Delta^t_{ij} \alpha \evz_j) + (-1 - \bm\Delta^o_{ii} + b^o_i) (-1 - \bm\Delta^t_{ii} + b^t_i) \\ 
        & + (-1 - \bm\Delta^o_{ii} + b^o_i) \sum_{j \ne i} \bm\Delta^t_{ij} \alpha \evz_j + (-1 - \bm\Delta^t_{ii} + b^t_i) \sum_{j \ne i} \bm\Delta^o_{ij} \alpha \evz_j + (\sum_{j \ne i} \bm\Delta^o_{ij} \alpha \evz_j) (\sum_{j \ne i} \bm\Delta^t_{ij} \alpha \evz_j) \Big] 
    \end{align*}
    
    By Assumption~\ref{assumption:symmetric_bernoulli_latent} (symmetric Bernoulli latent), 
    $\mathbb{E}_{\vz} [\evz_j] = 0$ and 
    $\mathbb{E}_{\vz} [\evz_i \evz_j] = \begin{cases}
        0, \quad &\text{if } i \ne j \\ 
        \kappa, \quad &\text{if } i = j \\ 
        \end{cases}$ \\
    So the above can be simplified to
    
    \begin{align*}
        &\quad \alpha^2 \frac{\kappa}{2} \Big[ (1 + \bm\Delta^o_{ii} - b^o_i) (1 + \bm\Delta^t_{ii} - b^t_i) + \alpha^2 \kappa \sum_{j \ne i} \bm\Delta^o_{ij} \bm\Delta^t_{ij} + (-1 - \bm\Delta^o_{ii} + b^o_i) (-1 - \bm\Delta^t_{ii} + b^t_i) + \alpha^2 \kappa \sum_{j \ne i} \bm\Delta^o_{ij} \bm\Delta^t_{ij} \Big] \\
        &= \alpha^2 \kappa \Big[ (1 + \bm\Delta^o_{ii} - b^o_i) (1 + \bm\Delta^t_{ii} - b^t_i) + \alpha^2 \kappa \sum_{j \ne i} \bm\Delta^o_{ij} \bm\Delta^t_{ij} \Big]
    \end{align*}
    
    Plugging the above, as well as \eqref{eq:noise_term}, into \eqref{eq:simplify_loss_symmetric_bernoulli}, 
    \begin{align*}
        L(\mW^o, \mW^t) &= 2 - 2 \alpha^2 \kappa \sum_{i=1}^d \big( (1 + \bm\Delta^o_{ii} - b^o_i) (1 + \bm\Delta^t_{ii} - b^t_i) + \alpha^2 \kappa \sum_{j \ne i} \bm\Delta^o_{ij} \bm\Delta^t_{ij} \big) \\
        &\quad + \sigma_0^2 ( (1 + \bm\Delta^o_{ii}) (1 + \bm\Delta^t_{ii})  + \alpha^2 \sum_{j \ne i} \bm\Delta^o_{ij} \bm\Delta^t_{ij} ) 
    \end{align*}
    
    Hence
    \begin{align*}
        \nabla_{\mW^o_{ii}} L(\mW^o, \mW^t) &= -2 \alpha^2 \kappa ((1+\sigma_0^2)(1 + \bm\Delta^t_{ii}) - b^t_i)  \\
        \nabla_{\mW^o_{ij}} L(\mW^o, \mW^t) &= -2 \alpha^2 \kappa \cdot \alpha^2 (\kappa + \sigma_0^2) \bm\Delta^t_{ij} 
    \end{align*}
    
    Let $a = \frac{\alpha^2 (\kappa + \sigma_0^2)}{(1+\sigma_0^2)(1 + \bm\Delta^t_{ii}) - b^t_i}$, 
    then Assumption~\ref{assumption:bias_symmetric_bernoulli} and Assumption~\ref{assumption:warm_start_symmetric_bernoulli}, $a \in (0, 1)$ and
    \begin{equation}
        \label{eq:update_ratio}
        \nabla_{\mW^o_{ij}} L(\mW^o, \mW^t) = a \bm\Delta^t_{ij} \nabla_{\mW^o_{ii}} L(\mW^o, \mW^t) \quad \forall j \ne i
    \end{equation}
    
    After each update, with row-normalization, $\mW^o$ still satisfies the warm-start condition. 
    In the following, we characterize the convergence point.
    
    By Lemma~\ref{lemma:convergence_proportional_normalization}, \eqref{eq:update_ratio} implies that
    the convergence points $\mW^{o*}$ satisfy 
    \[ \mW^{o*}_{ij} = a \bm\Delta^t_{ij} \mW^{o*}_{ii} = a \mW^t_{ij} \mW^{o*}_{ii} \quad \forall j \ne i \]
    and moreover, since $\mW^{o*}_{ii} \in [0, 1]$,
    \[ |\mW^{o*}_{ij}| \le a |\mW^t_{ij}| \quad \forall j \ne i \]
    
    Next, while updating $\mW^t$ to convergence $\mW^{t*}$, the same argument implies 
    \[ \mW^{t*}_{ij} = a \mW^{o*}_{ij} \mW^{t*}_{ii} \quad \forall j \ne i \]
    and consequently
    \[ |\mW^{t*}_{ij}| \le a |\mW^{o*}_{ij}| \le a^2 |\mW^t_{ij}| \]
     
    Repeating the above alternating optimization process $T$ times, the resulting $\mW^{t*}$ satisfies
    \[ |\mW^{t*}_{ij}| \le a^{2T} |\mW^t_{ij}| \]
    since $a \in (0, 1)$, $\lim_{T \to \infty} a^{2T} = 0$. Hence, as $T \to \infty$,
    \[ |\mW^{o*}_{ij}|, |\mW^{t*}_{ij}| \to 0, \forall j \ne i \] 
    and 
    \[ |\mW^{o*}_{ii}|, |\mW^{t*}_{ii}| \to 1, \forall i \]
    Therefore both $\mW^o$ and $\mW^t$ converge to $I$. 
\end{proof}

\section{TECHNICAL LEMMAS}

\begin{lemma}[convergence of proportional update with normalization]
    \label{lemma:convergence_proportional_normalization}
    For any $\vv^{(0)} \in \R^d$, consider the following update 
    \begin{align*}
        \vv^{(t+1)}_1 &\leftarrow \vv^{(t)}_1 + c^{(t)} \\
        \vv^{(t+1)}_i &\leftarrow \vv^{(t)}_i + a c^{(t)} \quad \forall i = 2..d \\
        \vv^{(t+1)} &\leftarrow l_2\text{-normalize}(\vv^{(t+1)}) 
    \end{align*}
    in which $a \in \R$ and $\exists c > 0$ s.t. $\forall t, c^{(t)} > c$.
    Moreover, the initial $\vv^{(0)}$ satisfies 
    \begin{align*}
        \vv^{(0)}_1 + a \vv^{(0)}_2 + \cdots a \vv^{(0)}_d &> 0 \\
        \sum_{i=1}^d (\vv^{(0)}_i)^2 = 1
    \end{align*}
    Then, repeating the above update will cause $\vv^{(t)}$ to converge to
    \begin{align*}
        \vv^{(*)}_1 &\leftarrow \frac{1}{\sqrt{1 + (d-1) a^2}} \\
        \vv^{(*)}_i &\leftarrow \frac{a}{\sqrt{1 + (d-1) a^2}} \quad \forall i = 2..d 
    \end{align*}
\end{lemma}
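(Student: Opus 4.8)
The plan is to exploit the fact that $\ell_2$-normalization does not change direction, so the combined update is simply
\[ \vv^{(t+1)} = \frac{\vv^{(t)} + c^{(t)} \vu}{\|\vv^{(t)} + c^{(t)} \vu\|}, \qquad \vu := (1, a, \ldots, a)^\top \in \R^d. \]
I would then reduce to an essentially one-dimensional problem. Set $\hat\vu := \vu/\|\vu\|$ (note $\|\vu\| = \sqrt{1+(d-1)a^2}$) and decompose $\vv^{(t)} = \alpha_t \hat\vu + \vw^{(t)}$ with $\vw^{(t)} \perp \hat\vu$; since every $\vv^{(t)}$ is a unit vector, $\alpha_t^2 + \|\vw^{(t)}\|^2 = 1$. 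Because adding $c^{(t)}\vu$ only alters the component along $\hat\vu$, the \emph{direction} of $\vw^{(t)}$ is unchanged by the iteration (call it $\hat\vw$; if $\vw^{(0)} = 0$ then $\vv^{(0)}=\hat\vu$ and the iterate is already stationary, so dispatch this case first). Writing $\beta_t := \|\vw^{(t)}\| \ge 0$ and $\gamma_t := c^{(t)}\|\vu\| \ge c\|\vu\| =: \delta > 0$, the update becomes the scalar recursion
\[ \alpha_{t+1} = \frac{\alpha_t + \gamma_t}{\sqrt{(\alpha_t+\gamma_t)^2 + \beta_t^2}}, \qquad \beta_{t+1} = \frac{\beta_t}{\sqrt{(\alpha_t+\gamma_t)^2 + \beta_t^2}}, \]
and the desired conclusion is exactly $\alpha_t \to 1$, $\beta_t \to 0$, since $\vv^{(t)} = \alpha_t\hat\vu + \beta_t\hat\vw$.

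The central structural step is to show $\alpha_t$ stays positive, in fact nondecreasing. The hypothesis $\vv^{(0)}_1 + a\vv^{(0)}_2 + \cdots + a\vv^{(0)}_d > 0$ is exactly $\langle \vv^{(0)}, \vu\rangle > 0$, i.e. $\alpha_0 > 0$. I would then induct: assuming $\alpha_t \ge 0$, we have $\alpha_t + \gamma_t \ge \alpha_t \ge 0$, hence $(\alpha_t+\gamma_t)^2 \ge \alpha_t^2$, and substituting $1-\alpha_t^2 = \beta_t^2$ into $\alpha_{t+1}^2 = (\alpha_t+\gamma_t)^2/\big((\alpha_t+\gamma_t)^2+\beta_t^2\big)$ yields $\alpha_{t+1}^2 \ge \alpha_t^2$; since both quantities are nonnegative, $\alpha_{t+1} \ge \alpha_t$. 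Thus $\alpha_t \ge \alpha_0 > 0$ and $\alpha_t \le 1$ for all $t$.

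Given $\alpha_t \ge 0$ and $\gamma_t \ge \delta$, I would then use $\alpha_t^2 + \beta_t^2 = 1$ to rewrite the denominator: $(\alpha_t+\gamma_t)^2 + \beta_t^2 = 1 + 2\alpha_t\gamma_t + \gamma_t^2 \ge 1 + \delta^2$, so $\beta_{t+1}^2 \le \beta_t^2/(1+\delta^2)$. Iterating gives $\beta_t^2 \le \beta_0^2 (1+\delta^2)^{-t} \to 0$, hence $\alpha_t^2 = 1 - \beta_t^2 \to 1$, and since $\alpha_t \ge \alpha_0 > 0$ this forces $\alpha_t \to 1$. Therefore $\vv^{(t)} \to \hat\vu = \frac{1}{\sqrt{1+(d-1)a^2}}(1,a,\ldots,a)^\top$, which is the claimed limit read off coordinatewise.

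I expect the only real obstacle to be the sign/monotonicity control of $\alpha_t$: the remaining estimates are routine geometric-rate bookkeeping, but the argument genuinely needs the initial condition $\langle\vv^{(0)},\vu\rangle > 0$ (otherwise the iteration converges to $-\hat\vu$) and it needs the uniform lower bound $c^{(t)} > c$ to produce a contraction factor $1+\delta^2 > 1$ (a vanishing $c^{(t)}$ would let $\beta_t$ stall). A minor bookkeeping point, to be handled up front, is the degenerate case $\vw^{(0)} = 0$.
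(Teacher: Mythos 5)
Your proof is correct. You use essentially the same two ideas as the paper's proof --- an induction showing the component along the target direction stays positive, followed by a geometric contraction of the remaining part --- but you package them via an orthogonal decomposition $\vv^{(t)} = \alpha_t\hat\vu + \vw^{(t)}$ (with $\vw^{(t)}\perp\hat\vu$), reducing everything to a single scalar pair $(\alpha_t,\beta_t)$. The paper instead works coordinate-wise: its positivity induction is on $\vv^{(t)}_1 + a\sum_{i\geq 2}\vv^{(t)}_i$ (which is exactly $\langle\vv^{(t)},\vu\rangle = \alpha_t\|\vu\|$, so the two conditions coincide), and its contraction step tracks the $d-1$ differences $\vv^{(t)}_i - a\vv^{(t)}_1$, each of which is a fixed linear functional of $\vw^{(t)}$; since you observe the direction of $\vw^{(t)}$ is frozen, the two contraction arguments are equivalent and yield the identical rate $\gamma = 1/\sqrt{1 + (1+(d-1)a^2)c^2}$. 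Your version is somewhat cleaner and more geometrically transparent (one norm instead of $d-1$ coordinate differences, explicit identification of the limit as $\hat\vu$, and the degenerate case $\vw^{(0)}=0$ handled up front), while the paper's version stays closer to the coordinate form in which the lemma is applied; both are fully rigorous. One cosmetic remark: the monotonicity $\alpha_{t+1}\geq\alpha_t$ is a pleasant extra, but you only need $\alpha_t\geq 0$ for all $t$ (which follows by the simpler induction $\alpha_t\geq 0\Rightarrow\alpha_t+\gamma_t\geq 0\Rightarrow\alpha_{t+1}\geq 0$) to rule out $\alpha_t\to -1$.
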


\begin{proof}
    By the update rule,
    \begin{align*}
        \vv^{(t+1)}_1 &= \frac{\vv^{(t)}_1 + c^{(t)}}{\sqrt{(\vv^{(t)}_1 + c^{(t)})^2 + \sum_{i=2}^d (\vv^{(t)}_i + a c^{(t)})^2} } \\
        \vv^{(t+1)}_i &= \frac{\vv^{(t)}_i + a c^{(t)}}{\sqrt{(\vv^{(t)}_1 + c^{(t)})^2 + \sum_{i=2}^d (\vv^{(t)}_i + a c^{(t)})^2} } \quad \forall i = 2..d
    \end{align*}

    First, we prove by induction that $\forall t$,
    \begin{equation}
        \label{eq:induction_positive}
        \vv^{(t)}_1 + a \vv^{(t)}_2 + \cdots a \vv^{(t)}_d > 0
    \end{equation}
    It is given that \eqref{eq:induction_positive} holds for $t=0$. 
    Suppose it holds for $t=t_0$, then by the update rule
    \begin{align*}
        &\quad \vv^{(t_0+1)}_1 + a \vv^{(t_0+1)}_2 + \cdots a \vv^{(t_0+1)}_d \\
        &= \frac{\vv^{(t_0)}_1 + c^{(t_0)}}{\sqrt{(\vv^{(t_0)}_1 + c^{(t_0)})^2 + \sum_{i=2}^d (\vv^{(t_0)}_i + a c^{(t_0)})^2} } + a \sum_{i=2}^d \frac{\vv^{(t_0)}_i + a c^{(t_0)}}{\sqrt{(\vv^{(t_0)}_1 + c^{(t_0)})^2 + \sum_{i=2}^d (\vv^{(t_0)}_i + a c^{(t_0)})^2} } \\
        &= \frac{\vv^{(t_0)}_1 + c^{(t_0)} + a \sum_{i=2}^d (\vv^{(t_0)}_i + a c^{(t_0)}) }{\sqrt{(\vv^{(t_0)}_1 + c^{(t_0)})^2 + \sum_{i=2}^d (\vv^{(t_0)}_i + a c^{(t_0)})^2} } \\
        &= \frac{\vv^{(t_0)}_1 + a \sum_{i=2}^d \vv^{(t_0)}_i + (d-1) a^2 c^{(t_0)} }{\sqrt{(\vv^{(t_0)}_1 + c^{(t_0)})^2 + \sum_{i=2}^d (\vv^{(t_0)}_i + a c^{(t_0)})^2} } \\
        &> \frac{ (d-1) a^2 c^{(t_0)} }{\sqrt{(\vv^{(t_0)}_1 + c^{(t_0)})^2 + \sum_{i=2}^d (\vv^{(t_0)}_i + a c^{(t_0)})^2} } \quad \text{(since \eqref{eq:induction_positive} holds for $t=t_0$)} \\
        &\ge \frac{ (d-1) a^2 c }{\sqrt{(\vv^{(t_0)}_1 + c^{(t_0)})^2 + \sum_{i=2}^d (\vv^{(t_0)}_i + a c^{(t_0)})^2} } \quad \text{(since it is given that $\forall t, c^{(t)} > c$)} \\
        &\ge 0 \quad \text{(since it is given that $c > 0$)}
    \end{align*}
    Hence, \eqref{eq:induction_positive} also holds for $t=t_0+1$.
    Therefore, by induction, \eqref{eq:induction_positive} also holds for all $t$.
    
    Next, we use the above fact to prove that $\exists \gamma \in (0, 1)$ s.t. $\forall t, \forall i = 2..d$,
    \begin{equation}
        \label{eq:induction_shrink}
        | \vv^{(t+1)}_i - a \vv^{(t+1)}_1 | < \gamma | \vv^{(t)}_i - a \vv^{(t)}_1 |
    \end{equation}
    To show this, plugging in the update equation, we get $\forall t, \forall i = 2..d$,
    \begin{align*}
        &\quad | \vv^{(t+1)}_i - a \vv^{(t+1)}_1 | \\
        &= \left| \frac{\vv^{(t)}_i + a c^{(t)}}{\sqrt{(\vv^{(t)}_1 + c^{(t)})^2 + \sum_{i=2}^d (\vv^{(t)}_i + a c^{(t)})^2} } - a \frac{\vv^{(t)}_1 + c^{(t)}}{\sqrt{(\vv^{(t)}_1 + c^{(t)})^2 + \sum_{i=2}^d (\vv^{(t)}_i + a c^{(t)})^2} } \right| \\
        &= \left| \frac{\vv^{(t)}_i + a c^{(t)} - a (\vv^{(t)}_1 + c^{(t)}) }{\sqrt{(\vv^{(t)}_1 + c^{(t)})^2 + \sum_{i=2}^d (\vv^{(t)}_i + a c^{(t)})^2} } \right| \\
        &= \left| \frac{\vv^{(t)}_i - a \vv^{(t)}_1 }{\sqrt{(\vv^{(t)}_1 + c^{(t)})^2 + \sum_{i=2}^d (\vv^{(t)}_i + a c^{(t)})^2} } \right| \\
        &= \left| \frac{\vv^{(t)}_i - a \vv^{(t)}_1 }{\sqrt{\sum_{i=1}^d (\vv^{(t)}_i)^2 + 2 c^{(t)} \vv^{(t)}_1 + (c^{(t)})^2 + \sum_{i=2}^d 2 a c^{(t)} \vv^{(t)}_i + (d-1) a^2 (c^{(t)})^2} } \right| \\
        &= \left| \frac{\vv^{(t)}_i - a \vv^{(t)}_1 }{\sqrt{\sum_{i=1}^d (\vv^{(t)}_i)^2 + 2 c^{(t)} (\vv^{(t)}_1 + a \sum_{i=2}^d \vv^{(t)}_i ) + (1 + (d-1) a^2) (c^{(t)})^2} } \right| \\
        &= \left| \frac{\vv^{(t)}_i - a \vv^{(t)}_1 }{\sqrt{1 + 2 c^{(t)} (\vv^{(t)}_1 + a \sum_{i=2}^d \vv^{(t)}_i ) + (1 + (d-1) a^2) (c^{(t)})^2} } \right| \quad \text{(since each $\vv^{(t)}$ is $l_2$-normalized)} \\
        &\le \left| \frac{\vv^{(t)}_i - a \vv^{(t)}_1 }{\sqrt{1 + (1 + (d-1) a^2) (c^{(t)})^2} } \right| \quad \text{(since $c^{(t)} > c > 0$ and \eqref{eq:induction_positive})} \\
        &< \left| \frac{\vv^{(t)}_i - a \vv^{(t)}_1 }{\sqrt{1 + (1 + (d-1) a^2) c^2} } \right| \quad \text{(since $c^{(t)} > c > 0$)} 
    \end{align*}
    Hence, \eqref{eq:induction_shrink} holds with the constant
    \[ \gamma = \frac{1}{\sqrt{1 + (1 + (d-1) a^2) c^2} } \in (0, 1) \]
    Therefore, 
    \begin{align*}
        &\quad \lim_{t \to \infty} | \vv^{(t)}_i - a \vv^{(t)}_1 | \\
        &\le \lim_{t \to \infty} \gamma^t | \vv^{(0)}_i - a \vv^{(0)}_1 | \\
        &= 0 \quad \text{(since $\gamma \in (0, 1)$)}
    \end{align*}
    
    Finally, since each $\vv^{(t)}$ is $l_2$-normalized, 
    the above relation for $t \to \infty$ leads to the following equations 
    \begin{align*}
        \lim_{t \to \infty} \sum_{i=1}^d (\vv^{(t)}_i)^2 &= 1 \\
        \lim_{t \to \infty} \vv^{(t)}_i &= a \lim_{t \to \infty} \vv^{(t)}_1 \quad \forall i = 2..d
    \end{align*}
    whose unique solution is the one stated in the lemma.
\end{proof}

\end{document}